\documentclass{article}

\PassOptionsToPackage{numbers}{natbib}
\PassOptionsToPackage{noend}{algorithmic}

\usepackage{fullpage}
\usepackage{authblk}
\usepackage{natbib}
\usepackage{hyperref} 
\usepackage{url}

\usepackage{booktabs} 
\usepackage{xspace}
\usepackage{wrapfig}

\usepackage{nicefrac} 
\usepackage{amsfonts} 
\usepackage{amsmath}
\usepackage{amssymb}
\usepackage{amsthm}
\usepackage{mathtools}

\usepackage{enumerate}
\usepackage{thm-restate}

\usepackage{algorithm}
\usepackage{algorithmic}

\setlength{\tabcolsep}{11pt}

\theoremstyle{plain}
\newtheorem{theorem}{Theorem}[section]

\newtheorem{lemma}[theorem]{Lemma}

\theoremstyle{definition}
\newtheorem{definition}[theorem]{Definition}

\theoremstyle{remark}

\title{Improved Algorithms for Overlapping and Robust Clustering of Edge-Colored Hypergraphs: An LP-Based Combinatorial Approach\footnote{
Supported by NCN grant number 2020/39/B/ST6/01641.
This work was partly supported by Institute of Information \& communications Technology Planning \& Evaluation (IITP) grant funded by the Korea government (MSIT) (No. RS-2021-II212068, Artificial Intelligence Innovation Hub).
This work was partly supported by an IITP grant funded by the Korean Government (MSIT) (No. RS-2020-II201361, Artificial Intelligence Graduate School Program (Yonsei University)).
This work was supported by the National Research Foundation of Korea(NRF) grant funded by the Korea government(MSIT) (RS-2025-00563707).
Part of this research was conducted while Y. Shin was at Yonsei University.
}
}

\author[1]{Changyeol Lee\thanks{Co-first authors.}}
\author[2]{Yongho Shin\protect\footnotemark[2]}
\author[1]{Hyung-Chan An\thanks{Corresponding author: \texttt{hyung-chan.an@yonsei.ac.kr}}}
\affil[1]{Department of Computer Science, Yonsei University, Seoul, South Korea}
\affil[2]{Institute of Computer Science, University of Wroc\l{}aw, Wroc\l{}aw, Poland}
\date{}

\newcommand{\colorset}{\chi}
\newcommand{\ecc}{\textsc{ECC}\xspace}
\newcommand{\localecc}{\textsc{Local ECC}\xspace}
\newcommand{\local}{\textsc{Local}\xspace}
\newcommand{\robustecc}{\textsc{Robust ECC}\xspace}
\newcommand{\robust}{\textsc{Robust}\xspace}
\newcommand{\globalecc}{\textsc{Global ECC}\xspace}
\newcommand{\globall}{\textsc{Global}\xspace} 

\newcommand{\blocal}{b_\mathsf{local}}
\newcommand{\bglobal}{b_\mathsf{global}}
\newcommand{\brobust}{b_\mathsf{robust}}

\newcommand{\avgcolordeg}{\bar d_\colorset}
\newcommand{\maxcolordeg}{\Delta_\colorset}
\newcommand{\intersectratio}{\rho}

\newcommand{\slack}{\mathsf{slack}}
\newcommand{\rate}{\mathsf{rate}}
\newcommand{\budget}{\mathsf{budget}}
\newcommand{\localslack}{\kappa}
\newcommand{\inctime}{t_\mathsf{tighten}}

\newcommand{\ALG}{\mathsf{ALG}}
\newcommand{\OPT}{\mathsf{OPT}}

\newcommand{\EkVC}{\textsc{E$k$-Vertex-Cover}\xspace}

\newcommand{\NP}{\textnormal{NP}\xspace}

\newcommand{\brain}{$\mathsf{Brain}$\xspace}
\newcommand{\cooking}{$\mathsf{Cooking}$\xspace}
\newcommand{\magten}{$\mathsf{MAG}\textnormal{-}\mathsf{10}$\xspace}
\newcommand{\dawn}{$\mathsf{DAWN}$\xspace}
\newcommand{\walmart}{$\mathsf{Walmart}$\xspace}
\newcommand{\trivago}{$\mathsf{Trivago}$\xspace}

\begin{document}
\maketitle

\begin{abstract}
Clustering is a fundamental task in both machine learning and data mining. Among various methods, edge-colored clustering (\ecc) has emerged as a useful approach for handling categorical data. Given a hypergraph with (hyper)edges labeled by colors, \ecc aims to assign vertex colors to minimize the number of edges where the vertex color differs from the edge's color. However, traditional \ecc has inherent limitations, as it enforces a nonoverlapping and exhaustive clustering. To tackle these limitations, three versions of \ecc have been studied: \localecc and \globalecc, which allow overlapping clusters, and \robustecc, which accounts for vertex outliers. For these problems, both linear programming (LP) rounding algorithms and greedy combinatorial algorithms have been proposed. While these LP-rounding algorithms provide high-quality solutions, they demand substantial computation time; the greedy algorithms, on the other hand, run very fast but often compromise solution quality. In this paper, we present an algorithmic framework that combines the strengths of LP with the computational efficiency of combinatorial algorithms. Both experimental and theoretical analyses show that our algorithms efficiently produce high-quality solutions for all three problems: \local, \globall, and \robust \ecc. We complement our algorithmic contributions with complexity-theoretic inapproximability results and integrality gap bounds, which suggest that significant theoretical improvements are unlikely. Our results also answer two open questions previously raised in the literature.
\end{abstract}

\section{Introduction}\label{sec:intro}

Clustering is a fundamental task in both machine learning and data mining~\cite{ester1996density,kassambara2017practical, ezugwu2022comprehensive}. \emph{Edge-colored clustering} (\ecc), in particular, is a useful model when interactions between the items to be clustered are represented as categorical data~\cite{angel2016clustering, amburg2020clustering}. To provide intuition, let us consider the following simple, illustrative example from prior work~\cite{whats-cooking,amburg2020clustering,klodt2021color,xiu2022chromatic, veldt2023optimal,crane2024overlapping,crane2025edge}: given a set of food ingredients, recipes that use them, and a (noisy) labeling of these recipes indicating their cuisine (e.g., Italian or Indian), can we group the food ingredients by their cuisine? To address this question, we can begin by considering a hypergraph whose vertices correspond to ingredients, (hyper)edges represent recipes, and edge colors correspond to cuisines. We can then find a labeling of the ingredients such that, in most recipes, all ingredient labels match the recipe's label. This is precisely what \ecc does: given an edge-colored hypergraph, the goal is to assign colors to its vertices so that the number of edges where vertex colors differ from the edge color is minimized. Intuitively, this problem offers an approach for clustering vertices when edge labels are noisy.

However, \ecc has an inherent limitation in that it insists on assigning exactly one color to every vertex, enforcing a nonoverlapping and exhaustive clustering. In the above illustrative example, food ingredients are often shared across geographically neighboring cuisines, indicating that overlapping clustering may be preferable. Moreover, some ingredients, such as salt, commonly appear in nearly all cuisines and may be considered outliers that should ideally be excluded from the clustering process. To address these limitations, three generalizations of \ecc, namely, \localecc, \globalecc, and \robustecc, have been proposed~\cite{crane2024overlapping}. Among them, \localecc and \globalecc allows overlapping clustering: in \localecc, a \emph{local budget} $\blocal$ that specifies the maximum number of colors each vertex can receive is given as an input parameter, thereby allowing clusters to overlap. In \globalecc, vertices may be assigned multiple colors, but with the total number of extra assignments constrained by a \emph{global budget} $\bglobal$ given as input. On the other hand, \robustecc enhances robustness against vertex outliers by allowing up to $\brobust$ vertices to be deleted from the hypergraph. This budget $\brobust$ is also specified as part of the input. (Alternatively, this can be viewed as designating those vertices as ``wildcards'' that can be treated as any color.)

\vspace{1ex}
While \localecc, \globalecc, and \robustecc are useful extensions of \ecc that effectively address its limitations, these problems are unfortunately \NP-hard, making exact solutions computationally intractable. This directly follows from the \NP-hardness of \ecc~\cite{angel2016clustering}, a common special case of all three problems. This computational intractability naturally motivates the study of approximation algorithms for these problems. Recall that an algorithm is called a \emph{$\rho$-approximation algorithm} if it runs in polynomial time and guarantees a solution within a factor of $\rho$ relative to the optimum.

\vspace{1ex}
In this paper, we present a new algorithmic framework for overlapping and robust clustering of edge-colored hypergraphs that is linear programming-based (LP-based) yet also combinatorial. Previously, combinatorial algorithms and (non-combinatorial) LP-based algorithms have been proposed for these problems. For \localecc, Crane et al.~\citep{crane2024overlapping} gave a greedy combinatorial $r$-approximation algorithm, where $r$ is the rank of the hypergraph. Their computational evaluation demonstrated that this algorithm runs remarkably faster than their own LP-rounding algorithm, at the expense of a trade-off in solution quality. The theoretical analysis~\citep{crane2024overlapping} of the LP-rounding algorithm successfully obtains an approximation ratio that does not depend on $r$: they showed that their algorithm is a $(\blocal+1)$-approximation algorithm. They state it as an open question whether there exists an $O(1)$-approximation algorithm for \localecc. For \robustecc as well, Crane et al.\ gave a greedy $r$-approximation algorithm; however, their LP-rounding algorithm in this case does not guarantee solution feasibility. According to their computational evaluation, solutions produced by the LP-rounding algorithm were of very high quality but violated the budget constraint, which is reflected in the theoretical result: their algorithm is a \emph{bicriteria} $(2+\epsilon,2+\frac{4}{\epsilon})$-approximation algorithm for any positive $\epsilon$, i.e., an algorithm that produces an $(2+\epsilon)$-approximation solution but violates the budget constraint by a multiplicative factor of at most $2+\frac{4}{\epsilon}$. Finally for \globalecc, Crane et al.\ gave similar results: a greedy $r$-approximation algorithm and a bicriteria $(\bglobal+3+\epsilon,1+\frac{\bglobal+2}{\epsilon})$-approximation algorithm for any positive $\epsilon$, where the latter, empirically, was slow but produced solutions of high quality. Since their bicriteria approximation ratio is not $(O(1),O(1))$ for \globalecc, Crane et al.\ left it another open question whether bicriteria $(O(1),O(1))$-approximation is possible for \globalecc.

\vspace{1ex}
The primal-dual method is an algorithmic approach that constructs combinatorial algorithms based on LP, allowing one to combine the strengths of both worlds~\cite{goemans1995general,goemans1997primal}. Our algorithmic framework is designed using the primal-dual method. We analyze its performance both experimentally and theoretically. For \localecc, our framework yields a combinatorial $(\blocal+1)$-approximation algorithm, which is the same approximation ratio as Crane et al.'s LP-rounding algorithm; however, our algorithm is combinatorial and runs in linear time. The experiments confirmed that, compared to the previous combinatorial algorithm, our algorithm brings improvement in both computation time and solution quality. We complement this algorithmic result by showing inapproximability results that match our approximation ratio; this answers one of Crane et al.'s open questions. For \robustecc and \globalecc, our framework gives a true (non-bicriteria) approximation algorithm, avoiding the need for bicriteria approximation.\footnote{If, in some contexts, a bicriteria approximation algorithm is acceptable for use, we could instead use a true approximation algorithm with a relaxed budget. Thus, once a true approximation algorithm becomes available, the need for bicriteria approximation algorithms is reduced. However, our algorithmic framework can also be analyzed in the bicriteria setting for both \globalecc and \robustecc. See Appendices~\ref{app:global-bicriteria} and~\ref{app:robust-bicriteria}.} Our true approximation algorithm for \robustecc, with the ratio of $2(\brobust+1)$, was enabled by our new LP relaxation: the integrality gap of the relaxation used by previous results is $+\infty$~\cite{crane2024overlapping}, whereas our LP has an integrality gap of $O(\brobust)$. In fact, we show that our gap is $\Theta(\brobust)$, suggesting that our ratio may be asymptotically the best one can achieve based on this relaxation. For \globalecc, our true approximation algorithm has the ratio of $2(\bglobal+1)$, and our bicriteria approximation algorithm has the ratio of $(2+\epsilon,1+\frac{2}{\epsilon})$. This affirmatively answers another open question of Crane et al.: bicriteria $(O(1),O(1))$-approximation for \globalecc is indeed possible. We also show that our relaxation has the integrality gap of $\Theta(\bglobal)$.

Below, we summarize which contributions of our work are presented in which sections of the paper.

\begin{enumerate}[-]
\item In Section~\ref{sec:local}, we present our algorithm for \localecc; its performance is analyzed both experimentally (Section~\ref{sec:exp-local}) and theoretically (Section~\ref{sec:local} and Appendix~\ref{app:local-thm}). We also present the inapproximability result (Theorems~\ref{thm:local-inapprox-ugc} and \ref{thm:local-inapprox-nphard}) that answers Crane et al.'s open question~\cite{crane2024overlapping}, whose technical proof is deferred to Appendix~\ref{app:local-inapprox}.
\item In Section~\ref{sec:robustglobal}, we present our true approximation algorithm for \robustecc based on a new stronger LP formulation. Our algorithm's performance is analyzed both experimentally (Section~\ref{sec:exp-robustglobal}) and theoretically (Section~\ref{sec:robustglobal} and Appendix~\ref{app:robust-thm}), including an integrality gap lower bound (Section~\ref{sec:robustglobal}; note that an upper bound is implied by the proof of Theorem~\ref{thm:robust-approx}).
\item In Section~\ref{sec:robustglobal} and Appendix~\ref{app:global-alg}, we present our true approximation algorithm for \globalecc, whose performance is analyzed both experimentally (Section~\ref{sec:exp-robustglobal}) and theoretically (Appendix~\ref{app:global-thm}). This algorithm extends to the bicriteria setting (Section~\ref{sec:robustglobal} and Appendix~\ref{app:global-bicriteria}), answering another open question of Crane et al.~\cite{crane2024overlapping}.
\end{enumerate}

We note that LP-rounding algorithms based on our relaxations can match the ratios of our combinatorial true approximation algorithms. However, we omit them from this paper, as they offer no improvement in performance guarantees while requiring significantly more computation time to solve LPs.

\paragraph*{Related work.}
\ecc has been used for a variety of tasks including categorical community detection, temporal community detection~\cite{amburg2020clustering}, and diverse and experienced group discovery~\cite{amburg2022diverse}; recently, it has also been applied to fair and balanced clustering~\cite{crane2025edge}. For reasons of space, we review previous work related to it in Appendix~\ref{app:related}.
\section{Problem definitions}\label{sec:prob-defn}
In this section, we formally define the problems considered in this paper.
First, we describe the part of the input that is common to all three problems. We are given a hypergraph $H=(V,E)$ and a set $C$ of colors as input. Since $H$ is a hypergraph, we have $E\subseteq 2^V$. Each edge $e \in E$ is associated with a color $c_e \in C$.

Given a \emph{node coloring} $\sigma:V\rightarrow C$, we say an edge $e\in E$ is a \emph{mistake} if there exists a node $v\in e$ whose assigned color $\sigma(v)$ differs from $c_e$, i.e., $c_e\neq \sigma(v)$. Otherwise, we say that $e$ is \emph{satisfied}. In \localecc and \globalecc, a node coloring $\sigma:V\to 2^C$ assigns (possibly) a multiple number of colors to each node. In these problems, we say $e\in E$ is a \emph{mistake} if there exists a node $v\in e$ whose assigned color does not include $c_e$, i.e., $c_e \notin \sigma(v)$.
\begin{definition}
In \localecc, in addition to $H$, $C$, and $\{c_e\}_{e\in E}$, a \emph{local budget} $\blocal \in \mathbb{Z}_{\ge 1}$ is given as input.
The goal is to find a  node coloring $\sigma:V\to 2^C$ such that $|\sigma(v)|\le \blocal$ for all $v$ to minimize the number of mistakes.
\end{definition}
\begin{definition}
	In \globalecc, in addition to $H$, $C$, and $\{c_e\}_{e\in E}$, a \emph{global budget} $\bglobal \in \mathbb{Z}_{\ge 0}$ is given as input.
	The goal is to find a node coloring $\sigma:V\to 2^C$ such that $|\sigma(v)|\ge 1$ for all $v$ and $\sum_{v\in V}|\sigma(v)|\le |V|+\bglobal$, to minimize the number of mistakes.
\end{definition}
\begin{definition}
	In \robustecc, in addition to $H$, $C$, and $\{c_e\}_{e\in E}$, a \emph{node-removal budget} $\brobust \in \mathbb{Z}_{\ge 0}$ is given as input.
	The goal is to remove at most $\brobust$ nodes from the hypergraph and find a node coloring $\sigma:(V\setminus V_R)\to C$ to minimize the number of mistakes, where $V_R$ denotes the set of removed nodes.
\end{definition}
Recall that removing a node from $H$ makes the node disappear from all the incident edges.

We conclude this section by introducing notation to be used throughout this paper. For $F \subseteq E$, let $\colorset(F):=
\{c_e\mid e\in F\}$ be the set of colors of the edges in $F$. For $v \in V$, let $\delta(v)$ be the set of edges that are incident with $v$; $d_v:=|\delta(v)|$ is the degree of $v$. Let $\delta_c(v)$ be the set of edges in $\delta(v)$ whose color is $c$, i.e., $\delta_c(v):=\{e \in \delta(v) \mid c_e = c\}$. 
\section{Proposed algorithms}\label{sec:alg}
\subsection{Local ECC}\label{sec:local}
In this section, we informally present our approximation algorithm for \localecc. Although we will discuss all the necessary technical details here, we will still present a formal analysis in Appendix~\ref{app:local} for the completeness' sake.

The algorithm will be presented for a slightly different version of the problem: instead of  $\blocal$ that uniformly applies to all nodes, we will let each node $v$ specify its own budget $b_v$. We also introduce edge weights $w_e \in \mathbb{Q}_{\ge 0}$  so that we minimize the total weight, not number, of mistakes. Note that it suffices to solve this version of the problem, since we can simply set all $b_v$ as $\blocal$ and all $w_e$ as $1$.

Following are an LP relaxation (left) and its dual (right). 
Intuitively, $x_{v,c}=1$ indicates that node $v$ is colored with $c$ and $x_{v,c}=0$ otherwise; $y_e=1$ if $e$ is a mistake and $y_e=0$ otherwise.
\begin{align*}
    \text{min } & \textstyle \sum_{e \in E}{w_e y_e} &                                  && \quad \text{max } & \textstyle \sum_{e \in E, v \in e}{\beta_{e,v}}-\sum_{v\in V}{b_v\alpha_v} \\
    \text{s.t.\ } & \textstyle \sum_{c \in C}{x_{v,c}} \le b_v, & \forall v \in V,      && \quad \text{s.t.\ } & \textstyle \sum_{e\in \delta_c(v)}{\beta_{e,v}} \le \alpha_v, && \forall v \in V, c \in C, \\
                & x_{v,c_e} + y_e \ge 1, & \forall e \in E, v \in e,                    && \quad & \textstyle  \sum_{v \in e}{\beta_{e,v}} \le w_e, && \forall e \in E, \\
                & x_{v,c} \ge 0, & \forall v \in V, c \in C,                            && \quad & \alpha_v \ge 0, && \forall v \in V, \\
                & y_e \ge 0, & \forall e \in E.                                         && \quad & \beta_{e,v} \ge 0, && \forall e \in E, v \in e.
\end{align*}
As a primal-dual algorithm, our algorithm maintains a dual solution $(\alpha,\beta)$, which changes throughout the execution of the algorithm but remains feasible at all times. The algorithm constructs the ``primal'' solution partially guided by the complementary slackness: namely, it allows an edge $e$ to be a mistake only if the corresponding dual constraint $\sum_{v\in e}\beta_{e,v} \leq w_e$ is \emph{tight}, i.e., $\sum_{v\in e}\beta_{e,v} =w_e$. This is useful since the cost of the algorithm's output can then be written as $\sum_{e\in E_m}w_e= \sum_{e\in E_m}\sum_{v\in e}\beta_{e,v} \leq \sum_{e\in E}\sum_{v\in e}\beta_{e,v}$, where $E_m$ is the set of mistakes in the output. Let $B_v:=\sum_{e \in \delta(v)}\beta_{e,v}$, and the algorithm's output cost is no greater than $\sum_{v\in V}B_v$ at termination.

In order to maintain dual feasibility, the algorithm begins with a trivial dual feasible solution $(\alpha,\beta)=(\mathbf{0},\mathbf{0})$ and only increases dual variables, never decreasing them. The first set of constraints will never be violated because whenever we increase $\sum_{e\in\delta_c(v)}\beta_{e,v}$, we will increase $\alpha_v$ by the same amount. The second set of constraints will never be violated simply because we will stop increasing all $\beta_{e,v}$ for $v\in e$ once edge $e$ becomes tight.

We are now ready to present the algorithm. We will describe the algorithm as if it is a ``continuous'' process that continuously increases a set of variables as time progresses. In this \emph{process over time} perspective (see, e.g.,~\cite{grandoni2008primal}), a primal-dual algorithm starts with an initial (usually all-zero) dual solution at time $0$, and the algorithm specifies the increase rate at which each dual variable increases. The dual variables continue to increase at the specified rates until an event of interest---typically, a dual constraint becomes tight---occurs. At that point, the algorithm pauses the progression of time to handle the event and recompute the increase rates. Once updated, time proceeds again.

Consider the following algorithm. It maintains a set $L$ of all those edges that are not tight. We call these edges \emph{loose}. One point that requires additional explanation in this pseudocode is that it increases \emph{a sum of variables} $\sum_{e\in\delta_c(v)\cap L} \beta_{e,v}$ at unit rate, rather than a single variable. This should be interpreted as increasing the variables in the summation in an arbitrary way, provided that their total increase rate is 1 and that no variable is ever decreased. The algorithm's analysis holds for any such choice of the increase rates of individual variables as long as their total is 1.

\begin{algorithm}[H]
    \caption{Proposed algorithm for \localecc}\label{alg:localecc}
\begin{algorithmic}
     \STATE ${\alpha} \gets \mathbf{0};~{\beta} \gets \mathbf{0}$
    \STATE $L \gets \{e \in E \mid w_e > 0\}$
    \FOR{$v \in V$}
        \WHILE{$|\colorset(\delta(v)\cap L)| > b_v$}
            \STATE increase $\alpha_v$ and $\sum_{e\in \delta_c(v) \cap L}{\beta_{e,v}}$ for each $c \in \colorset(\delta(v)\cap L)$ at unit rate, until there exists $e$ such that $\sum_{u \in e}{\beta_{e,u}} = w_e$
            \STATE \textbf{if} $\exists e\ \sum_{u \in e}{\beta_{e,u}} = w_e$ \textbf{then} remove all such edges from $L$ 
        \ENDWHILE
        \STATE $\sigma(v) \gets \colorset(\delta(v)\cap L)$
    \ENDFOR
\end{algorithmic}
\end{algorithm}

This algorithm can be implemented as a usual discrete algorithm using the standard technique for emulating ``continuous'' algorithms by discretizing them. Once the increase rates are determined, the discretized algorithm computes, for each edge, after how much time the edge would become tight if we continuously and indefinitely increased the dual variables, and selects the minimum among them. That is the amount of time the emulated algorithm runs before getting paused. The discretized algorithm then handles the event, recomputes the increase rates, and repeat. See Appendix~\ref{app:local-pseudocode} for the full discretized version of the algorithm.

It is easy to see that Algorithm~\ref{alg:localecc} returns a feasible solution: we assign $\chi(\delta(v)\cap L)$ to $v$ only after ensuring $|\chi(\delta(v)\cap L)|\leq b_v$. The analysis can focus on bounding the final value of $\sum_{v\in V}B_v$: recall that it was an upper bound on the algorithm's output cost. We will compare $\sum_{v\in V}B_v$ against the dual objective value, which is a lower bound on the true optimum from the LP duality.

Both $\sum_{v\in V}B_v$ and the dual objective value change throughout the algorithm's execution. At the beginning, both are zeroes because $(\alpha,\beta)=(\mathbf{0},\mathbf{0})$. How do they change over the execution? 
In each iteration of the \textbf{while} loop, the algorithm increases 
$\alpha_v$ at unit rate and $B_v$ at rate $|\chi(\delta(v)\cap L)|$, where $v$ is the vertex being considered at the moment.
(Note that $B_v = \sum_{c\in\chi(\delta(v)\cap L)}\sum_{e \in \delta_c(v) \cap L}\beta_{e,v} + \sum_{e \in \delta(v) \setminus L} \beta_{e, v}$.) That is, at any given moment of the algorithm's execution, the rate by which $\sum_{u\in V}B_u$ gets increased is $|\chi(\delta(v)\cap L)|>b_v$, and the increase rate of the dual objective is $|\chi(\delta(v)\cap L)|-b_v$.
Note that the ratio between these two rates is $\frac{|\chi(\delta(v)\cap L)|}{|\chi(\delta(v)\cap L)|-b_v}\leq b_v+1$ since $|\chi(\delta(v)\cap L)|>b_v$. Since the upper bound on the algorithm's output and the lower bound on the true optimum were initially both zeroes and the ratio between their increase rate is no greater than $b_v+1$ at all times, the overall approximation ratio is $b_\mathsf{max}+1$ where $b_\mathsf{max}:=\max_{v\in V}b_v$. Note that $b_\mathsf{max}=\blocal$ under the original definition of \localecc.
\begin{restatable}{theorem}{localApprox}\label{thm:local-approx}
    Algorithm~\ref{alg:localecc} is a $(\blocal+1)$-approximation algorithm for \localecc.
\end{restatable}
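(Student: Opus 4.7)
The plan is to carry out the standard primal-dual analysis sketched informally in the section, filling in the details that were elided. I would structure the proof as four claims, proved in order.

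First, I would verify primal and dual feasibility. For the primal side, Algorithm~\ref{alg:localecc} only assigns $\sigma(v) = \chi(\delta(v) \cap L)$ after the inner \textbf{while} loop terminates, at which point $|\chi(\delta(v) \cap L)| \le b_v$, so the budget constraint holds. For the dual side, I would observe that the algorithm starts at $(\alpha,\beta) = (\mathbf 0, \mathbf 0)$ and only ever increases variables. The constraint $\sum_{e \in \delta_c(v)} \beta_{e,v} \le \alpha_v$ is preserved because whenever the algorithm increases $\sum_{e \in \delta_c(v) \cap L} \beta_{e,v}$ at unit rate it also increases $\alpha_v$ at unit rate, and $\beta_{e,v}$ for $e \in \delta_c(v) \setminus L$ is frozen. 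The constraint $\sum_{v \in e} \beta_{e,v} \le w_e$ is preserved because as soon as it becomes tight, $e$ is removed from $L$ and its $\beta_{e,v}$ values are frozen forever after.

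Second, I would show that the algorithm's output cost is at most $\sum_{v \in V} B_v$. The key lemma is that every mistake edge has a tight dual constraint. Suppose $e$ is a mistake; then some $v \in e$ has $c_e \notin \sigma(v)$. Let $L_v$ denote $L$ at the moment $v$ is finalized. Since $e \in \delta_{c_e}(v)$, if $e \in L_v$ we would have $c_e \in \chi(\delta(v) \cap L_v) = \sigma(v)$, contradiction. So $e \notin L_v$, meaning $e$ was removed from $L$ earlier, i.e., $\sum_{u \in e} \beta_{e,u} = w_e$. Hence the output cost is $\sum_{e \in E_m} w_e = \sum_{e \in E_m} \sum_{v \in e} \beta_{e,v} \le \sum_{e \in E} \sum_{v \in e} \beta_{e,v} = \sum_{v \in V} B_v$.

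Third, I would bound $\sum_{v\in V} B_v$ against the dual objective $D := \sum_{e \in E, v \in e} \beta_{e,v} - \sum_{v \in V} b_v \alpha_v = \sum_{v \in V}(B_v - b_v \alpha_v)$ using a rate argument. At any instant when the algorithm is processing vertex $v$ in the inner loop, let $k := |\chi(\delta(v) \cap L)|$; by the loop guard, $k \ge b_v + 1$. The algorithm increases $B_v$ at rate $k$ (one unit per color class in $\chi(\delta(v) \cap L)$, summed) and $\alpha_v$ at unit rate, so $D$ increases at rate $k - b_v \ge 1$, while $\sum_u B_u$ increases at rate $k$. The instantaneous ratio is $k/(k - b_v)$, which is decreasing in $k$ and hence at most $(b_v+1)/1 = b_v + 1 \le b_\mathsf{max} + 1$. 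Since both quantities start at $0$ and the bound on the ratio holds pointwise in time, integrating gives $\sum_{v \in V} B_v \le (b_\mathsf{max} + 1) \cdot D$ at termination.

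Finally, combining the three pieces: algorithm cost $\le \sum_{v \in V} B_v \le (b_\mathsf{max}+1) \cdot D \le (b_\mathsf{max}+1) \cdot \OPT_{\mathrm{LP}} \le (b_\mathsf{max}+1) \cdot \OPT$ by weak LP duality and the fact that $\OPT_{\mathrm{LP}}$ lower-bounds the integral optimum. Setting $b_v = \blocal$ and $w_e = 1$ specializes to the original problem and yields the $(\blocal + 1)$-approximation. The main subtlety is the mistake-is-tight claim in step two, since an edge's $\beta_{e,v}$ values come from contributions by multiple endpoints processed at different times; making the argument clean requires the observation that it suffices to identify a single endpoint $v$ of the mistake at which $e$ was already tight, which the loop structure guarantees.
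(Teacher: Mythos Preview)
Your proposal is correct and follows essentially the same approach as the paper: the paper packages your rate argument as the pointwise invariant $\alpha_v \le \frac{1}{b_v+1}\sum_{e\in\delta(v)}\beta_{e,v}$ (its Property~(b)), but this is just a restatement of your step three, and the mistake-is-tight argument and the final chaining via weak duality are identical. The one omission is that you do not address running time; since an approximation algorithm must run in polynomial time, the paper separately argues (in fact linear-time) termination, which you should at least note follows because each inner \textbf{while} iteration removes at least one edge from $L$.
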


Algorithm~\ref{alg:localecc} can be implemented to run in linear time (see Lemma~\ref{thm:local-rt} in Appendix~\ref{app:local-thm}).

Our algorithmic framework harnesses the full ``power'' of the LP relaxation, in that its approximation ratio matches the integrality gap of the relaxation. We defer the proof of Theorem~\ref{thm:local-ig} to Appendix~\ref{app:local-ig}.

\begin{restatable}{theorem}{localIG}\label{thm:local-ig}
There is a sequence of instances of \localecc such that the ratio between a fractional solution and an optimal integral solution converges to $\blocal+1$.
\end{restatable}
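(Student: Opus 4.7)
I would prove this by constructing an explicit family of \localecc instances, indexed by an integer $n > \blocal$, whose integrality gap approaches $\blocal + 1$. For each $n$, let the vertex set be $V = \binom{[n]}{\blocal + 1}$---so every $(\blocal+1)$-subset of $[n]$ is itself a vertex---and let the color set be $C = [n]$. For each color $c \in [n]$, introduce a single edge $e_c$ with color $c_{e_c} = c$ and vertex set $\{S \in V : c \in S\}$, and assign the uniform budget $\blocal$ to every vertex. The instance then has $n$ edges in total, and each vertex $S$ is incident to exactly the $\blocal + 1$ edges $\{e_c : c \in S\}$.

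For the fractional upper bound, I would exhibit the symmetric feasible solution $x_{S, c} = \blocal/(\blocal+1)$ when $c \in S$ and $0$ otherwise, together with $y_{e_c} = 1/(\blocal+1)$ for every $c$. The per-vertex budget constraint holds with equality since $|S| = \blocal + 1$, and $x_{S, c_{e_c}} + y_{e_c} = 1$ for every $S \in e_c$. The objective value of this fractional solution is $n/(\blocal + 1)$.

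For the integer lower bound, I would first normalize any optimal integral coloring $\sigma$ so that $\sigma(S) \subseteq S$ with $|\sigma(S)| = \blocal$. A color $c' \notin S$ appearing in $\sigma(S)$ is wasted because $S \notin e_{c'}$, so swapping it for any element of $S$ never hurts; and filling $\sigma(S)$ up to the budget never increases the cost. Under this normalization, each $S$ is missing a unique element $f(S) \in S$, and edge $e_c$ is a mistake iff $c = f(S)$ for some $S \ni c$, equivalently iff $c \in f(V)$ (since $f(S) \in S$ always). The image $f(V) \subseteq [n]$ is therefore a hitting set for $V = \binom{[n]}{\blocal + 1}$, and any such hitting set has size at least $n - \blocal$ because its complement has at most $\blocal$ elements and therefore contains no $(\blocal+1)$-subset. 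Hence the integral optimum is at least $n - \blocal$.

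Combining the two bounds yields an integrality gap of at least $(n - \blocal)(\blocal + 1)/n = (\blocal + 1)(1 - \blocal/n)$, which converges to $\blocal + 1$ as $n \to \infty$. The main step requiring care is the normalization argument that reduces the integer lower bound to a clean hitting-set problem; once that is in place, both the fractional upper bound and the hitting-set lower bound are immediate, and I do not anticipate further technical obstacles.
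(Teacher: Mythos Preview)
Your construction is identical to the paper's (with $n$ playing the role of $|E|$), and your fractional solution and final gap computation coincide exactly. The only difference is in the integer lower bound: the paper argues directly that if $\blocal+1$ edges were simultaneously satisfied, the vertex $S$ indexed by those edges would need all $\blocal+1$ of their colors, violating the budget; you instead normalize the coloring, extract the ``missing-element'' map $f$, and observe that its image is a hitting set for $\binom{[n]}{\blocal+1}$. Both arguments are short and correct and yield the same bound $n-\blocal$; the paper's is a touch more direct, while yours makes the combinatorial structure (vertex cover of the complete $(\blocal{+}1)$-uniform hypergraph) explicit.
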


In fact, our inapproximability results further show that our approximation ratio is essentially the best possible. We note that these results answer one of the open questions raised by Crane et al.~\citep{crane2024overlapping}, namely,  whether an $O(1)$-approximation algorithm is possible for \localecc. 
\begin{restatable}{theorem}{localInapproxUGC}\label{thm:local-inapprox-ugc}
    For any constant $\epsilon > 0$, it is $\mathrm{UGC}$-hard to approximate \localecc within a factor of $\blocal+1-\epsilon$.
\end{restatable}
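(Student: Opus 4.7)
My plan is to prove Theorem~\ref{thm:local-inapprox-ugc} by a gap-preserving reduction from \EkVC with $k = \blocal+1$, i.e., vertex cover on $(\blocal+1)$-uniform hypergraphs, which Bansal and Khot showed to be UGC-hard to approximate within factor $k-\epsilon$ for any $\epsilon>0$. Given an instance $H=(V_H,E_H)$ with minimum cover $\tau(H)$, I build a \localecc instance with budget $\blocal$ whose vertex set is $V_H$ augmented by a small number of auxiliary vertices, and whose colors include a distinguished global color $c^*$, $\blocal$ private colors per $v\in V_H$, and gadget colors. I use two kinds of gadgets: a per-vertex \emph{private gadget} consisting of $\blocal$ singleton edges at $v$ with its private colors (so that including $c^*$ in $\sigma(v)$ costs exactly $1$ in private mistakes); and a per-hyperedge \emph{cover gadget} designed so that $e\in E_H$ contributes cost $\blocal+1$ iff no vertex of $e$ includes $c^*$ in its palette, and cost $0$ otherwise.

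For completeness, given a vertex cover $T$ of $H$ of size $\tau$, I would set $\sigma(v)\ni c^*$ exactly for $v\in T$ and pack the remaining slots with private colors; each $v\in T$ then incurs exactly $1$ private mistake, and every cover gadget is satisfied because $T$ hits every $e$, giving total cost $\tau$. For soundness, given any \localecc solution of cost $\mathsf{cost}$, I would set $T:=\{v : c^*\in\sigma(v)\}$ and $E_\mathsf{un}:=\{e\in E_H : e\cap T=\emptyset\}$, and observe that $|T|+(\blocal+1)\,|E_\mathsf{un}|\le\mathsf{cost}$, since private gadgets contribute at least $|T|$ and each $e\in E_\mathsf{un}$ contributes $\blocal+1$ through its cover gadget. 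Augmenting $T$ by one arbitrary vertex per $e\in E_\mathsf{un}$ yields a vertex cover of $H$ of size at most $|T|+|E_\mathsf{un}|\le(\blocal\,|T|+\mathsf{cost})/(\blocal+1)\le\mathsf{cost}$, where the last step uses $|T|\le\mathsf{cost}$. Chaining these bounds with completeness, a $(\blocal+1-\epsilon)$-approximation for \localecc would give a $(\blocal+1-\epsilon)$-approximation for $(\blocal+1)$-uniform vertex cover, contradicting the UGC-based hardness.

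The delicate step is designing the cover gadget: the naive attempt of placing a single hyperedge $e$ colored $c^*$ fails because \localecc's mistake condition is $\forall$-quantified across the vertices of $e$ (all would need $c^*$), whereas vertex cover is $\exists$-quantified (some vertex needs $c^*$). I expect to realize the $\exists$-semantics by introducing per-hyperedge auxiliary vertices together with a weighted collection of smaller edges (for example, size-$2$ edges or singletons incident to those auxiliary vertices) whose combined minimum cost equals $\blocal+1$ exactly when $e\cap T=\emptyset$ and $0$ as soon as some vertex of $e$ is marked with $c^*$. Once such a gadget is in place, the completeness and soundness calculations above carry the argument through essentially verbatim.
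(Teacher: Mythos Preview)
Your proposal has a genuine gap: the crux of the argument is the ``cover gadget,'' and you never construct it. You correctly identify the obstacle---an \ecc edge is a mistake as soon as \emph{some} endpoint lacks its color, so satisfaction is $\forall$-quantified while vertex cover is $\exists$-quantified---but ``I expect to realize the $\exists$-semantics by introducing auxiliary vertices'' is not a proof. In fact it is not clear that such a gadget exists with the exact $0$/$(\blocal+1)$ cost profile you require; the natural attempts (e.g., size-$2$ edges $\{u_i,a_e\}$ colored $c^*$) charge you $|e\setminus T|$ rather than $0$ whenever $e$ is covered, which destroys completeness. Without a concrete gadget and a verification that it interacts correctly with the uniform budget $\blocal$ at both the original and auxiliary vertices, the reduction is incomplete.

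The paper sidesteps this entirely with a much simpler idea: it \emph{dualizes} the hypergraph. Given a $k$-uniform \EkVC instance $H=(W,F)$ with $k=\blocal+1$, it builds a \localecc instance whose \emph{vertices} are the hyperedges $f\in F$ and whose \emph{edges} are the vertices $w\in W$ (with $e_w=\{v_f:f\ni w\}$), giving each $e_w$ its own distinct color. Then every \localecc vertex $v_f$ sees exactly $k=\blocal+1$ distinct colors, so the budget forces at least one incident $e_w$ to be a mistake---i.e., at least one $w\in f$ lies in the mistake set. Thus the set of mistake edges is literally a vertex cover of $H$, and conversely any vertex cover $S$ yields a feasible coloring with exactly $|S|$ mistakes. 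This is an exact, approximation-preserving reduction with no gadgets and no auxiliary vertices; the $\forall$/$\exists$ mismatch disappears because swapping the roles of vertices and edges swaps the quantifier.
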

If one prefers a milder complexity-theoretic assumption, we show the following theorem as well.
\begin{restatable}{theorem}{localInapproxNPhard}\label{thm:local-inapprox-nphard}
    For any $\blocal \geq 2$ and any constant $\epsilon > 0$, there does not exist a $(\blocal - \epsilon)$-approximation algorithm for \localecc unless $\mathrm{P}=\mathrm{NP}$.
\end{restatable}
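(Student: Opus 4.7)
The plan is to give a simple gap-preserving reduction from $\EkVC$ with $k = \blocal + 1$. Since $\blocal \geq 2$ guarantees $k \geq 3$, we may invoke the classical Dinur--Guruswami--Khot--Regev theorem: for every $k \geq 3$ and every constant $\epsilon > 0$ it is \NP-hard to approximate $\EkVC$ within a factor of $k - 1 - \epsilon$. The plan is to transfer this hardness, as $\blocal - \epsilon = k - 1 - \epsilon$, to \localecc.

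Given a $k$-uniform hypergraph $G = (V_G, E_G)$, I would build the \localecc instance by swapping the roles of vertices and edges of $G$. Concretely, set $V_H := E_G \cup \{u_v : v \in V_G\}$ (one dummy $u_v$ per $v \in V_G$, used only to make hyperedges distinct), $C := V_G$, and, for each $v \in V_G$, add the \localecc hyperedge $A_v := \{e \in E_G : v \in e\} \cup \{u_v\}$ colored $v$; the budget is $\blocal = k - 1$.

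For the analysis, I would first argue that an optimal $\sigma$ may be assumed to satisfy $\sigma(u_v) = \{v\}$ (the dummy is free) and $\sigma(e) \subseteq e$ for each $e \in E_G$ (colors outside $e$ only waste budget). Then $A_v$ is a mistake iff there exists $e \ni v$ with $v \notin \sigma(e)$. Writing $F_e := e \setminus \sigma(e)$, the bound $|\sigma(e)| \leq k - 1$ together with $|e| = k$ forces $|F_e| \geq 1$, and the number of mistakes is exactly $|\bigcup_{e \in E_G} F_e|$. A short combinatorial check then shows $\min |\bigcup_e F_e| = \tau(G)$, the minimum vertex cover size of $G$: any such union is itself a vertex cover, and, conversely, any vertex cover $S$ realizes the minimum via $F_e = \{v_e\}$ for some $v_e \in S \cap e$. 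Hence the \localecc optimum equals $\tau(G)$ exactly, and any $(\blocal - \epsilon)$-approximation for \localecc would give a $(k - 1 - \epsilon)$-approximation for $\EkVC$, contradicting $\mathrm{P} \neq \mathrm{NP}$.

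The main conceptual obstacle is picking the right direction for the reduction. A first instinct might be to preserve the vertex set $V_G$ and encode cover membership as the color set assigned to each vertex, but this tends to obscure the accounting and loses the multiplicative gap. Swapping the roles of $V_G$ and $E_G$ lines up the ``forced skipped color'' at each \localecc vertex (arising because $\blocal < |e|$) with the ``one picked cover vertex per edge'' of vertex cover; once this is observed, the identity between the two optima is essentially immediate and no amplification or padding is needed.
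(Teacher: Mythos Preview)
Your proposal is correct and follows essentially the same approach as the paper: both reduce from \EkVC with $k=\blocal+1$ by swapping the roles of vertices and hyperedges, so that each original vertex becomes a distinctly colored hyperedge and each original hyperedge becomes an \ecc node of color-degree $k=\blocal+1$, forcing at least one incident mistake; the paper's argument that the mistake set is a vertex cover and that any vertex cover yields a coloring with that many mistakes is exactly your $F_e$ analysis. Your addition of the dummy nodes $u_v$ is a harmless technical safeguard (ensuring the hyperedges $A_v$ are pairwise distinct even when two vertices of $G$ lie in the same set of edges) that the paper omits, but otherwise the reductions coincide.
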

The proofs of Theorems~\ref{thm:local-inapprox-ugc} and~\ref{thm:local-inapprox-nphard}  are deferred to Appendix~\ref{app:local-inapprox}.

\paragraph*{Final remarks.} 
Since our algorithm considers the nodes one by one and operates locally, 
Algorithm~\ref{alg:localecc} immediately works as an online algorithm, in which vertices are revealed to the algorithm in an online manner.\footnote{There are several ways to describe the online setting, but a simple (albeit slightly weak) formulation is as follows: initially, the algorithm is given only the number of hyperedges. Then, at each timestep, when a new vertex $v$ arrives, the algorithm is informed of which hyperedges are incident to $v$. At that point of time, the algorithm is required to irrevocably color $v$.}
In Appendix~\ref{app:local-bicriteria}, we also show that the algorithm can be analyzed in the bicriteria setting, yielding a $(1+\epsilon, 1+\frac{1}{\blocal}\lceil \frac{\blocal}{\epsilon} \rceil -\frac{1}{\blocal})$-approximation for $\epsilon \in (0,\blocal]$.

\subsection{Robust ECC and Global ECC}\label{sec:robustglobal}
In this section, we summarize our algorithmic results for \robustecc and \globalecc. The proposed approximation algorithms for these two problems are quite similar; as such, in the interest of space, we will sketch our algorithm only for \robustecc in this section. The only real difference between the two algorithms is in the constraints of the dual LPs.

Following is the dual LP used by the algorithm for \robustecc. (As {in Section~\ref{sec:local}}, edges have weights $w_e$, but we can simply set all $w_e$ as $1$.)

\begin{wrapfigure}{l}{0.46\textwidth}
    \vspace{-1em}
    \begin{minipage}{0.45\textwidth} 
        \begin{align*}
            \text{max } &\textstyle \sum_{e \in E, v \in e}{\beta_{e,v}}-\sum_{v\in V}&&\hspace{-10pt}{\alpha_v}-\lambda \brobust \\
            \text{s.t.\ } &\textstyle \sum_{e\in \delta_c(v)}{\beta_{e,v}} \le \alpha_v, &&\hspace{-2pt} \forall v \in V, c \in C, \\
                        &\textstyle \sum_{v \in e}{\beta_{e,v}} \le w_e, &&\hspace{-2pt} \forall e \in E, \\
                        &\textstyle \sum_{e \in \delta(v)}{\beta_{e,v}}-\alpha_v \le \lambda, &&\hspace{-2pt} \forall v \in V, \\
                        & \alpha_v \ge 0, &&\hspace{-2pt} \forall v \in V, \\
                        & \beta_{e,v} \ge 0, &&\hspace{-2pt} \forall e \in E, v \in e, \\
                        & \lambda \ge 0.
        \end{align*}
    \end{minipage}
\end{wrapfigure}
Let us now sketch the algorithm {for \robustecc} we propose. The algorithm maintains a dual feasible solution $(\alpha, \beta, \lambda)$, initially set as $(\mathbf{0}, \mathbf{0}, 0)$. The set $L$ will be kept as the set of loose edges;
$R \subseteq V$ is the set of nodes with at least two incident loose edges of distinct colors.
Intuitively, $R$ is the set of nodes we will remove from the hypergraph. The algorithm therefore continues its execution until $|R|\leq \brobust$ holds. When increasing the dual variables, the algorithm increases variables associated with all vertices in $R$ at the same time, unlike Algorithm~\ref{alg:localecc} which handles one node at a time.
The following two properties will hold: 

\begin{enumerate}[(i)]
\item \label{enum:robalg-a} The algorithm increases $\lambda$ and $\sum_{e \in \delta(v)\cap L}{\beta_{e,v}}-\alpha_v$ for each $v \in R$ at the same rate.
\item \label{enum:robalg-b} For each $v\in R$, the algorithm increases $\alpha_v$ and $\sum_{e \in \delta_c(v)\cap L}{\beta_{e,v}}$ for each $c \in \colorset(\delta(v) \cap L)$ at the same rate.  In general, the increase rate of $\alpha_{v_1}$ may be different from that of $\alpha_{v_2}$ for $v_1\neq v_2$.
\end{enumerate}

These properties can be ensured as follows:
the increase rate of $\lambda$ is set as $1$. For each $v\in R$,
we increase $\alpha_v$ and $\sum_{e \in \delta_c(v) \cap L} \beta_{e,v}$ for each $c \in \colorset(\delta(v) \cap L)$ at rate $\frac{1}{|\colorset(\delta(v)\cap L)|-1}$.

Once $|R|$ becomes less than or equal to $\brobust$, the algorithm removes $R$ from the hypergraph and assigns every node $v \in V \setminus R$ the (only) color in $\colorset(\delta(v) \cap L)$. If $\colorset(\delta(v) \cap L) = \emptyset$, an arbitrary color can be assigned without affecting the theoretical guarantee on solution quality; in practical implementation, we could employ heuristics for marginal improvement. In the interest of space, the full pseudocodes are deferred to Appendices~\ref{app:robust-alg} and~\ref{app:robust-pseudocode}.

We prove the following theorems in Appendices~\ref{app:robust-thm} and~\ref{app:global-thm}.
\begin{restatable}{theorem}{robustApprox}\label{thm:robust-approx}
    There exists a $2(\brobust+1)$-approximation algorithm for \robustecc.
\end{restatable}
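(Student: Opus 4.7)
The plan is a primal-dual analysis. First, I would derive the primal of the displayed dual LP and verify it is a valid LP relaxation of \robustecc. Dualization yields
\begin{align*}
\min\ & \textstyle\sum_{e\in E} w_e y_e \\
\text{s.t.\ } & x_{v,c_e}+y_e+z_v\ge 1, && \forall e\in E,\,v\in e, \\
& \textstyle\sum_{c\in C} x_{v,c}+z_v\le 1, && \forall v\in V, \\
& \textstyle\sum_{v\in V} z_v\le\brobust, \\
& x,y,z\ge 0,
\end{align*}
where $z_v$ indicates whether $v$ is removed. Any integral solution to \robustecc (a removal set $V_R$ with $|V_R|\le\brobust$ and a coloring $\sigma$ of $V\setminus V_R$) gives a feasible LP solution of the same objective via the natural $0/1$ indicators, so this LP is a valid relaxation. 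Consequently, $D:=\sum_{e,v}\beta_{e,v}-\sum_v\alpha_v-\lambda\brobust\le\OPT$ for every dual-feasible $(\alpha,\beta,\lambda)$.

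Next, I would verify dual feasibility is preserved throughout the algorithm: $\sum_{v\in e}\beta_{e,v}\le w_e$ holds because we freeze every $\beta_{e,v}$ the moment $e$ becomes tight; property~(ii) preserves $\sum_{e\in\delta_c(v)}\beta_{e,v}\le\alpha_v$ for every $v,c$ by matching rates; and property~(i) preserves $\sum_{e\in\delta(v)}\beta_{e,v}-\alpha_v\le\lambda$ for every $v\in R$ (trivially for $v\notin R$). The algorithm terminates because each event strictly shrinks $L$. The output is feasible because $|R|\le\brobust$ at termination and each remaining vertex satisfies $|\colorset(\delta(v)\cap L)|\le 1$, so the color assignment is well defined.

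The heart of the argument then compares $\ALG$ against $D$. For any loose edge $e\in L$ with $e\setminus R\neq\emptyset$ and any $v\in e\setminus R$, $v\notin R$ forces $\colorset(\delta(v)\cap L)=\{c_e\}$, so $\sigma(v)=c_e$ and $e$ is not a mistake; hence every mistake edge must be tight, yielding $\ALG\le\sum_{e\in E\setminus L}w_e=\sum_{e\in E\setminus L}\sum_{v\in e}\beta_{e,v}\le\sum_v B_v$, where $B_v:=\sum_{e\in\delta(v)}\beta_{e,v}$. It then remains to compare instantaneous growth rates while the algorithm is running (i.e., while $|R|>\brobust$). Writing $k_v:=|\colorset(\delta(v)\cap L)|\ge 2$ for $v\in R$, properties~(i) and~(ii) imply that $\sum_v B_v$ grows at rate $\sum_{v\in R}\frac{k_v}{k_v-1}\le 2|R|$, whereas $D$ grows at rate $\sum_{v\in R}\bigl(\frac{k_v}{k_v-1}-\frac{1}{k_v-1}\bigr)-\brobust=|R|-\brobust\ge 1$. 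The ratio $\frac{2|R|}{|R|-\brobust}$ is decreasing in $|R|$ and is maximized at $|R|=\brobust+1$, where it equals $2(\brobust+1)$. Integrating from the zero initial state yields $\ALG\le 2(\brobust+1)\cdot D\le 2(\brobust+1)\cdot\OPT$.

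The main obstacle I anticipate is the bookkeeping at tight-edge events. When an edge leaves $L$, both $R$ and the color sets $\colorset(\delta(v)\cap L)$ can change simultaneously, so the rates must be recomputed and the rate comparison above must be verified to hold piecewise between consecutive events; moreover, the simultaneous-increase scheme of property~(ii), which distributes the unit rate across several $\beta_{e,v}$ with $e\in\delta_c(v)\cap L$, must be shown to be realizable consistently with all three families of dual constraints throughout. Once this piecewise-continuous structure is formalized, the rate computation is routine and the approximation ratio $2(\brobust+1)$ follows.
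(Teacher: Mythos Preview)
Your proposal is correct. The overall primal-dual scaffolding (dual feasibility throughout, mistakes being tight, $\ALG\le\sum_v B_v$) matches the paper exactly, but you finish the comparison of $\sum_v B_v$ against $\OPT$ differently: you bound the instantaneous ratio of the growth rates of $\sum_v B_v$ and of the dual objective $D$ by $\tfrac{2|R|}{|R|-\brobust}\le 2(\brobust+1)$ and integrate---precisely the style the paper uses for \localecc. For \robustecc, the paper instead analyzes the final dual values: it identifies $R'=\{v:B_v-\alpha_v=\lambda\}\supsetneq R_0$ with $|R'|>\brobust$, picks a set $R''$ of size exactly $\brobust$ with $R_0\subseteq R''\subsetneq R'$ together with a witness $w\in R'\setminus R''$, and derives the two complementary bounds $\OPT\ge\tfrac12\sum_{v\notin R''}B_v$ and $\OPT\ge\tfrac{1}{2\brobust}\sum_{v\in R''}B_v$ before summing. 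Your rate argument is cleaner and sidesteps the witness construction; the paper's endpoint decomposition, on the other hand, makes the two sources of loss (the factor $2$ from $\alpha_v\le\tfrac12 B_v$ and the factor $\brobust+1$ from the budget overshoot) visibly separate, and it is this decomposition that the paper reuses verbatim in its bicriteria extension.
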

\begin{restatable}{theorem}{globalApprox}\label{thm:global-approx}
    There exists a $2(\bglobal+1)$-approximation algorithm for \globalecc.
\end{restatable}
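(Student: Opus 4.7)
The plan is to mirror the primal-dual template that Section~\ref{sec:robustglobal} sets up for \robustecc, only modifying the LP and the growth rates to encode the global budget. First I would write down the LP relaxation of the (weighted) \globalecc: minimize $\sum_e w_e y_e$ subject to $x_{v,c_e}+y_e\ge 1$ for each $e$ and $v\in e$, $\sum_c x_{v,c}\ge 1$ for each $v$, and $\sum_{v,c} x_{v,c}\le |V|+\bglobal$, with $x,y\ge 0$. Assigning dual variables $\beta_{e,v}$, $\alpha_v$, and $\lambda$ respectively, the dual reads
$$\max\;\textstyle\sum_{e, v\in e}\beta_{e,v}+\sum_v\alpha_v-(|V|+\bglobal)\lambda$$
subject to $\alpha_v+\sum_{e\in\delta_c(v)}\beta_{e,v}\le \lambda$ for each $v,c$ and $\sum_{v\in e}\beta_{e,v}\le w_e$ for each $e$, with all dual variables nonnegative. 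Including the vertex constraint $\sum_c x_{v,c}\ge 1$ is essential: the $\sum_v\alpha_v$ term it produces in the dual objective is precisely what cancels the $|V|$ in $(|V|+\bglobal)\lambda$ in the rate analysis below.

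Next I would describe a primal-dual algorithm directly analogous to the one in Section~\ref{sec:robustglobal}. Maintain $L$, the set of loose edges (initially $\{e:w_e>0\}$), and let $R:=\{v:|\colorset(\delta(v)\cap L)|\ge 2\}$. Starting from $(\alpha,\beta,\lambda)=(\mathbf{0},\mathbf{0},0)$, grow the duals continuously: (i) increase $\lambda$ at rate $1$; (ii) for every $v\in R$ and $c\in\colorset(\delta(v)\cap L)$, increase $\sum_{e\in\delta_c(v)\cap L}\beta_{e,v}$ at rate $1$ while keeping $\alpha_v$ frozen; (iii) for every $v\notin R$, increase $\alpha_v$ at rate $1$. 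Time advances until some edge becomes tight; that edge is removed from $L$, possibly shrinking $R$, and rates are recomputed. The algorithm halts the first time $\sum_{v\in R}(|\colorset(\delta(v)\cap L)|-1)\le \bglobal$ and outputs $\sigma(v):=\colorset(\delta(v)\cap L)$, substituting an arbitrary color when this set is empty. Primal feasibility is then immediate because $\sum_v|\sigma(v)|=|V|+\sum_{v\in R}(|\colorset(\delta(v)\cap L)|-1)\le |V|+\bglobal$; dual feasibility is maintained by design, since in each clause the active LHS grows at the same rate as $\lambda$ and no $\beta_{e,v}$ is incremented after $e$ becomes tight.

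The crux is the rate comparison. Since every mistake must be a tight edge, the algorithm's cost is at most $\sum_{e\in E\setminus L}w_e=\sum_{e\in E\setminus L}\sum_{v\in e}\beta_{e,v}\le \sum_v B_v$ where $B_v:=\sum_{e\in\delta(v)}\beta_{e,v}$. Writing $k_v:=|\colorset(\delta(v)\cap L)|$ and $S:=\sum_{v\in R}(k_v-1)$, during execution one has $S\ge\bglobal+1$, and a direct computation yields
$$\tfrac{d}{dt}\textstyle\sum_v B_v=\sum_{v\in R}k_v=S+|R|,\qquad \tfrac{d}{dt}(\text{dual obj})=(S+|R|)+(|V|-|R|)-(|V|+\bglobal)=S-\bglobal.$$
Because $k_v\ge 2$ on $R$ forces $|R|\le S$, the pointwise ratio is at most $\frac{2S}{S-\bglobal}=2+\frac{2\bglobal}{S-\bglobal}\le 2(\bglobal+1)$, with the last step using $S\ge\bglobal+1$. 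Integrating this inequality from time $0$, when both quantities vanish, yields that the algorithm's cost is at most $2(\bglobal+1)$ times the final dual objective, and hence at most $2(\bglobal+1)\cdot\OPT$ by weak LP duality.

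The part I expect to demand the most care is the bookkeeping around discrete events. When a tightening causes some $v$ to leave $R$ (its $k_v$ dropping from $2$ to $1$ or $0$), the algorithm must cleanly swap $v$ from clause~(ii) into clause~(iii) while preserving the already-tight dual constraint: a direct check shows that at the exit moment the colors previously in $\colorset(\delta(v)\cap L)$ satisfy $\alpha_v+\sum_{e\in\delta_c(v)}\beta_{e,v}=\lambda$, and from then on both sides continue to grow at rate $1$. A parallel check is required for the within-event redistribution of the unit growth of $\sum_{e\in\delta_c(v)\cap L}\beta_{e,v}$ among the surviving loose edges of color $c$; this is the same mechanism used for \localecc and is handled by the standard continuous-to-discrete emulation referenced in Section~\ref{sec:local}. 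Finally, one must verify that $S$ never increases across events, so that the termination condition is eventually reached; this follows because tightening only removes edges from $L$ and can only decrease each $k_v$, bounding the number of iterations by $|E|$.
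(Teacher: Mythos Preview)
Your proof is correct but follows a genuinely different route from the paper's. The paper formulates the primal with per-vertex slack variables $z_v$ (constraints $\sum_c x_{v,c}\le z_v+1$ and $\sum_v z_v\le\bglobal$); its dual therefore carries two separate constraint families $\sum_{e\in\delta_c(v)}\beta_{e,v}\le\alpha_v$ and $\alpha_v\le\lambda$, and the paper's algorithm raises $\alpha_v$ in lockstep with $\lambda$ only for $v\in R$. You instead aggregate the budget into the single constraint $\sum_{v,c}x_{v,c}\le|V|+\bglobal$ and add the valid lower bound $\sum_c x_{v,c}\ge 1$, producing a dual with the merged constraint $\alpha_v+\sum_{e\in\delta_c(v)}\beta_{e,v}\le\lambda$ and an additional $+\sum_v\alpha_v$ term in the objective; correspondingly, you raise $\alpha_v$ on $V\setminus R$ rather than on $R$, so that the resulting $|V|-|R|$ contribution cancels part of the $|V|$ in $(|V|+\bglobal)\lambda$. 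The analyses also differ in style: the paper argues from the final dual solution, deriving $\OPT\ge\tfrac12\bigl(1-\tfrac{\bglobal}{b'}\bigr)\sum_v B_v$ with $b'\ge\bglobal+1$ (mirroring its \robustecc proof and the invariants of Lemma~\ref{lem:rob-key}), whereas you reuse the rate-comparison template of Section~\ref{sec:local}, bounding the instantaneous ratio by $\tfrac{2S}{S-\bglobal}\le 2(\bglobal+1)$ and integrating from zero. Both arguments are clean and yield the same ratio; yours has the virtue of unifying \globalecc with the \localecc analysis, while the paper's keeps the \globalecc and \robustecc treatments in parallel.
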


The LP relaxation of Crane et al.~\citep{crane2024overlapping} for \robustecc has infinite integrality gap, whereas the integrality gap of our LP is $O(\brobust)$, following from the proof of Theorem~\ref{thm:robust-approx}. This makes it possible to obtain a true (non-bicriteria) approximation algorithm based on our LP. In fact, the following theorems show that our LP for \robustecc (and \globalecc) has an integrality gap of $\Theta(\brobust)$ (and $\Theta(\bglobal)$), respectively. Their proofs are deferred to Appendices~\ref{app:robust-ig} and~\ref{app:global-ig}. 
\begin{restatable}{theorem}{robustIG}\label{thm:robust-ig}
    The integrality gap of our LP for \robustecc is at least $\brobust+1$.
\end{restatable}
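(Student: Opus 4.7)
My plan is to exhibit an explicit family of instances whose integrality gap reaches $\brobust+1$. Fix any integer $k\ge 2$, let $V=\{v_0,v_1,\ldots,v_{\brobust}\}$ so that $|V|=\brobust+1$, and take $k$ distinct colors $c_1,\ldots,c_k$. The edge set consists of $k$ hyperedges $e_1,\ldots,e_k$ with $e_i=V$ and $c_{e_i}=c_i$ for each $i$; every edge has unit weight. Intuitively, the entire vertex set is pulled in $k$ equally strong directions, but the removal budget leaves at least one vertex behind.

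The first step is to write the primal LP corresponding to the dual displayed in the paper. Standard duality gives variables $x_{v,c}, y_e, z_v \ge 0$, objective $\min \sum_{e}w_e y_e$, edge-covering constraints $x_{v,c_e}+y_e+z_v\ge 1$ for every $(e,v)$ with $v\in e$, per-vertex constraints $\sum_{c\in C} x_{v,c}+z_v\le 1$ for every $v$, and a removal-budget constraint $\sum_{v\in V} z_v\le \brobust$; verifying that its dual is exactly the LP stated in the paper is a routine check. I will then exhibit the natural symmetric fractional solution
\[
z_v=\frac{\brobust}{\brobust+1},\qquad x_{v,c}=\frac{1}{k(\brobust+1)},\qquad y_e=\frac{k-1}{k(\brobust+1)}.
\]
Feasibility follows from a direct computation: the covering constraint, the per-vertex constraint, and the removal-budget constraint all hold with equality. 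The fractional objective value is $\sum_{e\in E}y_e=(k-1)/(\brobust+1)$.

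For the integral lower bound, note that the removal budget $\brobust$ is strictly less than $|V|=\brobust+1$, so at least one vertex must survive. Any surviving vertex belongs to every edge, so the edge $e_i$ is satisfied only if all surviving vertices are colored $c_i$; therefore at most one of the $k$ edges can be satisfied and the integral optimum is at least $k-1$. Dividing the integral lower bound by the fractional cost yields an integrality gap of at least
\[
\frac{k-1}{(k-1)/(\brobust+1)}=\brobust+1,
\]
which establishes the theorem.

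\textbf{Main obstacle.} The only genuinely delicate step is deriving the primal LP from the dual stated in the paper and confirming that this primal is the relaxation of the natural \robustecc integer program; once the primal is pinned down, the instance is entirely symmetric and the rest reduces to a one-line arithmetic check.
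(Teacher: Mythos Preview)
Your proposal is correct and follows essentially the same approach as the paper: both use $\brobust+1$ vertices, a small number of full hyperedges with pairwise distinct colors, and the symmetric fractional solution with $z_v=\brobust/(\brobust+1)$. The paper simply fixes $k=2$ colors (giving integral optimum $1$ and fractional optimum $1/(\brobust+1)$), whereas you parameterize by $k\ge2$; setting $k=2$ in your construction recovers the paper's instance exactly, and the extra generality does not buy anything additional for the stated bound.
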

\begin{restatable}{theorem}{globalIG}\label{thm:global-ig}
    The integrality gap of the LP for \globalecc is at least $\bglobal+1$.
\end{restatable}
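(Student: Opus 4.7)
The plan is to construct, for each $\bglobal$, a small instance on which the LP optimum is at most $\frac{1}{\bglobal+1}$ while every integer solution has cost at least $1$, giving an integrality ratio of $\bglobal+1$. Take $V=\{u_1,\ldots,u_{\bglobal+1},w\}$ with $|V|=\bglobal+2$, $C=\{c_1,c_2\}$, and two unit-weight hyperedges $e_1=V\setminus\{w\}$ of color $c_1$ and $e_2=V$ of color $c_2$. The global budget then permits $|V|+\bglobal=2\bglobal+2$ total color assignments.

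First I would lower-bound the integer optimum by a counting argument: a no-mistake solution must assign both $c_1$ and $c_2$ to every $u_i$ (by $e_1$ and $e_2$ respectively) and $c_2$ to $w$ (by $e_2$), consuming $2(\bglobal+1)+1=2\bglobal+3$ assignments and therefore violating the budget. Hence $\OPT\ge 1$, with equality attained by $\sigma(u_i)=\{c_1\}$ and $\sigma(w)=\{c_2\}$: this satisfies $e_1$ and incurs only $e_2$ as a mistake. Next I would exhibit the fractional solution $x_{u_i,c_1}=x_{u_i,c_2}=\frac{2\bglobal+1}{2(\bglobal+1)}$ for each $i\in[\bglobal+1]$, $x_{w,c_1}=0$, $x_{w,c_2}=1$, and $y_{e_1}=y_{e_2}=\frac{1}{2(\bglobal+1)}$. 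Direct verification shows that the covering constraints $x_{u_i,c_k}+y_{e_k}\ge 1$ are tight for $k\in\{1,2\}$, the covering constraint at $w$ is slack, the per-vertex lower bounds $\sum_c x_{u_i,c}=\frac{2\bglobal+1}{\bglobal+1}\ge 1$ and $\sum_c x_{w,c}=1$ are met, and the global budget is saturated since $(\bglobal+1)\cdot\frac{2\bglobal+1}{\bglobal+1}+1=2\bglobal+2$. The LP objective evaluates to $y_{e_1}+y_{e_2}=\frac{1}{\bglobal+1}$.

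Combining the two bounds gives $\OPT/\OPT_{\mathrm{LP}}\ge\bglobal+1$, proving the stated integrality gap lower bound. The main obstacle is designing the hyperedge pattern so that the $\bglobal$ units of surplus budget benefit only the LP: any integer solution must commit extra colors to individual vertices, whereas the LP can diffuse the same surplus uniformly across the $\bglobal+1$ shared vertices of two nearly parallel hyperedges. The nested pair $e_1\subsetneq e_2$ achieves precisely this, after which all remaining calculations are mechanical.
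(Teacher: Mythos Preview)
Your proof is correct and follows essentially the same idea as the paper: two hyperedges of different colors sharing $\bglobal+1$ vertices force any integral coloring to incur at least one mistake, while the LP can diffuse $\bglobal$ units of extra budget uniformly across those $\bglobal+1$ vertices to achieve cost $\frac{1}{\bglobal+1}$. Two minor remarks: (i) the extra vertex $w$ and the strict nesting $e_1\subsetneq e_2$ are unnecessary---the paper simply takes $V=\{v_1,\ldots,v_{\bglobal+1}\}$ with two \emph{parallel} edges $e_1=e_2=V$ of distinct colors, yielding the same gap with a slightly cleaner counting argument; (ii) the ``per-vertex lower bounds'' $\sum_c x_{v,c}\ge 1$ you check are not constraints of the paper's LP relaxation (only $\sum_c x_{v,c}\le z_v+1$, $z_v\ge 0$, $\sum_v z_v\le\bglobal$ appear), but since your fractional solution happens to satisfy them this is harmless---one simply sets $z_{u_i}=\frac{\bglobal}{\bglobal+1}$ and $z_w=0$ to fit the paper's formulation.
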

\paragraph*{Final remarks.} Our algorithms can be analyzed in the bicriteria setting as well, yielding a bicriteria $(2+\epsilon,1+\frac{1}{b}\lceil \frac{2b}{\epsilon} \rceil -\frac{1}{b})$-approximation algorithm for all $\epsilon \in (0,2b]$, {where $b=\brobust$ for \robustecc and $b=\bglobal$ for \globalecc}.
This improves the best bicriteria approximation ratios previously known; furthermore, it affirmatively answers one of the open questions of Crane et al.~\cite{crane2024overlapping}, namely, whether there exists a  bicriteria $(O(1),O(1))$-approximation algorithm for \globalecc. See Appendices~\ref{app:robust-bicriteria} and \ref{app:global-bicriteria}.

\section{Experiments}\label{sec:exp}
In this section, we analyze the performance of our algorithmic framework through experiments.
We describe the experimental setup in Section~\ref{sec:exp-setup}.
We evaluate and discuss the performance of our algorithm for \localecc in Section~\ref{sec:exp-local}. In Section~\ref{sec:exp-robustglobal}, we address \textsc{Robust} and \globalecc.

\subsection{Setup}\label{sec:exp-setup}
Our experiments  used the same benchmark as Crane et al.~\citep{crane2024overlapping}, which contains six datasets. See Appendix~\ref{app:data} for further description of the individual datasets. We remark that these  datasets have  been used as a benchmark to experimentally evaluate \ecc also in other prior work~\cite{amburg2020clustering, veldt2023optimal}.
Table~\ref{tab:dataset} summarizes some statistics of the datasets:
the number of nodes $|V|$,  number of edges $|E|$,  number of colors $|C|$,  rank $r:=\max_{e\in E}|e|$,  average degree $\bar d:= \sum_{v\in V}d_v/|V|$, maximum color-degree $\maxcolordeg := \max_{v\in V}|\colorset(\delta(v))|$, average color-degree $\avgcolordeg := \sum_{v\in V}|\colorset(\delta(v))|/|V|$, and the ratio $\intersectratio$ of vertices whose color degree is at least 2, i.e., $\intersectratio:=|\{v\in V \mid |\colorset(\delta(v))| \ge 2\}|/|V|$.

\begin{table}
\caption{Statistics of the benchmark datasets.}\label{tab:dataset}
\centering
\begin{tabular}{crrrrrrrr}
    \toprule
    \multicolumn{1}{c}{Datasets} & \multicolumn{1}{c}{$|V|$} & \multicolumn{1}{c}{$|E|$} & \multicolumn{1}{c}{$|C|$} & \multicolumn{1}{c}{$r$} & \multicolumn{1}{c}{$\bar d$} & \multicolumn{1}{c}{$\maxcolordeg$} & \multicolumn{1}{c}{$\avgcolordeg$} & \multicolumn{1}{c}{$\intersectratio$}\\
    \midrule
    \brain & 638 & 21,180 & 2 & 2 & 66.4 & 2 & 1.92 & 0.91\\
    \magten & 80,198 & 51,889 & 10 & 25 & 2.3 & 9 & 1.26 & 0.18\\
    \cooking & 6,714 & 39,774 & 20 & 65 & 63.8 & 20 & 4.35 & 0.61\\
    \dawn & 2,109 & 87,104 & 10 & 22 & 162.7 & 10 & 3.72 & 0.74\\
    \walmart & 88,837 & 65,898 & 44 & 25 & 5.1 & 40 & 2.65 & 0.52\\
    \trivago & 207,974 & 247,362 & 55 & 85 & 3.6 & 32 & 1.55 & 0.23\\
    \bottomrule
\end{tabular}

\end{table}

All experiments were performed on a machine with Intel Core i9-9900K CPU and 64GB of RAM.
In our experiments, we used the original code of Crane et al.~\citep{TheoryInPractice,crane2024overlapping} as the implementation of the previous algorithms.
Since their code was written in Julia, we implemented our algorithms also in Julia to ensure a fair comparison.
When running the original codes for the LP-rounding algorithms, we used Gurobi-12.0 as the LP solver. Gurobi was the solver of choice  in previous work~\citep{TheoryInPractice,crane2024overlapping,veldt2023optimal,amburg2020clustering}, and it is widely recognized for its excellent speed~\cite{mittelmann2018benchmark,mittelmann2023progress}.

Our experiments  focus on two aspects of the algorithms' performance: solution quality and running time. To compare solution quality, we will use \emph{relative error estimate}, a normalized, estimated error of the algorithm's output cost (or quality) compared to the optimum.
Since the problems are \NP-hard, it is hard to compute the exact error compared to the optimum; as such, Crane et al.~\citep{crane2024overlapping} used the optimal solution to their LP relaxation in lieu of the true optimum, giving an overestimate of the error. We followed this approach, but we used our LP relaxation instead since we can prove that our relaxation always yields a better estimate of the true optimum. To normalize the estimated error, we divide it by the estimated optimum: that is, the relative error estimate is defined as $(A-L)/L$, where $A$ denotes the algorithm's output cost and $L$ is the LP optimum.\footnote{{When $L=0$, we define the relative error estimate as $0$. Note that $L=0$ implies $A=0$ since our LP has a bounded integrality gap.}}

Crane et al.'s experiment~\citep{crane2024overlapping} 
used $\blocal \in \{1,2,3,4,5,8,16,32\}$ for \localecc,
$\brobust/|V| \in \{0, .01, .05,$ $.1, .15, .2, .25\}$ for \robustecc,
and $\bglobal/|V| \in \{0, .5, 1, 1.5, 2, 2.5, 3, 3.5, 4\}$ for \globalecc.
While these choices were carefully made so that we can avoid \emph{trivial} instances, we decided to extend their choice for \globalecc. To explain what  trivial instances are, suppose that $\blocal$ is greater than  the maximum color-degree $\maxcolordeg$ in an instance of \localecc. The problem then becomes trivial, since the local budget  allows assigning each vertex \emph{all} the colors of its incident edges. We call an instance of \localecc \emph{trivial} if $\blocal\geq\maxcolordeg$; similarly, \robustecc instances are trivial if $\brobust \ge \intersectratio|V|$, and \globalecc instances are trivial if $\bglobal \ge |V|(\avgcolordeg-1)$.
For \localecc and \robustecc, Crane et al.'s choice of budgets ensure that most instances are nontrivial: each data set has 0, 1, or at most 2 trivial instances, possibly with the exception of at most one dataset. However, for \globalecc, only 44 instances out of 78 in the original benchmark are nontrivial, so we decided to additionally test  $\bglobal/|V| \in \{.1, .2, .3, .4\}$. As a result, we tested thirteen different budgets in total for each dataset for \globalecc.
\subsection{Local ECC}\label{sec:exp-local}
We measured the solution quality and running time of the proposed algorithm in comparison with the greedy combinatorial algorithm and the LP-rounding algorithm of Crane et al.~\citep{crane2024overlapping}.

\begin{table}
	\caption{Average running times of each dataset (in seconds): \localecc. Values in parentheses are averages excluding trivial instances.}\label{tab:exp-local-rt}
	\centering
	\begin{tabular}{cccc}
		\toprule
		& Proposed & Greedy & LP-rounding \\
		\midrule
		\brain & 0.023~~~(0.120)  & 0.007~~~(0.028) & \phantom{00}0.743~~~\phantom{00}(3.739) \\
		\magten & 0.142~~~(0.134) & 0.587~~~(0.554) & \phantom{0}10.413~~~\phantom{0}(10.677) \\
		\cooking & 0.032~~~(0.035) & 0.099~~~(0.103) & \phantom{0}39.702~~~\phantom{0}(44.916) \\
		\dawn & 0.016~~~(0.019) & 0.040~~~(0.040) & \phantom{00}3.948~~~\phantom{00}(4.658) \\
		\walmart & 0.190~~~(0.190) & 1.443~~~(1.443)  & 145.427~~~(145.427) \\
		\trivago & 0.323~~~(0.313) & 3.709~~~(3.608) & 678.585~~~(677.036) \\
		\bottomrule
	\end{tabular}
\end{table}
\begin{figure}
	\centering
	\includegraphics[width=\columnwidth]{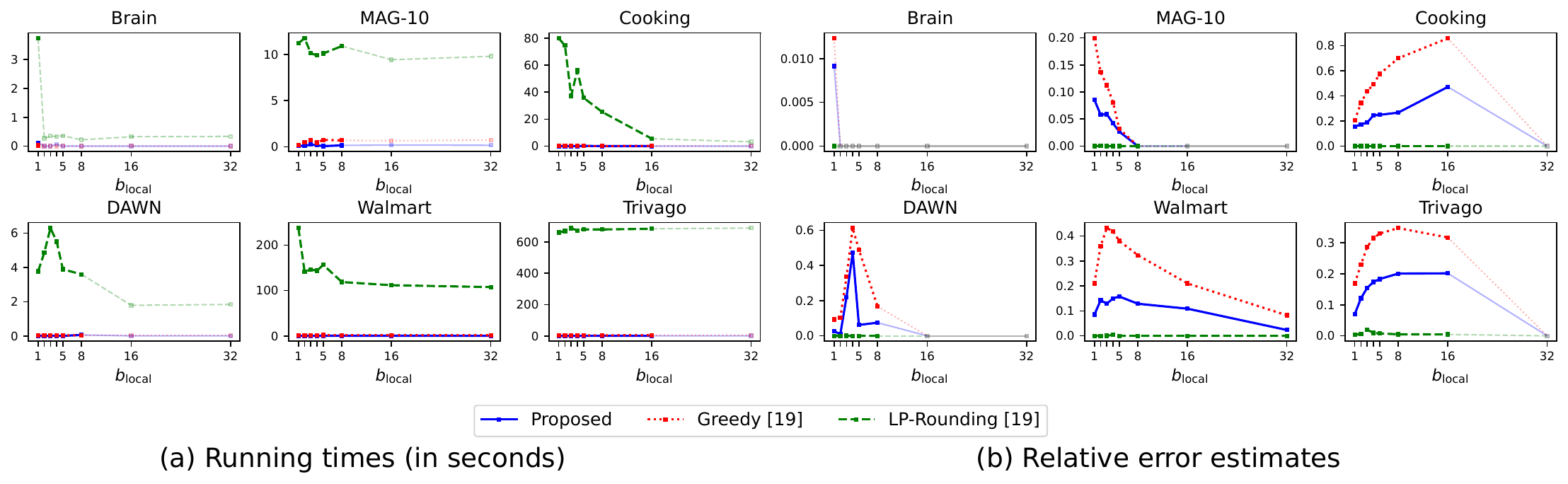}
	\caption{
		(a) Running times (in seconds) and (b) relative error estimates of the \localecc algorithms. Empty square markers denote trivial instances.
	}\label{fig:exp-local}
\end{figure}

Figure~\mbox{\ref{fig:exp-local}(a)} depicts the running times, and Table~\ref{tab:exp-local-rt} lists their average for each dataset.
Figure~\mbox{\ref{fig:exp-local}(a)} shows that our proposed algorithm was the fastest in most instances. It is not surprising that our algorithm, with the overall average running time of 0.121sec, was much faster than the LP-rounding algorithm whose overall average running time was 146.470sec, since our algorithm is combinatorial. This gap was no smaller even  when we consider only nontrivial instances: the overall average running times were 0.142sec (proposed) and 180.367sec (LP-rounding).
Remarkable was that the proposed algorithm was faster than the greedy algorithm, too. In fact, on average, it was more than twice as fast as the greedy algorithm in most datasets except for \brain.
Such gap in the running times became more outstanding in larger datasets: for \trivago, our proposed algorithm was 11 times faster than the  greedy algorithm and 2,100 times faster than the LP-rounding algorithm.

Figure~\mbox{\ref{fig:exp-local}(b)} shows the relative error estimates of the algorithms' outputs.
We note that, except for \brain and \magten, the relative error estimate of our algorithm (and of the greedy algorithm) tends to increase as $\blocal$ increases,  and then at some point starts decreasing. This appears to be the result of the fact that the problem becomes more complex as $\blocal$ initially increases, but when $\blocal$ becomes too large, the problem becomes easy again.
It can be seen from Figure~\mbox{\ref{fig:exp-local}(b)} that our proposed algorithm outperformed the greedy algorithm in all cases. 
The overall average relative error estimate of our proposed algorithm was 0.141, which is less than half of the greedy algorithm's average of 0.297.
The LP-rounding algorithm output near-optimal solutions in every case.

Overall, these experimental results demonstrate that our algorithmic framework is scalable, and produces solutions of good quality.
As was noted by Veldt~\citep{veldt2023optimal} and observed in this section, LP-rounding approach does not scale well due to its time consumption, even though it produces near-optimal solutions when it is given sufficient amount of time. 
Compared to the greedy combinatorial algorithm, our proposed algorithm output better solutions in smaller amount of time in most cases.
This suggests that the proposed algorithm can provide improvement upon the greedy algorithm.
\subsection{Robust ECC and Global ECC}\label{sec:exp-robustglobal}
Since the proposed algorithms for \robustecc and \globalecc are similar, we present the experimental results of both problems together in this section, starting with \robustecc.

We measured the performance of our proposed algorithm in addition to the greedy combinatorial algorithm and the LP-rouding algorithm of Crane et al.~\citep{crane2024overlapping}.
However, as their LP-rounding algorithm is a bicriteria approximation algorithm that possibly violates the budget $\brobust$, we cannot directly compare their solution quality with the proposed algorithm.
In fact, the LP-rounding algorithm turned out to output ``superoptimal'' solutions violating $\brobust$ in most cases of the experiment. The bicriteria approximation ratio was chosen as $(6,3)$, which is the same choice as in Crane et al.'s experiment~\citep{crane2024overlapping}.\footnote{As a side remark, when we reran the proposed algorithm with the budget tripled to enable a comparison with the LP-rounding $(6,3)$-approximation algorithm, the number of mistakes made by the proposed algorithm was, on average, as small as 57.2\% of that made by the bicriteria algorithm.} 

Comparing the average running times of each dataset reveals that the proposed algorithm ran much faster than the LP-rounding algorithm for most datasets, except for \dawn. The proposed algorithm was slower than the greedy algorithm for all datasets; however, it tended to produce solutions of much better quality than the greedy algorithm. The relative error estimate of the proposed algorithm was strictly better than that of the greedy algorithm in all nontrivial instances; the overall average relative error estimate of the proposed algorithm  was 0.042, six times better than the greedy algorithm's average of 0.272. We also note that the relative error estimate of our algorithm stayed relatively even regardless of the budget, while that of the greedy algorithm fluctuated as $\brobust$ changed in some datasets, such as \magten and \trivago. Due to space constraints, a detailed table and a figure presenting the experimental results have been deferred to Appendix~\ref{app:exp-graphs}.

For \globalecc, the bicriteria approximation ratio of the LP-rounding algorithm was chosen as $(2\bglobal+5, 2)$, which again is the same choice as in Crane et al.'s experiment. For \globalecc, the bicriteria approximation algorithm did not violate the budget for any instances of the benchmark. This may be due to the fact that their LP relaxation for \globalecc has a bounded integrality gap, unlike their LP for \robustecc.\footnote{When we reran the proposed algorithm with the budget doubled, the number of mistakes made by the proposed algorithm was, on average, as small as 68.9\% of that made by the bicriteria $(2\bglobal+5, 2)$-approximation algorithm.}

The experimental results for Global ECC exhibited similar trends to those for Robust ECC.
The relative error estimate of the proposed algorithm was strictly better than that of the greedy algorithm in all nontrivial instances. The average relative error estimate on nontrivial instances was 0.039 for the proposed algorithm, while that of the greedy algorithm was 0.912---more than 23 times higher.
We also note that the relative error estimate of the greedy algorithm rapidly increased as $\bglobal$ increased.
While the proposed algorithm was on average slower than the greedy algorithm for all datasets, it was much faster than the LP-rounding algorithm in all datasets except for \dawn.
A detailed table and a figure presenting the experimental results have been again deferred to Appendix~\ref{app:exp-graphs}  due to the space constraints.

The above results together indicate that our proposed algorithms for \robustecc and \globalecc are likely to be preferable when a high-quality solution is desired possibly at the expense of a small increase in  computation time.

\section{Conclusion and discussion}\label{sec:conclude}
In this paper, we presented a new algorithmic framework for overlapping and robust clustering of edge-colored hypergraphs.
Experimental results demonstrated that our framework improves upon the previous combinatorial algorithm for \localecc in both computation time and solution quality; compared to LP-rounding, it achieves significantly faster computation, with a slight trade-off in solution quality.
For \robustecc and \globalecc, our framework delivers improved solution quality with a slight increase in computation time compared to the previous combinatorial algorithms, while strictly satisfying the budget constraint.
On the theoretical side, our analyses show that we achieve true $(\blocal+1)$-, $2(\brobust+1)$-, $2(\bglobal+1)$-approximation for \textsc{Local}, \textsc{Robust}, and \globalecc, respectively. We also provide
inapproximability results for \localecc and integrality gap results for all three problems, suggesting that significant theoretical improvements are unlikely. These  results lead to answers to two open questions posed in the literature~\citep{crane2024overlapping}.

There remain a few promising directions for future research. Although our combinatorial algorithm runs significantly faster than LP-rounding algorithms, its running time is still superlinear for \robustecc and \globalecc. Can we optimize the dual update steps of our algorithms to obtain a linear-time algorithm for these two problems? Also, while our work focused on giving a better algorithm for \ecc, it would be also interesting to explore additional applications of \ecc, e.g., to the clustering tasks  solved via correlation clustering (see Appendix~\ref{app:related}). Given that $k$-\textsc{Partial Vertex Cover} admits a $2$-approximation algorithm~\cite{gandhi2004approximation}, another interesting question is if we can obtain an $O(1)$-approximation algorithm for \robustecc as well.

\bibliography{lit}
\bibliographystyle{plain}

\appendix
\section{Related work}\label{app:related}

Angel et al.~\citep{angel2016clustering} initiated the study of clustering edge-colored graphs (not hypergraphs). After showing its NP-hardness, they gave the first approximation algorithm for the (maximization) problem, with the approximation ratio of $e^{-2}$. Subsequent studies~\cite{ageev2014improved, alhamdan2019approximability, ageev20200} improved this ratio, and recently, Crane et al.~\citep{crane2025edge} achieved $\frac{154}{405}$-approximation.

Given the emerging importance of clustering data with higher-order interactions~\cite{benson2016higher, li2017inhomogeneous}, Amburg et al.~\citep{amburg2020clustering} addressed clustering on edge-colored \emph{hypergraphs} for the first time, and gave  $2$-approximation algorithms.  Veldt~\citep{veldt2023optimal} presented a combinatorial $2$-approximation algorithm along with a UGC-hardness ruling out any constant smaller than $2$.

As was highlighted by previous studies~\citep{anava2015improved,amburg2020clustering,veldt2023optimal}, \ecc is closely related to correlation clustering problems~\cite{bansal2004correlation}, which has been extensively studied in machine learning and data mining~\cite{yarkony2012fast, beier2014cut, pandove2018correlation, wahid2022literature}. 
They share the common feature of taking (hyper)edges representing similarity between vertices as input, and thus both have been applied to similar sets of taks such as community detection~\citep{veldt2018correlation, amburg2020clustering}. However, correlation clustering differs from \ecc in that it treats the absence of an edge as an indication of dissimilarity, whereas \ecc interprets it merely as a lack of information.
Chromatic correlation clustering, which introduces categorical edges to correlation clustering, is another closely related problem to \ecc~\cite{bonchi2015chromatic, anava2015improved, klodt2021color, xiu2022chromatic}. Interestingly, unlike correlation clustering which was studied on hypergraphs and received significant interest~\cite{kim2011higher, fukunaga2019lp, gleich2018correlation}, it appears that the chromatic hypergraph correlation clustering has never been studied to the best of our knowledge. We note that this may be an interesting future direction of research. Other variants of correlation clustering, including 
overlapping variants~\cite{bonchi2013overlapping, andrade2014evolutionary, li2017motif, chagas2019hybrid}, and robust variants~\cite{krishnaswamy2019robust, ji2021approximation} have been studied. We refer interested readers to the book by Bonchi, Garc{\'\i}a-Soriano, and Gullo~\citep{bonchi2022correlation} and references therein.
\section{Technical details and proofs for Local ECC deferred from Section~\ref{sec:local}}\label{app:local}
\subsection{Formal proof of Theorem~\ref{thm:local-approx}}\label{app:local-thm}
\begin{lemma} \label{lem:local-key}
Algorithm~\ref{alg:localecc} satisfies the following:
\begin{enumerate}[(a)]
\item \label{enum:local-a} At any moment, $(\alpha, \beta)$ is feasible to the dual LP.
\item \label{enum:local-b} At any moment, for all $v \in V$, $\alpha_v \leq \frac{1}{b_v + 1} \sum_{e \in \delta(v)} \beta_{e,v}$.
\item \label{enum:local-c} At termination, every mistake $e$ under $\sigma$ is tight, i.e., not loose.
\end{enumerate}
\end{lemma}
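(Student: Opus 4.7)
The plan is to prove the three properties in sequence by tracking the dual variables over the execution, exploiting two simple bookkeeping facts about Algorithm~\ref{alg:localecc}: (I) at the start of iteration $v$ of the outer \textbf{for} loop, $\alpha_v = 0$ and $\beta_{e,v}=0$ for every $e \in \delta(v)$, because no earlier iteration ever touches these variables; and (II) once iteration $v$ completes, $\alpha_v$ and $\{\beta_{e,v}\}_{e \in \delta(v)}$ are frozen for the rest of the run, since only iteration $v$ modifies them. This reduces every claim to what happens during a single iteration.

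For~(\ref{enum:local-a}), nonnegativity and the edge constraints $\sum_{u \in e} \beta_{e,u} \le w_e$ are immediate: the algorithm only increases variables, and the inner step removes an edge from $L$ precisely when its constraint becomes tight, after which none of its $\beta_{e,u}$'s grow. The delicate case is $\sum_{e \in \delta_c(v)} \beta_{e,v} \le \alpha_v$. Inside iteration $v$, $\alpha_v$ grows at rate $1$ throughout, while $\sum_{e \in \delta_c(v)\cap L}\beta_{e,v}$ grows at rate $1$ whenever $c\in\chi(\delta(v)\cap L)$ and otherwise stays put. The key observation is that once an edge $e$ exits $L$, its $\beta_{e,v}$ freezes, so the full sum $\sum_{e \in \delta_c(v)}\beta_{e,v}$ and the loose-part sum $\sum_{e \in \delta_c(v) \cap L}\beta_{e,v}$ always grow at the same rate. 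Combined with (I), this gives $\sum_{e \in \delta_c(v)}\beta_{e,v} \le \alpha_v$ at all times during iteration $v$, with equality whenever color $c$ is never dropped from $\chi(\delta(v)\cap L)$.

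For~(\ref{enum:local-b}), I would again compare increase rates within iteration $v$. The rate of $\alpha_v$ is $1$, while $\sum_{e\in\delta(v)}\beta_{e,v}$ increases at total rate $|\chi(\delta(v)\cap L)|$ (one per active color, using the freezing observation again for edges that have already left $L$). The \textbf{while} guard ensures $|\chi(\delta(v)\cap L)| > b_v$, i.e.\ $\ge b_v+1$, throughout the iteration. Both sides begin at $0$ by~(I), so the equivalent inequality $(b_v+1)\alpha_v \le \sum_{e\in\delta(v)}\beta_{e,v}$ is preserved at every instant during iteration $v$ and, by~(II), afterward as well.

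For~(\ref{enum:local-c}), the argument is essentially by definition: at the end of iteration $v$, $\sigma(v) = \chi(\delta(v)\cap L)$, so $c_e \in \sigma(v)$ for any $e \in \delta(v)$ that is still in $L$ at that moment. If $e$ is a mistake at termination, some $v \in e$ has $c_e \notin \sigma(v)$, forcing $e \notin L$ already when iteration $v$ finished; since $L$ is monotonically shrinking, $e$ remains out of $L$, i.e.\ tight, at termination. Overall, the main obstacle I anticipate is formalizing the rate argument in~(\ref{enum:local-a}): one must justify that incrementing only the ``loose'' part of $\sum_{e \in \delta_c(v)} \beta_{e,v}$ keeps the full sum bounded by $\alpha_v$, which, as noted above, follows from the freezing invariant once an edge leaves $L$.
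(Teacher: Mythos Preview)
Your proposal is correct and follows essentially the same approach as the paper: the rate-comparison arguments you give for~(\ref{enum:local-a}) and~(\ref{enum:local-b}) are exactly what the paper sketches in Section~\ref{sec:local} (and then cites as ``shown in Section~\ref{sec:local}'' in the formal proof), and your argument for~(\ref{enum:local-c}) is the contrapositive of the paper's. Your explicit bookkeeping facts~(I) and~(II) and the ``freezing invariant'' are a slightly more careful rendering of the same reasoning, but there is no substantive difference.
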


\begin{proof}
	Properties~\eqref{enum:local-a} and~\eqref{enum:local-b} were shown in Section~\ref{sec:local}; let us show Property~\eqref{enum:local-c}.
    Let $e$ be an arbitrary loose edge, and suppose towards contradiction that $c_e\notin\sigma(v)$ for some $v\in e$. Observe that, once an edge becomes tight, the algorithm never makes it loose again. Therefore, $e$ was loose at the end of the iteration for $v$ of the \textbf{for} loop. Then $\delta(v)\cap L$ contained $e$ and therefore $c_e\in\colorset(\delta(v) \cap L)$, leading to contradiction.
\end{proof}
Let $\ALG$ be the total weight of mistakes in the output of Algorithm~\ref{alg:localecc} and $\OPT$ be the weight of an optimal solution.
\begin{lemma} \label{thm:local-rho}
    We have $\ALG \leq (b_\mathsf{max}+1) \cdot \OPT$.
\end{lemma}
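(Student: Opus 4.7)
The plan is to bound $\ALG$ from above by a quantity expressed in terms of the dual variables, bound $\OPT$ from below by the dual objective (via weak LP duality), and then compare these two bounds using Lemma~\ref{lem:local-key}\eqref{enum:local-b}. The first bound will exploit Lemma~\ref{lem:local-key}\eqref{enum:local-c}, which says every mistake is tight at termination, while the second bound will exploit Lemma~\ref{lem:local-key}\eqref{enum:local-a}, which guarantees dual feasibility and hence the applicability of weak duality.

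Concretely, I would first write $\ALG=\sum_{e\in E_m} w_e$, where $E_m$ is the set of mistakes produced by Algorithm~\ref{alg:localecc}. By Lemma~\ref{lem:local-key}\eqref{enum:local-c}, every mistake $e$ is tight, so $w_e=\sum_{v\in e}\beta_{e,v}$. Extending the sum from $E_m$ to $E$ (all $\beta_{e,v}$'s are nonnegative) and swapping the order of summation yields
\[
\ALG \;\le\; \sum_{e\in E}\sum_{v\in e}\beta_{e,v} \;=\; \sum_{v\in V} B_v,
\]
where $B_v:=\sum_{e\in\delta(v)}\beta_{e,v}$. Next, I would invoke Lemma~\ref{lem:local-key}\eqref{enum:local-b}, which gives $\alpha_v\le B_v/(b_v+1)$ for every $v$, to lower-bound the dual objective $D:=\sum_{e\in E,v\in e}\beta_{e,v}-\sum_{v\in V}b_v\alpha_v$ as
\[
D \;\ge\; \sum_{v\in V} B_v - \sum_{v\in V}\frac{b_v}{b_v+1}B_v \;=\; \sum_{v\in V}\frac{B_v}{b_v+1} \;\ge\; \frac{1}{b_\mathsf{max}+1}\sum_{v\in V} B_v.
\]
Combining the two inequalities gives $\ALG\le (b_\mathsf{max}+1)\,D$. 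Finally, since $(\alpha,\beta)$ is dual feasible by Lemma~\ref{lem:local-key}\eqref{enum:local-a}, weak LP duality together with the fact that the primal LP is a relaxation of the integer program yields $D\le \OPT$, completing the chain $\ALG\le(b_\mathsf{max}+1)\,D\le(b_\mathsf{max}+1)\,\OPT$.

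The argument is essentially a bookkeeping exercise once Lemma~\ref{lem:local-key} is in hand, so the main subtlety lies not in this lemma but in Lemma~\ref{lem:local-key}\eqref{enum:local-b} itself, which encapsulates the ``per-vertex charging ratio'' of the primal-dual process. Intuitively, while the iteration for vertex $v$ is running, $B_v$ grows at rate $|\colorset(\delta(v)\cap L)|$ and $\alpha_v$ grows at unit rate, and the loop condition forces $|\colorset(\delta(v)\cap L)|>b_v$ at every such moment; integrating this yields the factor $b_v+1$. The delicate point I would need to verify is that although individual $\beta_{e,v}$'s may keep changing after an edge leaves $L$ (through later iterations for other endpoints), the relationship $\alpha_v\le B_v/(b_v+1)$ is preserved because $\alpha_v$ is only touched during $v$'s own iteration while $B_v$ can only grow thereafter. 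Once that monotonicity observation is pinned down, the rest of the proof consists of the two displays above.
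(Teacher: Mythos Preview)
Your proof is correct and follows essentially the same route as the paper: bound $\ALG$ above by $\sum_{v\in V} B_v$ using Property~\eqref{enum:local-c} and tightness, bound $\OPT$ below by $\sum_{v\in V} B_v/(b_v+1)$ using weak duality (Property~\eqref{enum:local-a}) together with Property~\eqref{enum:local-b}, and combine. The only cosmetic difference is that the paper sums over the set $E\setminus L$ of tight edges rather than over the set $E_m$ of mistakes, but since $E_m\subseteq E\setminus L$ by Property~\eqref{enum:local-c}, the two are equivalent here.
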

\begin{proof}
    By Properties~\eqref{enum:local-a} and~\eqref{enum:local-b} of Lemma~\ref{lem:local-key}, we have
    \[
    \OPT \geq \sum_{e\in E}{\sum_{v \in e}{\beta_{e,v}}}-\sum_{v\in V}{ b_v \alpha_v}
      = \sum_{v\in V}{\Big( \sum_{e \in \delta(v)}{\beta_{e,v}}- b_v \alpha_v \Big)} 
       \ge \sum_{v\in V}{\Big(\frac{1}{b_v + 1}\sum_{e \in \delta(v)}{\beta_{e,v}}\Big)},
  \]
    where the first inequality is due to the (weak) LP duality. On the other hand, we have
    \[
    \ALG \leq \sum_{e\in E\setminus L}{w_e} = \sum_{e\in E\setminus L}{\sum_{v \in e}{\beta_{e,v}}} \\
    \le \sum_{e\in E}{\sum_{v \in e}{\beta_{e,v}}} = \sum_{v \in V} \sum_{e \in \delta(v)} \beta_{e, v},
    \]
    where the first inequality is due to Property~\eqref{enum:local-c}. The two inequalities together completes the proof.
\end{proof}
    
Now we need to show that the algorithm runs in polynomial time. In fact, the algorithm can be implemented to run in linear time. Recall that the size of $H$ is $\sum_{v \in V} d_v$.
\begin{lemma}\label{thm:local-rt}
    Algorithm~\ref{alg:localecc} can be implemented to run in $O(\sum_{v \in V} d_v)$ time.
\end{lemma}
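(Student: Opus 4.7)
The plan is to implement the inner \textbf{while} loop of Algorithm~\ref{alg:localecc} for each vertex $v$ as a single bulk operation costing $O(d_v)$, so that summing over vertices yields the claimed $O(\sum_{v\in V} d_v)$ bound. I would maintain globally, for each edge $e$, a residual slack $s_e := w_e - \sum_u \beta_{e,u}$, together with the set $L = \{e : s_e > 0\}$; both are initialized in $O(|E|) = O(\sum_v d_v)$ time.

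For a fixed $v$, the first step is to make a single pass over $\delta(v)$ to collect $L_v := \delta(v) \cap L$ and bucket its elements by $c_e$, forming groups $B_c$ of loose edges at $v$ of color $c$. A second pass computes the per-color totals $T_c := \sum_{e \in B_c} s_e$. Let $k_v := |\colorset(L_v)|$; when $k_v \le b_v$, I would simply set $\sigma(v)\gets\colorset(L_v)$ and move on. Otherwise, I would invoke a deterministic linear-time selection (e.g., median-of-medians) on $\{T_c\}_{c\in\colorset(L_v)}$ to locate the $(k_v-b_v)$-th smallest value $t^*$, and split the colors into $C^- := \{c : T_c \le t^*\}$ (to be exhausted; at least $k_v-b_v$ colors) and $C^+ := \{c : T_c > t^*\}$ (to survive; at most $b_v$ colors). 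For each $e \in B_c$ with $c \in C^-$, I would set $\beta_{e,v} \gets s_e$, $s_e \gets 0$, and remove $e$ from $L$. For each $c \in C^+$, I would walk once through $B_c$ with a running budget $r$ initialized to $t^*$, assigning $\beta_{e,v} \gets \min(s_e, r)$, decrementing $r$ and $s_e$ accordingly, and removing $e$ from $L$ once $s_e = 0$; the walk terminates as soon as $r$ reaches $0$. Finally, $\sigma(v)$ is set to $C^+$.

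Correctness then reduces to observing that this bulk step realizes one valid run of Algorithm~\ref{alg:localecc}: the common per-color ``time'' advances by $t^*$, each $c \in C^-$ absorbs exactly $T_c \le t^*$ units of dual mass and thus tightens all its loose edges (becoming exhausted), while each $c \in C^+$ has $T_c > t^*$ so the greedy allocation leaves at least one edge of $B_c$ still loose, ensuring $c \in \colorset(\delta(v) \cap L) = \sigma(v)$ at termination. The dual constraints $\sum_u \beta_{e,u} \le w_e$ and $\sum_{e \in \delta_c(v)} \beta_{e,v} \le \alpha_v$ are preserved by construction, and since the paper explicitly notes that the analysis holds for any per-edge rate split totalling $1$ per color, this particular allocation is a permitted realization of Algorithm~\ref{alg:localecc}.

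The only nontrivial worry is the logarithmic overhead that would arise from maintaining a priority queue of individual edge slacks or sorting within each color class; the key trick is that once $t^*$ is computed by a linear-time selection on the $k_v \le d_v$ per-color totals $T_c$, all subsequent work consists of single passes over the $B_c$'s, each touching every edge $O(1)$ times. Thus the per-vertex cost is $O(d_v)$, and summing over $v \in V$ delivers the required $O(\sum_{v\in V} d_v)$ bound.
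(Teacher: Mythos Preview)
Your proposal is correct and follows essentially the same approach as the paper's own proof: maintain per-edge residual slacks, for each vertex compute the per-color total slacks $T_c$ in a pass over $\delta(v)$, use linear-time selection to find the threshold (the paper phrases it as the $(b_v{+}1)$-st largest $\slack(c)$, which coincides with your $(k_v{-}b_v)$-th smallest $T_c$), and then add $\min\{T_c,t^\star\}$ mass to each color in another $O(d_v)$ pass. Your write-up is somewhat more explicit about the per-edge allocation within a color class and about why at least one edge survives in each $c\in C^+$, but the underlying implementation is identical to the paper's discretized Algorithm~\ref{alg:localecc-discrete}.
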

\begin{proof}
    For each edge, let us maintain the ``level'' $\ell_e := \sum_{u \in e} \beta_{e, u}$. Consider an iteration for node $v \in V$. By enumerating $\delta(v)$, we can compute, for every color $c \in \colorset(\delta(v))$, the ``slack'' $\slack(c) := \sum_{e \in \delta_c(v)} (w_e - \ell_e)$. Let $c^\star$ be the color $c \in \colorset(\delta(v))$ that has the ($b_v+1$)-st largest slack. 
    Let $s^\star:=\slack(c^\star)$.
    Note that we can identify $c^\star$ in $O(d_v)$ time using the algorithm of Blum et al.~\citep{blum1972linear}. Once we found $c^\star$, for every color $c \in \colorset(\delta(v))$, we increase $\sum_{e \in \delta_c(v)} \beta_{e,v}$ by $\min\{ \slack(c), s^\star \}$  while maintaining the dual feasibility, i.e., for each edge $e \in E$, $\sum_{v \in e} \beta_{e, v} \leq w_e$ must be satisfied at the end. We then update $\{\ell_e\}_{e \in \delta_c(v)}$ accordingly. Note that a single iteration can be implemented to run in $O(d_v)$, completing the proof.
\end{proof}

\localApprox*
\begin{proof}
    Immediate from Lemmas~\ref{thm:local-rho} and~\ref{thm:local-rt}.
\end{proof}

\subsection{Proof of Theorem~\ref{thm:local-ig}}\label{app:local-ig}
\localIG*
\begin{proof}
    Consider a hypergraph $H=(V,E)$ where $|E|$ is sufficiently large and $|V|=\binom{|E|}{\blocal+1}$. All edge weights are 1. In the hypergraph, each node is uniquely labeled by a subset $S$ of $E$ such that $|S|=\blocal+1$. Let $v_S$ denote the node whose label is $S$. For each $v_S \in V$, the set of edges that are incident to $v_S$ is $S$. The colors of edges are distinct, i.e., $c_{e}\neq c_{e'}$ for all $e\neq e' \in E$. 
    
    We claim that, in any integral solution, the number of \emph{satisfied} edges (i.e., edges that are not mistakes) does not exceed $\blocal$. Suppose toward contradiction that there is a color assignment where at least $\blocal+1$ edges are satisfied. Let $S$ be any subset of the satisfied edges of size exactly $\blocal+1$. 
    Since the colors are all distinct, node $v_S$ must be colored with (at least) $\blocal+1$ colors, contradicting  the budget constraint. Therefore, the total number of mistakes of any integral solution is at least $|E|-\blocal$.

    Now consider the following fractional solution. Let $x_{v_S,c_e}$ has value $\frac{\blocal}{\blocal+1}$ if $e \in S$, otherwise $0$. Let $y_e=\frac{1}{\blocal+1}$ for all $e\in E$. Observe that the constructed solution is feasible to the LP and its cost is $\frac{|E|}{\blocal+1}$. The integrality gap is at least
    \[
        \frac{|E|-\blocal}{\frac{|E|}{\blocal+1}}=\frac{(|E|-\blocal)(\blocal+1)}{|E|}=\blocal+1-\frac{\blocal(\blocal+1)}{|E|},
    \]
    which converges to $\blocal+1$ as $|E|$ tends to infinity.
\end{proof}

\subsection{Proof of Theorems~\ref{thm:local-inapprox-ugc} and \ref{thm:local-inapprox-nphard}}\label{app:local-inapprox}

\localInapproxUGC*
\localInapproxNPhard*

We say a hypergraph $H=(V,E)$ is $k$-uniform if, for all $e\in E$, $|e|=k$. Given a $k$-uniform hypergraph $H = (V, E)$, \EkVC asks to find a minimum-size subset $S \subseteq V$ of vertices, called a \emph{vertex cover}, such that every hyperedge $e \in E$ intersects $S$, i.e., $e \cap S \neq \emptyset$ for each $e \in E$. Bansal and Khot~\citep{bansal2010inapproximability} showed the following theorem.
\begin{theorem}[Bansal and Khot~\citep{bansal2010inapproximability}] \label{thm:ekvc-ugc}
For any $k \geq 2$ and any constant $\epsilon > 0$, there does not exists a $(k-\epsilon)$-approximation algorithm for \EkVC assuming the Unique Game Conjecture.
\end{theorem}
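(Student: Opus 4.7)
The plan is to prove Theorem~\ref{thm:ekvc-ugc} by a Long Code reduction from Unique Games (UG). Assuming the UGC, we start from the standard gap instance: for every $\eta > 0$, it is \NP-hard to distinguish bipartite UG instances $\mathcal{U} = (U \cup V, E, [R], \{\pi_e\}_{e \in E})$ with $\mathrm{OPT}(\mathcal{U}) \geq 1-\eta$ from those with $\mathrm{OPT}(\mathcal{U}) \leq \eta$, where each constraint $\pi_e$ is a permutation on $[R]$. The construction builds a $k$-uniform hypergraph $H = (V_H, E_H)$ with $V_H := V \times \{0,1\}^R$, viewed as one ``long-code block'' of size $2^R$ per vertex $v \in V$. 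Fix a probability distribution $\mu$ supported on $\{0,1\}^k \setminus \{0^k\}$ whose marginal in each coordinate equals $1/k$; the uniform distribution on the unit vectors $e_1, \ldots, e_k$ is a canonical choice. A hyperedge is sampled by picking $u \in U$ uniformly, independently sampling $k$ neighbors $v_1, \ldots, v_k$ of $u$, drawing $R$ i.i.d.\ vectors $z_1, \ldots, z_R \sim \mu$ in $\{0,1\}^k$, and forming $x_i \in \{0,1\}^R$ by reading the $i$-th coordinate of each $z_j$ and then permuting the coordinates via $\pi_{u,v_i}$, so that a dictator at coordinate $\sigma(u)$ pulls back to the dictator at $\pi_{u,v_i}^{-1}(\sigma(u))$. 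The resulting hyperedge is $\{(v_i, x_i) : i \in [k]\}$.

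For completeness, given a UG assignment $\sigma$ satisfying $\geq 1 - \eta$ fraction of constraints, define the dictator cover $S_\sigma := \{(v, x) \in V_H : x_{\sigma(v)} = 1\}$. Its density is exactly $1/k$, and whenever the sampled UG constraints between $u$ and the chosen $v_i$'s are all satisfied, the ``twist'' in the construction guarantees that the $\sigma(v_i)$-th coordinate of $x_i$ equals the $\sigma(u)$-th coordinate of the $\mu$-sample $z$; since $\mu$ assigns zero mass to $0^k$, at least one vertex of the hyperedge lies in $S_\sigma$. An $O(\eta)$ fraction of additional vertices suffices to cover the remaining ``bad'' hyperedges, so $\mathrm{OPT}(H) \leq (1/k + O(\eta)) |V_H|$.

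For soundness, suppose $\mathrm{OPT}(\mathcal{U}) \leq \eta$ yet $H$ admits a vertex cover $C \subseteq V_H$ with $|C| < (1 - \epsilon) |V_H|$. Let $f_v(x) := \mathbf{1}[(v,x) \notin C]$; then $\mathbb{E}_v[\mathbb{E}_x f_v] > \epsilon$. The vertex-cover property forces the product expectation $\mathbb{E}\bigl[\prod_{i=1}^k f_{v_i}(x_i)\bigr]$ over the hyperedge distribution to vanish. However, an invariance-style argument, namely the ``It Ain't Over Till It's Over'' theorem of Mossel, O'Donnell, Oleszkiewicz, and Raghavendra specialized to the biased product distribution $\mu^{\otimes R}$, shows that after applying a small-noise operator, such a product expectation can vanish only if many functions $f_v$ possess a coordinate of influence at least some threshold $\tau = \tau(\epsilon, k) > 0$. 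Decoding each such $v$ to a random $\tau$-influential coordinate (and decoding each $u \in U$ consistently by aggregating the influences of the neighbors $v_i$ under the permutations $\pi_{u, v_i}$) produces a UG labeling of value $\Omega(\tau^2)$, contradicting UG soundness for $\eta$ small enough. Taking $\eta \to 0$, the completeness-to-soundness ratio becomes $(1-\epsilon)/(1/k + O(\eta)) \geq k - \epsilon'$ for any prescribed $\epsilon' > 0$.

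The main obstacle lies in the soundness analysis, specifically in establishing the quantitative invariance principle for the biased product measure $\mu^{\otimes R}$. Majority-Is-Stablest in its vanilla form addresses the uniform measure; here we need a robust version for a distribution whose support excludes $0^k$, which is precisely what makes the reduction tight at $k$. Carefully choosing $\mu$, calibrating the noise rate, the influence threshold $\tau$, the label size $R$, and the UG gap $\eta$ so that the decoding succeeds with high probability while matching the $k - \epsilon$ bound, is the delicate technical heart of the Bansal--Khot argument.
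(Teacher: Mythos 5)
The paper does not prove this statement: Theorem~\ref{thm:ekvc-ugc} is imported as a black box from Bansal and Khot~\citep{bansal2010inapproximability} and is only \emph{used} (together with Theorem~\ref{thm:ekvc-nphard}) as the source problem for the approximation-preserving reduction in Appendix~\ref{app:local-inapprox}. So there is no in-paper proof to compare against; what you have written is an attempt to reprove the cited result itself, which the paper never undertakes and does not need to.

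Taken on its own terms, your outline has the right general shape for a UG-based long-code reduction, but it is not a proof, and two of the gaps are concrete rather than merely expository. First, you declare $V_H := V \times \{0,1\}^R$ with no weights and then claim the dictator cover $S_\sigma = \{(v,x) : x_{\sigma(v)}=1\}$ has density exactly $1/k$; under the counting measure on $\{0,1\}^R$ that set has density $1/2$. To get $1/k$ you must weight each block by the $(1/k)$-biased product measure (the marginal of $\mu^{\otimes R}$), and then you additionally owe an argument converting the weighted instance back to unweighted \EkVC, which your write-up never sets up. Relatedly, ``an $O(\eta)$ fraction of additional vertices suffices to cover the remaining bad hyperedges'' needs justification: vertex cover is a worst-case condition over \emph{all} hyperedges, so you must show that every violated hyperedge touches the block of some UG vertex incident to a violated constraint and that those blocks have small total weight (which requires a regularity or weighting argument on the UG side). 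Second, and more fundamentally, the entire soundness analysis is deferred to an unproved quantitative invariance statement for $\mu^{\otimes R}$, and you yourself label the calibration of $\mu$, the noise rate, the influence threshold $\tau$, and the decoding as ``the delicate technical heart'' of the argument. A proof cannot defer its heart; as written, the proposal is a plausible roadmap to the Bansal--Khot theorem, not a derivation of it. In the context of this paper, the correct handling of the statement is exactly what the authors do: cite it.
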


Dinur, Guruswami, Khot, and Regev~\citep{dinur2005new} showed the following theorem.
\begin{theorem}[Dinur et al.~\citep{dinur2005new}] \label{thm:ekvc-nphard}
For any $k \geq 3$ and any constant $\epsilon > 0$, there does not exists a $(k-1-\epsilon)$-approximation algorithm for \EkVC unless $\mathrm{P}=\mathrm{NP}$.
\end{theorem}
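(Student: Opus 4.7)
The plan is to follow the PCP-style approach introduced by Dinur, Guruswami, Khot, and Regev, namely a gap-preserving reduction from a suitably hard version of Label Cover to \EkVC via a Long Code based gadget. First, I would invoke the parallel repetition theorem to obtain an instance of Label Cover with arbitrarily small soundness $\eta>0$ (relative to perfect completeness) on a bipartite graph of projection constraints. Because the strongest known analyses for hypergraph vertex cover rely on extra structural properties, I would actually start from a \emph{multi-layered} Label Cover: a version with $L$ layers of variables, where every pair of consecutive (or well-separated) layers carries a projection constraint, and where the construction satisfies a weak form of expansion/smoothness that prevents the soundness proof from losing too many vertices in a ``local'' decoding step.

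Next, I would build the $k$-uniform hypergraph $H = (V,E)$ as follows. For each Label Cover variable $x$ with label set $[R]$, introduce a cloud of vertices $V_x = \{x\} \times \{0,1\}^R$; intuitively each vertex encodes one bit of a long-code truth table for the label of $x$. For each hyperedge test, the verifier chooses $k$ layers uniformly (or in a carefully designed schedule), picks one variable from each chosen layer according to the Label Cover constraint structure, then picks $k$ bit-strings $f_1,\dots,f_k \in \{0,1\}^R$ such that the corresponding assignment test has a specific combinatorial property---roughly, the $f_i$ are chosen so that any $k-1$ of them have a common $0$-coordinate that is consistent with some labeling, while all $k$ together do not. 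The hyperedge of $H$ is then the $k$-tuple of long-code vertices $(V_{x_i}, f_i)$. I would encode the independent-set / vertex-cover duality by interpreting a vertex being \emph{out} of the cover as encoding a ``1'' in the long code.

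For completeness I would verify that a satisfying Label Cover assignment naturally yields a vertex cover of relative size $1-\tfrac{1}{k-1}+o(1)$: inside each cloud $V_x$, exclude exactly the long-code entries ``dictated'' by the true label, giving an independent set of density $\tfrac{1}{k-1}$. For soundness I would argue the contrapositive: given any independent set $I\subseteq V$ of density exceeding $\tfrac{1}{k-1}+\epsilon$, a combinatorial ``list-decoding'' argument extracts, from each cloud that is ``heavy'' in $I$, a small list of candidate labels; a random assignment drawn from these lists satisfies a $\mathrm{poly}(\epsilon)$ fraction of the Label Cover constraints, contradicting the soundness $\eta$ once $\eta$ is chosen small enough. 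This is the step where the \emph{multi-layered} structure is crucial: it lets one argue that sufficiently many clouds across distinct layers are simultaneously heavy, so that the extracted labels satisfy many cross-layer projections.

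The main obstacle, and the technical heart of the argument, is the soundness analysis---specifically proving the combinatorial ``$k$-wise intersection'' lemma that converts a density of $\tfrac{1}{k-1}+\epsilon$ in the long-code clouds into a violated hyperedge with noticeable probability, without losing more than a $(1-\epsilon)$ factor. Unlike the Håstad-style Fourier analytic approach, which for hypergraph vertex cover runs into barriers at $k-1$, the DGKR proof uses a purely combinatorial / probabilistic argument about ``covering systems'' of $\{0,1\}^R$, and the careful choice of the query distribution across layers is what allows the bound $k-1$ to be attained rather than $k/2$ or similar. Putting the gap $(1, \tfrac{1}{k-1}+\epsilon)$ between density of maximum independent set in the yes and no cases together and translating to vertex cover yields the claimed $(k-1-\epsilon)$-inapproximability, completing the proof modulo the standard NP-hardness of Label Cover with small soundness.
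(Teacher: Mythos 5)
This statement is not proved in the paper at all: it is quoted verbatim from Dinur, Guruswami, Khot, and Regev and used purely as a black box to drive the approximation-preserving reduction from \EkVC to \localecc in the appendix. So there is no in-paper argument to compare yours against; the paper's ``proof'' is the citation, and reproving DGKR from scratch is neither attempted nor needed there.

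Judged on its own terms, your sketch correctly names the main ingredients of the DGKR argument (a multi-layered Label Cover with a weak expansion property, a long-code block construction, and a purely combinatorial soundness analysis in place of Fourier analysis), but it is an outline rather than a proof, and it contains a quantitative error that would cap the achievable factor well below $k-1$. In your completeness case the independent set inside each cloud has density $\tfrac{1}{k-1}$, so the vertex cover has density $1-\tfrac{1}{k-1}=\tfrac{k-2}{k-1}$; against a soundness case where every vertex cover has density $1-\epsilon$, this yields a gap of only $\tfrac{k-1}{k-2}$ (they coincide at $k=3$, which may be why the error is easy to miss, but for $k=4$ you get $3/2$ instead of $3$). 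The correct target is the reverse: a yes-instance vertex cover of weighted density about $\tfrac{1}{k-1}+\epsilon$, i.e., an independent set of density $\tfrac{k-2}{k-1}-\epsilon$, which cannot be achieved with the uniform measure on $\{0,1\}^R$ (a dictator set has density only $\tfrac12$ there); DGKR place a $p$-biased measure with $p\approx\tfrac{k-2}{k-1}$ on each block. The second substantive gap is that the entire soundness direction is delegated to an unspecified ``$k$-wise intersection'' lemma; that extremal statement about $s$-wise $t$-intersecting families under the biased measure is the real content of the theorem, and without stating and proving it (together with the layer-selection argument that recovers consistent labels across layers) the sketch establishes nothing. If you only need the theorem for the reduction in this paper, cite it; if you intend to prove it, these two points are where the work lies.
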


Due to Theorems~\ref{thm:ekvc-ugc} and \ref{thm:ekvc-nphard}, it suffices to present an approximation-preserving reduction from \EkVC to \localecc with $\blocal := k-1$.
\begin{proof}[Proof of Theorems~\ref{thm:local-inapprox-ugc} and \ref{thm:local-inapprox-nphard}]
Given a $k$-uniform hypergraph $H = (W, F)$ as an input to \EkVC, let $H' := (V, E)$ be a hypergraph defined as follows:
\begin{itemize}
\item $V := \{ v_f \mid f \in F \}$ and
\item $E := \{ e_w \mid w \in W \}$ where $e_w := \{ v_f \mid f \ni w \}$.
\end{itemize}
Let $C := \{c_w \mid w \in W\}$ be a set of $|W|$ number of distinct colors.
Let us then consider the input to \localecc where $H'$ is given as the hypergraph, the color of $e_w$ is $c_w$ for every $e_w \in E$, and the budget $\blocal$ is set to $k-1$.

For any vertex cover $S \subseteq W$ in $H$, let $\sigma_S$ be the node coloring defined as follows: for every $v_f \in V$, $\sigma_S (v_f) := \{ c_w \mid w \in f \setminus S \}$.
Note that, for every $w \in W \setminus S$, $e_w$ is satisfied by $\sigma_S$.
Moreover, since $|f| = k$ and $f \cap S \neq \emptyset$, we can see that $|\sigma_S(v_f)| \leq k - 1 = \blocal$.
This shows $\sigma_S$ is indeed a feasible node coloring whose number of mistakes is at most $|S|$.
We can therefore deduce that the minimum size of a vertex cover in the original input is at least the minimum number of mistakes in the reduced input.

For the other direction, let us now consider a feasible node coloring $\sigma$.
Observe that, for any $v_f \in V$, at least one edge in $\delta(v_f)$ must be a mistake since $|\delta(v_f)| = |f| = k = \blocal+1$ and the colors of $E$ are distinct.
This implies that, for every $f \in F$, there exists a vertex $w \in W$ such that $e_w \in E$ is a mistake due to $\sigma$ in the reduced input.
This shows that, given a feasible node coloring $\sigma$ to the reduced input, we can construct in polynomial time a feasible vertex cover in the original input whose size is the same as the number of mistakes due to $\sigma$.
Together with the above argument that the minimum number of mistakes in the reduced input is at most the minimum size of a vertex cover in the original input, this implies an approximation-preserving reduction from \EkVC to \localecc.
\end{proof}

\subsection{Bicriteria algorithm for Local ECC}\label{app:local-bicriteria}
\begin{theorem}
    For any $\epsilon \in (0,\blocal]$, there exists a $(1+\epsilon,1+\frac{1}{\blocal}\lceil \frac{\blocal}{\epsilon} \rceil -\frac{1}{\blocal})$-approximation algorithm for \localecc.
\end{theorem}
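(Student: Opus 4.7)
The plan is to modify Algorithm~\ref{alg:localecc} in a single place: replace the while-loop guard $|\chi(\delta(v)\cap L)| > b_v$ with $|\chi(\delta(v)\cap L)| > b'_v$, where $b'_v := b_v - 1 + \lceil b_v/\epsilon \rceil$. Crucially, the dual LP being used for the analysis is unchanged (it still has $b_v$ in its objective $-\sum_v b_v \alpha_v$), so weak duality still certifies against the \emph{true} LP optimum on the original budget. Setting $b_v = \blocal$ and noting $b'_v/\blocal = 1 + \frac{1}{\blocal}\lceil \blocal/\epsilon \rceil - \frac{1}{\blocal}$ already matches the claimed budget-violation factor; the condition $\epsilon \in (0,\blocal]$ ensures $b'_v \ge b_v$ so the modification is a genuine relaxation.

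First, I would check that all the structural guarantees of Lemma~\ref{lem:local-key} carry over. Dual feasibility (\eqref{enum:local-a}) holds for the same reason as before, since raising the while-loop threshold does not change how variables are raised jointly. The tightness property (\eqref{enum:local-c}) also still holds: we set $\sigma(v) := \chi(\delta(v)\cap L)$ only after the loop exits, so any mistake at $v$ must come from an edge $e$ already removed from $L$, i.e., tight. Feasibility of the output under the relaxed budget is immediate from the exit condition: $|\sigma(v)| = |\chi(\delta(v)\cap L)| \le b'_v$, which is exactly $\blocal \cdot \bigl(1 + \frac{1}{\blocal}\lceil \blocal/\epsilon \rceil - \frac{1}{\blocal}\bigr)$.

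The heart of the argument is redoing the rate computation from Section~\ref{sec:local}. In each step of the while loop for $v$, $B_v$ still grows at rate $|\chi(\delta(v)\cap L)|$ and $b_v \alpha_v$ still grows at rate $b_v$, so the upper bound on $\ALG$ and the dual objective grow at rates $|\chi(\delta(v)\cap L)|$ and $|\chi(\delta(v)\cap L)| - b_v$ respectively. Since the guard forces $|\chi(\delta(v)\cap L)| \ge b'_v + 1$, and the instantaneous ratio $\tfrac{|\chi(\delta(v)\cap L)|}{|\chi(\delta(v)\cap L)| - b_v}$ is decreasing in $|\chi(\delta(v)\cap L)|$, the worst case is $|\chi(\delta(v)\cap L)| = b'_v + 1$, giving ratio $\tfrac{b'_v + 1}{b'_v + 1 - b_v}$. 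The key arithmetic step is to verify
\[
    \frac{b'_v + 1}{b'_v + 1 - b_v} \le 1 + \epsilon \iff b'_v \ge b_v - 1 + \frac{b_v}{\epsilon},
\]
which the integer choice $b'_v = b_v - 1 + \lceil b_v/\epsilon \rceil$ clearly satisfies. Integrating this pointwise bound exactly as in Lemma~\ref{thm:local-rho} yields $\ALG \le (1+\epsilon)\cdot\OPT$.

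There is no real obstacle here beyond choosing $b'_v$ so that the integer ceiling matches the stated bound exactly; every structural component of the proof (dual feasibility, tightness of mistakes, the ratio-of-rates accumulation) is inherited directly from the analysis of Algorithm~\ref{alg:localecc}. It may be worth noting for cleanliness that when $\epsilon = \blocal$, we recover $b'_v = b_v$ and the guarantee $(1+\blocal, 1)$, recovering Theorem~\ref{thm:local-approx} as a boundary case, which is a useful sanity check.
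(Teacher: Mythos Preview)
Your proposal is correct and essentially identical to the paper's proof: the paper also replaces the while-loop guard by $|\chi(\delta(v)\cap L)| > \blocal + \tau$ with $\tau := \lceil \blocal/\epsilon\rceil - 1$, which is exactly your $b'_v$, and then bounds the ratio via the strengthened invariant $\alpha_v \le \frac{1}{\blocal+\tau+1}\sum_{e\in\delta(v)}\beta_{e,v}$ (equivalent to your rate-of-increase computation). The only cosmetic difference is that the paper phrases the core inequality through this tightened Property~\eqref{enum:local-b} rather than the pointwise rate ratio, but the arithmetic $\frac{\blocal+\tau+1}{\tau+1} = \frac{b'_v+1}{b'_v+1-b_v} \le 1+\epsilon$ is the same.
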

\begin{proof}
    Let $\tau:=\lceil \frac{\blocal}{\epsilon} \rceil -1$.
    Consider the algorithm where the condition of \textbf{while} loop of Algorithm~\ref{alg:localecc} is replaced by $|\colorset(\delta(v)\cap L)| > \blocal + \tau$.
    Let $\sigma$ be the assignment output by the modified algorithm. Observe first that the number of colors assigned to each $v$ is at most $\blocal + \tau=\blocal\cdot(1+\frac{1}{\blocal}\lceil \frac{\blocal}{\epsilon} \rceil -\frac{1}{\blocal})$.
    Observe that Properties~\eqref{enum:local-a} and~\eqref{enum:local-c} of Lemma~\ref{lem:local-key} still hold.
    Moreover, instead of Property~\eqref{enum:local-b}, it is easy to  show a stronger property that, at any moment, for all $v \in V$,
    \begin{equation} \label{eq:app-local-bic-b}
        \alpha_v \le \frac{1}{\blocal+\tau+1}\sum_{e\in \delta(v)}{\beta_{e,v}}.
    \end{equation}
    We therefore have
    \begin{align*}
        \ALG & \le \sum_{v\in V}{\sum_{e\in \delta(v)}{\beta_{e,v}}}
        \le \sum_{v\in V}{\frac{\blocal+\tau+1}{\tau+1}\left(\sum_{e\in \delta(v)}{\beta_{e,v}}-\blocal\alpha_v\right)} \\
        & = \sum_{v\in V}{\left(1+\frac{\blocal}{\tau+1}\right)\left(\sum_{e\in \delta(v)}{\beta_{e,v}}-\blocal\alpha_v\right)} \\
        & \le \left(1+\epsilon \right) \sum_{v\in V} {\left(\sum_{e\in \delta(v)}{\beta_{e,v}}-\blocal\alpha_v\right)}
        \leq (1+\epsilon) \OPT,
    \end{align*}
    where the first inequality follows from Property~\eqref{enum:local-c}, the second from Equation~\eqref{eq:app-local-bic-b}, and the last from Property~\eqref{enum:local-a}.
\end{proof}
Note that $1+\frac{1}{\blocal}\lceil \frac{\blocal}{\epsilon} \rceil -\frac{1}{\blocal} < 1 + \frac{1}{\epsilon}$.

\subsection{Discretized version of Algorithm~\ref{alg:localecc}}\label{app:local-pseudocode}
Algorithm~\ref{alg:localecc-discrete} is a discretized version of Algorithm~\ref{alg:localecc}. 
Note that the proof Lemma~\ref{thm:local-rt} is based on this discretized version.
\begin{algorithm}
\caption{Discretized primal-dual algorithm for \localecc}\label{alg:localecc-discrete}
\begin{algorithmic}
    \STATE $\ell_e \gets 0$ for all $e \in E$
    \STATE $L \gets \{e \in E \mid w_e > 0\}$
    \FOR{$v \in V$}
        \IF{$|\colorset(\delta(v)\cap L)| > b_v$}
            \STATE $\slack(c) \gets 0$ for all $c \in \colorset(\delta(v)\cap L)$
            \FOR{$e \in \delta(v)\cap L$}
                \STATE $\slack({c_e}) \gets \slack({c_e}) + (w_e - \ell_e)$
            \ENDFOR
            \STATE let $s^\star$ be the ($b_v+1$)-st largest value in the (multi)set $\{\slack(c)\}_{c\in\colorset(\delta(v)\cap L)}$
            \FOR{$c \in \colorset(\delta(v)\cap L)$}
                \STATE $\ell_e \gets \ell_e + \frac{\min\{\slack(c),s^\star\}}{\slack(c)}(w_e-\ell_e)$ for all $e \in \delta_c(v)\cap L$
                \IF{$\slack(c) \le s^\star$}
                    \STATE $L \gets L \setminus \delta_c(v)$
                \ENDIF
            \ENDFOR
            \ENDIF
        \STATE $\sigma(v) \gets \colorset(\delta(v)\cap L)$
    \ENDFOR
\end{algorithmic}
\end{algorithm}
\section{Technical details and proofs for Robust ECC deferred from Section~\ref{sec:robustglobal}}\label{app:robust}
\subsection{Primal LP}\label{app:robust-primal}
Following is the LP relaxation, where $z_v=1$ indicates that the node $v$ is removed from the hypergraph.
\begin{align*}
	\text{min } & \textstyle \sum_{e \in E}{w_e y_e} \\
	\text{s.t.\ } &\textstyle z_v+\sum_{c \in C}{x_{v,c}} \le 1, & \forall v \in V, \\
	& z_v+x_{v,c_e} + y_e \ge 1, & \forall e \in E, v \in e, \\
	& \textstyle \sum_{v\in V}{z_v} \le \brobust, & \\
	& x_{v,c} \ge 0, & \forall v \in V, c \in C, \\
	& y_e \ge 0, & \forall e \in E, \\
	& z_v \ge 0, & \forall v \in V.
\end{align*}
We note that the only difference between our LP and Crane et al.'s lies in the constraint $z_v + \sum_{e\in C} x_{v,c} \leq 1$. (The two LPs use opposite senses for the binary variable $x$, but this is not an inherent difference.) This difference turns out to be enough to reduce the integrality gap of our relaxation. See Theorem~\ref{thm:global-approx}.

\subsection{Proposed algorithm for Robust ECC}\label{app:robust-alg}
In Section~\ref{sec:robustglobal}, we sketched our algorithm for \robustecc. We present its pseudocode below.
\begin{algorithm}
    \caption{Proposed algorithm for \robustecc}\label{alg:robustecc}
\begin{algorithmic}
    \STATE $\mathbf{\alpha} \gets \mathbf{0};~\mathbf{\beta} \gets \mathbf{0};~\lambda \gets 0$
    \STATE $L \gets \{e \in E \mid w_e > 0\}$
    \STATE $R \gets \{v \in V \mid |\colorset(\delta(v)\cap L)|\ge 2\}$
    \WHILE{$|R| > \brobust$}
        \STATE increase $\lambda$ and $\alpha_v$ and $\beta_{e,v}$ for $v\in R$ and $e \in \delta(v) \cap L$ in a way that the increase rate of $\lambda$ and that of $\sum_{e \in \delta(v)\cap L}{\beta_{e,v}}-\alpha_v$ for each $v\in R$ are uniform and, for each $v \in R$, the increase rate of $\alpha_v$ and that of $\sum_{e \in \delta_c(v)\cap L}{\beta_{e,v}}$ for each $c \in \colorset(\delta(v)\cap L)$ are uniform, until there exists $e$ such that $\sum_{u \in e}{\beta_{e,u}} = w_e$
        \STATE \textbf{if} $\exists e\ \sum_{u \in e}{\beta_{e,u}} = w_e$ \textbf{then} remove all such edges from $L$
        \STATE \textbf{if} $\exists v\ |\colorset(\delta(v)\cap L)| \le 1$ \textbf{then} remove all such nodes from $R$
    \ENDWHILE
    \STATE remove $R$ from the hypergraph
    \FOR{$v \notin R$}
        \IF{$|\colorset(\delta(v)\cap L)|=1$}
		\STATE $\sigma(v) \gets c$ where $c \in \colorset(\delta(v)\cap L)$        
        \ELSE
		\STATE $\sigma(v) \gets c$ where $c$ is an arbitrary color
	    \ENDIF
    \ENDFOR
\end{algorithmic}
\end{algorithm}

\subsection{Proof of Theorem~\ref{thm:robust-approx}}\label{app:robust-thm}

We  have the following key lemma. Let us  prove only Property~\eqref{enum:rob-a},
since Properties~\eqref{enum:rob-b} and \eqref{enum:rob-c} can be seen from the same argument as the one for Lemma~\ref{lem:local-key}.
\begin{lemma} \label{lem:rob-key}
Algorithm~\ref{alg:robustecc} satisfies the following:
\begin{enumerate}[(a)]
\item \label{enum:rob-a} At any moment, $(\alpha, \beta, \lambda)$ is feasible to the dual LP.
\item \label{enum:rob-b} At any moment, for all $v \in V$, $\alpha_v \leq \frac{1}{2} \sum_{e \in \delta(v)} \beta_{e,v}$.
\item \label{enum:rob-c} At termination, every mistake $e$ under $\sigma$ is tight.
\end{enumerate}
\end{lemma}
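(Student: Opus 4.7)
The plan is to verify each of the dual constraints separately, tracking how both sides evolve throughout the algorithm's execution and confirming that no constraint is ever violated. Since the algorithm starts from the trivially feasible solution $(\mathbf{0},\mathbf{0},0)$ and only ever increases its variables, nonnegativity ($\alpha_v,\beta_{e,v},\lambda\ge 0$) is immediate. The constraint $\sum_{v\in e}\beta_{e,v}\le w_e$ is enforced by the stopping rule inside the \textbf{while}-loop: the dual-increase process pauses precisely when some edge becomes tight, after which the edge leaves $L$ and its $\beta_{e,v}$ values are frozen.

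For the constraint $\sum_{e \in \delta_c(v)} \beta_{e,v} \le \alpha_v$, I would decompose the left-hand side as $\sum_{e \in \delta_c(v) \cap L} \beta_{e,v} + \sum_{e \in \delta_c(v) \setminus L} \beta_{e,v}$ and observe that the second term is constant once its edges have left $L$. Property~(ii) says that whenever $v \in R$, $\alpha_v$ and $\sum_{e \in \delta_c(v) \cap L} \beta_{e,v}$ are increased at identical rates for every active color $c \in \colorset(\delta(v)\cap L)$; when $c \notin \colorset(\delta(v)\cap L)$, the in-$L$ sum is zero and not growing while $\alpha_v$ only increases. Thus the quantity $\alpha_v - \sum_{e \in \delta_c(v)} \beta_{e,v}$ is nondecreasing and, starting from $0$, remains nonnegative. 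An edge leaving $L$ only redistributes the decomposition between the two terms without changing the total, so feasibility is preserved at such transitions.

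The most delicate case is $\sum_{e \in \delta(v)} \beta_{e,v} - \alpha_v \le \lambda$. The key calculation: while $v \in R$, summing Property~(ii) over all $c \in \colorset(\delta(v) \cap L)$ shows that $\sum_{e \in \delta(v)} \beta_{e,v}$ grows at rate $|\colorset(\delta(v)\cap L)|\cdot r_v$, where $r_v$ denotes the common increase rate of $\alpha_v$. With the prescribed $r_v = 1/(|\colorset(\delta(v)\cap L)|-1)$, the difference $\sum_{e \in \delta(v)} \beta_{e,v} - \alpha_v$ grows at rate exactly $1$, matching the rate of $\lambda$ from Property~(i). Once $v$ leaves $R$ (which happens precisely when $|\colorset(\delta(v)\cap L)| \le 1$), both $\alpha_v$ and every $\beta_{e,v}$ with $e\in\delta(v)$ stop changing, while $\lambda$ continues to grow, so the constraint is preserved thereafter.

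The main technical obstacle is handling the transitional moments---when edges leave $L$ or vertices leave $R$---rigorously within the continuous-time description of the algorithm. I would address this by treating the execution as a finite sequence of phases with piecewise-constant increase rates, verifying feasibility inductively at the end of each phase. Within a single phase, the rate-matching arguments above ensure no constraint is violated; across phases, transitions only remove variables from the set being updated, which cannot introduce any violation.
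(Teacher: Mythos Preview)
Your treatment of Property~(a) is correct and essentially the same as the paper's (just spelled out in more detail): you verify each family of dual constraints by tracking rates and invoking the rate-matching Properties~(i) and~(ii) of the algorithm. However, the lemma has three parts, and your proposal addresses only~(a); Properties~(b) and~(c) are never mentioned.

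Property~(b) is in fact nearly contained in your $\lambda$-constraint paragraph: you already observe that while $v\in R$, the sum $\sum_{e\in\delta(v)}\beta_{e,v}$ grows at rate $|\colorset(\delta(v)\cap L)|\cdot r_v$ and $\alpha_v$ at rate $r_v$. Since $|\colorset(\delta(v)\cap L)|\ge 2$ whenever $v\in R$, and both quantities start at $0$ and freeze once $v$ leaves $R$, the bound $\alpha_v\le\tfrac12\sum_{e\in\delta(v)}\beta_{e,v}$ follows immediately; you just need to state it. Property~(c), by contrast, requires a separate argument that is absent from your proposal: if $e$ is loose at termination, then for every $v\in e\setminus R$ one has $|\colorset(\delta(v)\cap L)|\le 1$ (otherwise $v$ would still be in $R$), and since $e\in\delta(v)\cap L$ the unique color there is $c_e$, so $\sigma(v)=c_e$; hence $e$ is satisfied. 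The paper handles~(b) and~(c) by pointing back to the analogous argument for \localecc, but your write-up needs to cover them explicitly.
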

\begin{proof}[Proof of Property~\eqref{enum:rob-a}]
    Recall the two properties of the algorithm. Observe that the first set of dual constraints remain feasible due to Property~\eqref{enum:robalg-b}; the second set of constraints are satisfied since the algorithm stops increasing  dual variables as soon as it discovers a tight edge; the third set of dual constraints  are kept feasible due to Property~\eqref{enum:robalg-a}.
\end{proof}

Let $\ALG$ be the total weight of mistakes in the output of Algorithm~\ref{alg:robustecc} and $\OPT$ be the weight of an optimal solution.
\begin{lemma}\label{thm:robust-rho}
We have $\ALG \leq (2\brobust+2) \cdot \OPT$.
\end{lemma}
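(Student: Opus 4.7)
The plan is to combine Lemma~\ref{lem:rob-key} with weak LP duality via a continuous-time charging argument. By Property~\eqref{enum:rob-c}, every mistake at termination is tight, so
\[
\ALG \;\le\; \sum_{v \in V}\sum_{e \in \delta(v)}\beta_{e,v} \;=:\; P.
\]
Property~\eqref{enum:rob-a} together with weak LP duality gives $\OPT \ge D := P - \sum_{v \in V}\alpha_v - \brobust\lambda$ at termination. It therefore suffices to prove $P \le 2(\brobust+1)\,D$. Since $P(0) = D(0) = 0$, I would obtain this by a rate comparison rather than by a static inequality on the final values.

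At any instant during the \textbf{while} loop, write $k_v := |\colorset(\delta(v)\cap L)| \ge 2$ for $v \in R$. From the algorithm's specification, $\lambda$ grows at unit rate, $\alpha_v$ grows at rate $\frac{1}{k_v - 1}$, and $\sum_{e\in\delta(v)}\beta_{e,v}$ grows at rate $\frac{k_v}{k_v-1}$ (since every $\beta_{e,v}$ is frozen once $e$ leaves $L$). Summing over $v\in R$, the $\frac{1}{k_v-1}$ contributions cancel cleanly and I obtain
\[
\frac{dP}{dt} \;=\; \sum_{v \in R}\frac{k_v}{k_v-1}, \qquad \frac{dD}{dt} \;=\; |R| - \brobust.
\]

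Since $k_v \ge 2$ makes each summand in $\frac{dP}{dt}$ at most $2$, I get $\frac{dP}{dt} \le 2|R|$, while the loop condition forces $|R| \ge \brobust+1$. The inequality $\frac{dP}{dt} \le 2(\brobust+1)\,\frac{dD}{dt}$ then reduces to $2|R| \le 2(\brobust+1)(|R| - \brobust)$, i.e., $\brobust(|R| - \brobust - 1) \ge 0$, which is immediate. Integrating from $0$ to termination gives $P \le 2(\brobust+1)\,D$, hence $\ALG \le 2(\brobust+1)\,\OPT$.

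The step I expect to be the main obstacle is realizing that a purely static argument does not suffice. Property~\eqref{enum:rob-b} alone yields $\sum_v \alpha_v \le P/2$, from which one only obtains $D \ge P/2 - \brobust\lambda$; with no further control on $\lambda$ this bound can even turn negative for large $\brobust$. The loop invariant $|R| > \brobust$ becomes usable only after one passes to the instantaneous rates, where $|R|$ enters $\frac{dD}{dt}$ and precisely balances the $\frac{k_v}{k_v-1}\le 2$ factor in $\frac{dP}{dt}$; that is the essential insight that turns the argument through.
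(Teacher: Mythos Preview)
Your proof is correct but follows a different route from the paper. You argue by \emph{rate comparison}: at every instant $\frac{dP}{dt}=\sum_{v\in R}\frac{k_v}{k_v-1}\le 2|R|$ while $\frac{dD}{dt}=|R|-\brobust$, and the loop invariant $|R|\ge\brobust+1$ gives $\frac{dP}{dt}\le 2(\brobust+1)\frac{dD}{dt}$; integrating yields the bound. This mirrors the paper's own analysis of \localecc in Section~\ref{sec:local} and is arguably the cleaner, more uniform argument. The paper, by contrast, works \emph{statically} at termination: it singles out the set $R':=\{v:\sum_{e\in\delta(v)}\beta_{e,v}-\alpha_v=\lambda\}$, notes $|R'|>\brobust$, fixes $R''\subseteq R'$ with $|R''|=\brobust$ and a witness $w\in R'\setminus R''$, and derives two separate lower bounds on $\OPT$ (one handling $\sum_{v\in R''}\sum_e\beta_{e,v}$ via $w$, the other handling $\sum_{v\notin R''}\sum_e\beta_{e,v}$) before adding them. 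So your closing remark is slightly off: a static argument \emph{does} go through, just not the naive one using only Property~\eqref{enum:rob-b}; the paper's extra ingredient is precisely the witness $w$, which plays the same role that your loop invariant $|R|\ge\brobust+1$ plays in the dynamic picture.
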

\begin{proof}
Observe that, if $|R| \leq \brobust$ from the very beginning of the algorithm, the algorithm immediately terminates and incurs no weight. Let us thus assume  that $|R| > \brobust$ at the beginning.

Consider the timepoint when the algorithm terminates. Let $R_0$ denote the value of $R$ at termination. Let $R':=\{v \in V \mid \sum_{e \in \delta(v)}{\beta_{e,v}} - \alpha_v = \lambda\}$. We claim that $R_0 \subsetneq R'$ and $|R'|>\brobust$. (\emph{Proof.} Right before the algorithm terminates, it removes some nodes from  $R$. Consider the moment right before this removal. At this moment, every node $v$ in $R$ satisfies $\sum_{e \in \delta(v)}{\beta_{e,v}} - \alpha_v = \lambda$ and therefore is in $R'$. Note that $R$ contains more than $\brobust$ vertices at this moment, since otherwise the algorithm would have terminated earlier. Note that $R_0$ is the set resulting from the removal.)
Let $R''$ be any set such that $R_0 \subseteq R'' \subsetneq R'$ with $|R''| = \brobust$, and let $w$ denote an arbitrary node in $R' \setminus R''$. We can then bound $\OPT$ from below as follows:
\begin{align}
\OPT & \geq  \sum_{e \in E}{\sum_{v \in e}{\beta_{e,v}}}-\sum_{v\in V}{\alpha_v}-\lambda \brobust
 = \sum_{e \in E}{\sum_{v \in e}{\beta_{e,v}}}-\sum_{v\in V}{\alpha_v}-\sum_{v \in R''}{\Big(\sum_{e \in \delta(v)}{\beta_{e,v}}-\alpha_v\Big)} \nonumber \\
& = \sum_{v \in V \setminus R''}{\Big(\sum_{e \in \delta(v)}{\beta_{e,v}}-\alpha_v\Big)} \label{eq:rob-lb-1} \\
& \geq \frac{1}{2} \sum_{v \in V \setminus R''} \sum_{e \in \delta(v)} \beta_{e,v }, \label{eq:rob-lb-2}
\end{align}
where the first inequality is due to Property~\eqref{enum:rob-a} and the second inequality is due to Property~\eqref{enum:rob-b}. Moreover, since $w \in R' \setminus R''$ and $|R''| = \brobust$, we can find another lower bound on $\OPT$ from~\eqref{eq:rob-lb-1}:
\begin{align}
\OPT & \geq \sum_{v \in V \setminus R''}{\Big(\sum_{e \in \delta(v)}{\beta_{e,v}}-\alpha_v\Big)}
\ge \sum_{e \in \delta(w)}{\beta_{e,w}}-\alpha_w  
=\frac{1}{\brobust}\sum_{v \in R''}{\Big(\sum_{e \in \delta(v)}{\beta_{e,v}}-\alpha_v\Big)} \nonumber \\
& \geq \frac{1}{2\brobust} \sum_{v \in R''} \sum_{e \in \delta(v)} \beta_{e, v}, \label{eq:rob-lb-3}
\end{align}
where the equality follows from the fact that $w \in R'$ and the last inequality is again due to Property~\eqref{enum:rob-b}. Therefore, by Property~\eqref{enum:rob-c}, we have
\begin{align}
\ALG \leq \sum_{v \in V}\sum_{e \in \delta(v)} \beta_{e, v} 
 = \sum_{v \in R''}\sum_{e \in \delta(v)} \beta_{e, v} + \sum_{v \in V \setminus R''}\sum_{e \in \delta(v)} \beta_{e, v} 
  \leq (2\brobust + 2) \cdot \OPT, \label{eq:rob-bound}
\end{align}
where the last inequality follows from~\eqref{eq:rob-lb-2} and~\eqref{eq:rob-lb-3}.
Note that, if $\brobust=0$, \eqref{eq:rob-bound} immediately follows from~\eqref{eq:rob-lb-2} without~\eqref{eq:rob-lb-3}.
\end{proof}

Lemma~\ref{thm:robustglobal-rt} in Appendix~\ref{app:global-thm} shows that Algorithm~\ref{alg:robustecc} can be implemented to run in $O(|E|\sum_{v\in V}{d_v})$ time.

\robustApprox*
\begin{proof}
    Immediate from Lemmas~\ref{thm:robust-rho} and~\ref{thm:robustglobal-rt}.
\end{proof}

\subsection{Proof of Theorem~\ref{thm:robust-ig}}\label{app:robust-ig}
\robustIG*
\begin{proof}
    Consider a hypergraph $H=(V=\{v_1,\cdots,v_{\brobust+1}\},E=\{e_1, e_2\})$ where $e_1 = e_2 = V$, $w_{e_1}=w_{e_2}=1$, and $c_{e_1}\neq c_{e_2}$. Any integral solution incurs at least $1$ since at least one node should remain in the hypergraph and at least one edge cannot be satisfied. However, consider the solution given by $z_v=\frac{\brobust}{\brobust+1}$, $x_{v,c_{e_1}}=x_{v,c_{e_2}}=\frac{1}{2(\brobust+1)}$ for all $v\in V$ and $y_{e_1}=y_{e_2}=\frac{1}{2(\brobust+1)}$. This solution is feasible and the cost is $\frac{1}{\brobust+1}$.
\end{proof}

\subsection{Bicriteria algorithm for Robust ECC}\label{app:robust-bicriteria}
\begin{theorem}
    Suppose that $b\geq 1$. For any $\epsilon \in (0,2\brobust]$, there exists a $(2+\epsilon,1+\frac{1}{\brobust}\lceil \frac{2\brobust}{\epsilon} \rceil -\frac{1}{\brobust})$-approximation algorithm for \robustecc.
\end{theorem}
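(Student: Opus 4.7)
The plan is to mirror the bicriteria argument used for \localecc in Appendix~\ref{app:local-bicriteria}: I would modify Algorithm~\ref{alg:robustecc} by replacing the termination condition of the \textbf{while} loop with $|R| > \brobust + \tau$, where $\tau := \lceil \tfrac{2\brobust}{\epsilon} \rceil - 1$. Since $\epsilon \in (0, 2\brobust]$, we have $\tau \geq 0$, so the modification is well-defined. By construction the algorithm removes at most $\brobust + \tau = \brobust\bigl(1 + \tfrac{1}{\brobust}\lceil \tfrac{2\brobust}{\epsilon} \rceil - \tfrac{1}{\brobust}\bigr)$ nodes, which gives exactly the desired budget-violation factor.

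Next I would verify that Lemma~\ref{lem:rob-key} survives the modification. The dual update rules (properties~\eqref{enum:robalg-a} and~\eqref{enum:robalg-b}) and the rule for dropping tight edges from $L$ are unchanged, so Properties~\eqref{enum:rob-a} and~\eqref{enum:rob-b} hold verbatim; Property~\eqref{enum:rob-c} also goes through because the final coloring is again defined on $V \setminus R$ from the loose edges.

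The heart of the argument is reworking the bound from Lemma~\ref{thm:robust-rho}. Right before the last iteration the set $R$ contained more than $\brobust + \tau$ vertices, and every such vertex satisfies $\sum_{e \in \delta(v)}\beta_{e,v} - \alpha_v = \lambda$, so the set $R' := \{v : \sum_{e\in\delta(v)}\beta_{e,v} - \alpha_v = \lambda\}$ has $|R'| \geq \brobust + \tau + 1$. I would pick any $R'' \subseteq R'$ with $|R''| = \brobust$ (so that $\sum_{v \in R''}(\sum_e \beta_{e,v} - \alpha_v) = \lambda \brobust$ cancels the $\lambda \brobust$ term in the dual objective, exactly as in the proof of Lemma~\ref{thm:robust-rho}). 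This gives the first lower bound
\[
\OPT \ \geq\ \sum_{v \in V \setminus R''}\Bigl(\sum_{e \in \delta(v)} \beta_{e,v} - \alpha_v\Bigr)\ \geq\ \tfrac{1}{2}\sum_{v \in V \setminus R''} \sum_{e \in \delta(v)} \beta_{e,v}
\]
using Property~\eqref{enum:rob-b}. Restricting the middle sum to $R' \setminus R''$, which has at least $\tau + 1$ elements all contributing exactly $\lambda$, yields the second lower bound $\OPT \geq (\tau + 1)\lambda$. Combining this with $\sum_{e \in \delta(v)} \beta_{e,v} \leq 2\lambda$ for each $v \in R''$ (again from Property~\eqref{enum:rob-b} and the definition of $R'$) gives $\sum_{v \in R''}\sum_{e \in \delta(v)} \beta_{e,v} \leq 2\brobust \lambda \leq \tfrac{2\brobust}{\tau + 1}\OPT$. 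Summing the two contributions and applying Property~\eqref{enum:rob-c},
\[
\ALG\ \leq\ \sum_{v \in V}\sum_{e \in \delta(v)}\beta_{e,v}\ \leq\ \Bigl(2 + \tfrac{2\brobust}{\tau + 1}\Bigr)\OPT\ \leq\ (2 + \epsilon)\OPT,
\]
where the last inequality uses $\tau + 1 \geq \tfrac{2\brobust}{\epsilon}$.

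The main subtlety, and the step I would be most careful with, is the choice of $R''$: unlike in the proof of Lemma~\ref{thm:robust-rho}, we no longer have $R_0 \subseteq R''$, so we should not use $R_0$ at all in the analysis and instead only rely on $R' \supseteq R_0$ together with the strict inequality $|R'| > \brobust + \tau$ that follows from the fact that the algorithm did one more dual-increase step before stopping. Once this is set up correctly, the two lower bounds on $\OPT$ combine cleanly and the routine calculation above yields the claimed ratio.
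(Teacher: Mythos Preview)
Your proof is correct and follows the same overall strategy as the paper: relax the termination threshold of Algorithm~\ref{alg:robustecc} to $|R| > \brobust + \tau$, observe that Lemma~\ref{lem:rob-key} is unaffected, and then split $\sum_{v}\sum_{e}\beta_{e,v}$ into an $R''$-part and a $(V\setminus R'')$-part.

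The one noteworthy difference is in the decomposition. The paper chooses $|R''| = \brobust + \tau$, so that $\lambda\brobust = \tfrac{\brobust}{\brobust+\tau}\sum_{v\in R''}(\sum_e\beta_{e,v}-\alpha_v)$ cancels only partially, leaving a convex combination of the two sums; it then uses a single witness $w\in R'\setminus R''$ to compare them. You instead take $|R''| = \brobust$, which cancels $\lambda\brobust$ exactly (as in Lemma~\ref{thm:robust-rho}) and leaves at least $\tau+1$ vertices of $R'$ outside $R''$, each contributing $\lambda$; this yields $\OPT \geq (\tau+1)\lambda$ directly. Combined with $\sum_e\beta_{e,v}\le 2\lambda$ for $v\in R''$, your route avoids the fractional bookkeeping and is arguably slightly cleaner. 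You are also right that the inclusion $R_0 \subseteq R''$ plays no role in either argument; only $R''\subseteq R'$ and the size of $R'\setminus R''$ matter.
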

\begin{proof}
    Let $\tau:=\lceil \frac{2\brobust}{\epsilon} \rceil -1$. Consider the algorithm where the condition of the \textbf{while} loop in Algorithm~\ref{alg:robustecc} is replaced by $|R| > \brobust + \tau$. 
    It is clear that the number of removals is at most $\brobust+\tau = \brobust\cdot(1+\frac{1}{\brobust}\lceil \frac{2\brobust}{\epsilon} \rceil -\frac{1}{\brobust})$. 
    Note also that the modified algorithm  satisfies all the properties of Lemma~\ref{lem:rob-key}.
    
    We basically follow the proof of Lemma~\ref{thm:robust-rho}. Let $R_0$ be the set of nodes that are removed from the hypergraph $H$. Observe that $|R_0|\le \brobust+\tau$ from the construction. Let $\sigma$ be the color assignment of $V\setminus R_0$ output by the algorithm and let $E_m$ be the set of mistakes under $\sigma$. We have
\begin{equation}\label{eq:robustecc-bi-alg-cost}    
    \sum_{e\in E_m}{w_e} \leq \sum_{v \in V} \sum_{e \in \delta(v)} \beta_{e, v} \le 2 \sum_{v\in V}{\left(\sum_{e\in \delta(v)}{\beta_{e,v}-\alpha_v}\right)},
\end{equation}
where the first inequality follows from Property~\eqref{enum:rob-c} and the second from Property~\eqref{enum:rob-b}.
Let $R':=\{v \in V \mid \sum_{e \in \delta(v)}{\beta_{e,v}} - \alpha_v = \lambda\}$. Observe that $R_0 \subsetneq R'$ and $|R'|>\brobust+\tau$. Let $R''$ be a subset of $R'$ such that $R \subseteq R''$ with $|R''| = \brobust+\tau$. We then have
\begin{align}
    \OPT & \ge \sum_{e \in E}{\sum_{v \in e}{\beta_{e,v}}}-\sum_{v\in V}{\alpha_v}-\lambda \brobust \nonumber \\
    & = \sum_{v \in V}{\left(\sum_{e \in \delta(v)}{\beta_{e,v}}-\alpha_v\right)}-\frac{\brobust}{\brobust+\tau}\sum_{v \in R''}{\left(\sum_{e \in \delta(v)}{\beta_{e,v}}-\alpha_v\right)}  \label{eq:robustecc-bi-dual-1}\\
    & = \sum_{v \in V \setminus R''}{\left(\sum_{e \in \delta(v)}{\beta_{e,v}}-\alpha_v\right)}+\frac{\tau}{\brobust+\tau}\sum_{v \in R''}{\left(\sum_{e \in \delta(v)}{\beta_{e,v}}-\alpha_v\right)}.\label{eq:robustecc-bi-dual-2}
\end{align}
Let $w$ denote any node in $R' \setminus R''$. We give a lower bound on the first term of~(\ref{eq:robustecc-bi-dual-2}) as follows:
\begin{align}
    \sum_{v \in V \setminus R''}{\left(\sum_{e \in \delta(v)}{\beta_{e,v}}-\alpha_v\right)} \ge \sum_{e \in \delta(w)}{\beta_{e,w}}-\alpha_w =\frac{1}{\brobust+\tau}\sum_{v \in R''}{\left(\sum_{e \in \delta(v)}{\beta_{e,v}}-\alpha_v\right)}. \label{eq:robustecc-bi-dual-3}
\end{align}
Therefore by plugging~(\ref{eq:robustecc-bi-dual-3}) into~(\ref{eq:robustecc-bi-dual-2}), we have
\begin{align}
    \OPT \ge \frac{\tau+1}{\brobust+\tau}\sum_{v \in R''}{\left(\sum_{e \in \delta(v)}{\beta_{e,v}}-\alpha_v\right)}. \label{eq:robustecc-bi-dual-4}
\end{align}
We then have
\begin{align*}
    & \sum_{v\in V}\left({\sum_{e \in \delta(v)}{\beta_{e,v}} - \alpha_v} \right) \\
    & = \left[ \sum_{v\in V}\left({\sum_{e \in \delta(v)}{\beta_{e,v}} - \alpha_v} \right) -\frac{\brobust}{\brobust+\tau}\sum_{v \in R''}{\left(\sum_{e \in \delta(v)}{\beta_{e,v}}-\alpha_v\right)} \right] +\frac{\brobust}{\brobust+\tau}\sum_{v \in R''}{\left(\sum_{e \in \delta(v)}{\beta_{e,v}}-\alpha_v\right)} \\
    & \le \left(1+\frac{\brobust}{\tau+1}\right)\OPT \le \left(1+\frac{\epsilon}{2} \right)\OPT,
\end{align*}
where the first inequality follows from~\eqref{eq:robustecc-bi-dual-1} and~\eqref{eq:robustecc-bi-dual-4}.
Combining this inequality with~\eqref{eq:robustecc-bi-alg-cost} proves the theorem.
\end{proof}

Recall that Crane et al.~\cite{crane2024overlapping} gave an LP-rounding bicriteria $(2+\epsilon,2+\frac{4}{\epsilon})$-approximation algorithm for \robustecc for $\epsilon >0$. Note that this algorithm provides an better performance guarantee.

\subsection{Discretized version of Algorithm~\ref{alg:robustecc}}\label{app:robust-pseudocode}
Algorithm~\ref{alg:robustecc-discrete} is a discretized version of Algorithm~\ref{alg:robustecc}.
Note that the proof Lemma~\ref{thm:robustglobal-rt} is based on this discretized version.
\begin{algorithm}
\caption{Discretized primal-dual algorithm for \robustecc}\label{alg:robustecc-discrete}
\begin{algorithmic}
    \STATE $\ell_e \gets 0$ for all $e \in E$
    \STATE $L \gets \{e \in E \mid w_e > 0\}$
    \STATE $R \gets \{v \in V \mid |\colorset(\delta(v)\cap L)|\ge 2\}$
    \WHILE{$|R| > \brobust$}
        \STATE $\rate(e) \gets 0$ for all $e \in L$
        \FOR{$v \in R$}
            \FOR{$c \in \colorset(\delta(v)\cap L)$}
                \STATE $\rate(e) \gets \rate(e) + \frac{1}{|\colorset(\delta(v)\cap L)|-1}\cdot\frac{1}{|\delta_c(v)\cap L|}$ for all $e \in \delta_c(v)\cap L$
            \ENDFOR
        \ENDFOR
        \STATE $\inctime(e) \gets \frac{w_e-\ell_e}{\rate(e)}$ for all $e \in L$
        \STATE $t^\star \gets \min_{e\in L}{\inctime(e)}$
        \FOR{$e \in L$}
        \STATE $\ell_e \gets \ell_e + t^\star \cdot \rate(e)$
            \IF {$\ell_e = w_e$}
                \STATE $L \gets L \setminus \{e\}$
                \STATE remove all $v \in e$ from $R$ such that $|\colorset(\delta(v)\cap L)|\le 1$
            \ENDIF
        \ENDFOR
    \ENDWHILE
    \STATE remove $R$ from the hypergraph
    \FOR{$v \notin R$}
        \IF{$|\colorset(\delta(v)\cap L)|=1$}
		    \STATE $\sigma(v) \gets c$ where $c \in \colorset(\delta(v)\cap L)$        
        \ELSE
		    \STATE $\sigma(v) \gets c$ where $c$ is an arbitrary color
	    \ENDIF
    \ENDFOR
\end{algorithmic}
\end{algorithm}
\section{Technical details and proofs for Global ECC deferred from Section~\ref{sec:robustglobal}}\label{app:global}

\subsection{Primal and dual LP}
Following is the LP relaxation, where $z_v \in \mathbb{Z}_{\ge 0}$ indicates the number of additional colors budget assigned to $v$, i.e., $z_v + 1$ number of colors is assigned to $v$.
\begin{align*}
	\text{min } & \textstyle \sum_{e \in E}{w_e y_e} \\
	\text{s.t.\ } &\textstyle \sum_{c \in C}{x_{v,c}} \le z_v + 1, & \forall v \in V, \\
	& x_{v,c_e} + y_e \ge 1, & \forall e \in E, v \in e, \\
	& \textstyle \sum_{v\in V}{z_v} \le \bglobal, & \\
	& x_{v,c} \ge 0, & \forall v \in V, c \in C, \\
	& y_e \ge 0, & \forall e \in E, \\
	& z_v \ge 0, & \forall v \in V.
\end{align*}
Following is the dual of this LP.
\begin{align*}
        \text{max } &\textstyle \sum_{e \in E, v \in e}{\beta_{e,v}}-\sum_{v\in V}{\alpha_v}-\lambda \bglobal \\
        \text{s.t.\ } &\textstyle \sum_{e\in \delta_c(v)}{\beta_{e,v}} \le \alpha_v, & \forall v \in V, c \in C, \\
                     &\textstyle \sum_{v \in e}{\beta_{e,v}} \le w_e, & \forall e \in E, \\
                     &\alpha_v \le \lambda, & \forall v \in V, \\
                     & \alpha_v \ge 0, & \forall v \in V, \\
                     & \beta_{e,v} \ge 0, & \forall e \in E, v \in e, \\
                     & \lambda \ge 0.
\end{align*}

\subsection{Proposed algorithm for Global ECC}\label{app:global-alg}
Let us now present our algorithm. Similarly, we consider the problem where each edge has an associated weight $w_e$.

The algorithm shares many similarity with the algorithm for \robustecc.
It maintains a dual feasible solution $(\alpha, \beta, \lambda)$, starting from $(\mathbf{0}, \mathbf{0}, 0)$, the set $L$ of loose edges, and the set $R$ of nodes with at least two incident loose edges of distinct colors. It also simultaneously increases $\lambda$ and $(\alpha, \beta)$ associated with the nodes in $R$. Intuitively, $R$ is the set of nodes that we will assign (possibly) more than one color.

As the dual formulation differs, the way the algorithm increases the dual solution slightly varies. The following properties will hold:
\begin{enumerate}[(i)]
\item \label{enum:gloalg-a} $\lambda$ and $\alpha_v$ for each $v \in R$ increas at the same rate.
\item \label{enum:gloalg-b} For all $v\in R$, $\alpha_v$ and $\sum_{e \in \delta_c(v)\cap L}{\beta_{e,v}}$ for each $c \in \colorset(\delta(v) \cap L)$ increase at the same rate.
\end{enumerate}
Observe that these properties can be easily ensured. 

The algorithm increases the dual solution until $\sum_{v\in R}(|\colorset(\delta(v)\cap L)|-1)$ becomes at most $\bglobal$, and once the algorithm reaches this point, it assigns every node $v \in V$ the color in $\colorset(\delta(v) \cap L)$. It is clear that the returned node coloring is feasible. As before, if $\colorset(\delta(v) \cap L) = \emptyset$, an arbitrary color can be assigned without affecting the theoretical guarantee; in practical implementation, we could employ heuristics for marginal improvement.
See Algorithm~\ref{alg:globalecc} for a detailed pseudocode. The full discretized version of the algorithm is presented in Appendix~\ref{app:global-pseudocode}.

\begin{algorithm}
\caption{Proposed algorithm for \globalecc}\label{alg:globalecc}
\begin{algorithmic}
    \STATE $\mathbf{\alpha} \gets \mathbf{0};~\mathbf{\beta} \gets \mathbf{0};~\lambda \gets 0$
    \STATE $L \gets \{e \in E \mid w_e > 0\}$
    \STATE $R \gets \{v \in V \mid |\colorset(\delta(v)\cap L)|\ge 2\}$
    \WHILE{$\sum_{v\in R}(|\colorset(\delta(v)\cap L)|-1) > \bglobal$}
        \STATE increase $\lambda$ and $\alpha_v$ and $\beta_{e,v}$ for $v\in R$ and $e \in \delta(v) \cap L$ in a way that the increase rate of $\lambda$ and that of $\alpha_v$ for each $v\in R$ are uniform and, for each $v \in R$, the increase rate of $\alpha_v$ and that of $\sum_{e \in \delta_c(v)\cap L}{\beta_{e,v}}$ for each $c \in \colorset(\delta(v)\cap L)$ are uniform, until there exists $e$ such that $\sum_{u \in e}{\beta_{e,u}} = w_e$
        \STATE \textbf{if} $\exists e\ \sum_{u \in e}{\beta_{e,u}} = w_e$ \textbf{then} remove all such edges from $L$
        \STATE \textbf{if} $\exists v\ |\colorset(\delta(v)\cap L)| \le 1$ \textbf{then} remove all such nodes from $R$
    \ENDWHILE
    
    \FOR{$v \in V$}
        \IF{$|\colorset(\delta(v)\cap L)|\ge 1$}
		\STATE $\sigma(v) \gets \colorset(\delta(v)\cap L)$        
        \ELSE
		\STATE $\sigma(v) \gets \{c\}$ where $c$ is an arbitrary color
	    \ENDIF
    \ENDFOR
\end{algorithmic}
\end{algorithm}

\subsection{Proof of Theorem~\ref{thm:global-approx}}\label{app:global-thm}
We have the following key lemma. Let us prove only Property~\eqref{enum:glo-a}, since Properties~\eqref{enum:glo-b} through \eqref{enum:glo-d} can be seen from the same argument as the one for Lemma~\ref{lem:local-key}.
\begin{lemma} \label{lem:glo-key}
Algorithm~\ref{alg:globalecc} satisfies the following:
\begin{enumerate}[(a)]
\item \label{enum:glo-a} At any moment, $(\alpha, \beta, \lambda)$ is feasible to the dual LP.
\item \label{enum:glo-b} At any moment, for all $v \in V$, $\alpha_v \leq \frac{1}{2} \sum_{e \in \delta(v)} \beta_{e,v}$.
\item \label{enum:glo-c} At any moment, for all $v \in R$, $\alpha_v \leq \frac{1}{|\colorset(\delta(v)\cap L)|} \sum_{e \in \delta(v)} \beta_{e,v}$.
\item \label{enum:glo-d} At termination, every mistake $e$ under $\sigma$ is tight.
\end{enumerate}
\end{lemma}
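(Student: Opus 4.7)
The plan is to verify, by induction on the continuous time parameter, that every dual constraint remains satisfied throughout the execution. The initial dual solution $(\alpha,\beta,\lambda)=(\mathbf{0},\mathbf{0},0)$ is trivially feasible, and since Algorithm~\ref{alg:globalecc} never decreases any dual variable, all nonnegativity constraints are preserved automatically. The task thus reduces to checking the three non-trivial classes of dual constraints against the algorithm's continuous update rule, which is governed by properties~\eqref{enum:gloalg-a} and~\eqref{enum:gloalg-b}.

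For the edge constraints $\sum_{v\in e}\beta_{e,v}\le w_e$, I would reuse the argument from Lemma~\ref{lem:local-key}\eqref{enum:local-a}: the algorithm halts the instant a loose edge becomes tight and immediately removes all newly tight edges from $L$, so $\beta_{e,v}$ is never increased once $e\notin L$. For the constraints $\alpha_v\le\lambda$, property~\eqref{enum:gloalg-a} guarantees that $\alpha_v$ and $\lambda$ grow at the same rate whenever $v\in R$, preserving any prior slack; once $v$ leaves $R$, $\alpha_v$ is frozen while $\lambda$ may continue to grow, so feasibility is maintained.

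The subtler case is $\sum_{e\in\delta_c(v)}\beta_{e,v}\le\alpha_v$, because the index set $\delta_c(v)$ is strictly larger than $\delta_c(v)\cap L$, which is what property~\eqref{enum:gloalg-b} directly controls. I would split into three subcases: (i) if $v\notin R$, both sides are frozen and the invariant persists; (ii) if $v\in R$ and $c\in\colorset(\delta(v)\cap L)$, then property~\eqref{enum:gloalg-b} makes $\alpha_v$ and $\sum_{e\in\delta_c(v)\cap L}\beta_{e,v}$ grow at identical rates while $\beta_{e,v}$ for $e\in\delta_c(v)\setminus L$ remains frozen, so the two sides evolve in lockstep; and (iii) if $v\in R$ but $c\notin\colorset(\delta(v)\cap L)$, then $\delta_c(v)\cap L=\emptyset$, so the LHS is frozen while $\alpha_v$ can only grow.

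The main (and only minor) obstacle is the ``handoff'' between these subcases at event times---in particular, the instant when the last edge in $\delta_c(v)\cap L$ becomes tight and $c$ drops out of $\colorset(\delta(v)\cap L)$. Just before this moment, subcase (ii) has been maintaining $\sum_{e\in\delta_c(v)}\beta_{e,v}\le\alpha_v$ with the two sides evolving in lockstep; just after, subcase (iii) takes over and only widens the slack, so the invariant carries across the transition. An analogous check handles the moment a vertex $v$ leaves $R$, after which every dual variable associated with $v$ ceases to be updated and the relevant constraints can only gain slack.
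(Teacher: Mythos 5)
Your treatment of part~\eqref{enum:glo-a} is correct and in fact more careful than the paper's own one-sentence justification: the case split on the constraints $\sum_{e\in\delta_c(v)}\beta_{e,v}\le\alpha_v$ (distinguishing $\delta_c(v)$ from $\delta_c(v)\cap L$, and checking the handoff when $c$ drops out of $\colorset(\delta(v)\cap L)$ or when $v$ leaves $R$) is exactly the right way to make property~\eqref{enum:gloalg-b} do the work, and your arguments for the edge constraints and for $\alpha_v\le\lambda$ match the paper's. So far so good.

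The genuine gap is that the lemma has four parts and you have proved only one. Properties~\eqref{enum:glo-b} and~\eqref{enum:glo-c} are \emph{not} dual constraints and do not follow from feasibility; they are additional invariants comparing $\alpha_v$ to the \emph{total} mass $\sum_{e\in\delta(v)}\beta_{e,v}$, and they are what the approximation-ratio proof (Lemma~\ref{thm:global-rho}) actually consumes. To get them you need a rate argument: while $v\in R$, property~\eqref{enum:gloalg-b} makes $\sum_{e\in\delta(v)}\beta_{e,v}$ grow at $|\colorset(\delta(v)\cap L)|$ times the rate of $\alpha_v$; since $|\colorset(\delta(v)\cap L)|\ge 2$ for $v\in R$ and this quantity only decreases over time, integrating gives $\alpha_v\le\frac{1}{|\colorset(\delta(v)\cap L)|}\sum_{e\in\delta(v)}\beta_{e,v}$ at every moment (hence also the factor $\tfrac12$ bound, which persists after $v$ leaves $R$ because all quantities for $v$ are then frozen). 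Property~\eqref{enum:glo-d} is a statement about the primal output, not the dual: if $e$ is still loose at termination then for every $v\in e$ we have $c_e\in\colorset(\delta(v)\cap L)\subseteq\sigma(v)$, so $e$ is satisfied; contrapositively every mistake is tight. The paper dispatches \eqref{enum:glo-b}--\eqref{enum:glo-d} by pointing to the analogous argument in Lemma~\ref{lem:local-key}, but your write-up neither proves them nor acknowledges they remain to be shown, so as it stands the proposal establishes only a quarter of the claim.
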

\begin{proof}[Proof of Property~\eqref{enum:glo-a}]
    Recall the two properties of the algorithm. Observe that the first set of dual constraints remain feasible due to Property~\eqref{enum:gloalg-b}; the second set of constraints are satisfied since the algorithm stops increasing dual variables as soon as it discovers a tight edge; the third set of dual constraints are kept feasible due to Property~\eqref{enum:gloalg-a}.
\end{proof}

Given $L \subseteq E$, let $\localslack_L(v):=|\colorset(\delta(v)\cap L)|$ for $v \in V$.
Let $\budget_L(S):=\sum_{v\in R}(\localslack_L(v)-1)$ for $S\subseteq V$.
Let $\ALG$ be the total weight of mistakes in the output of Algorithm~\ref{alg:globalecc} and $\OPT$ be the weight of an optimal solution.
\begin{lemma}\label{thm:global-rho}
We have $\ALG \leq (2\bglobal+2) \cdot \OPT$.
\end{lemma}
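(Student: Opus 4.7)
The plan is to adapt the proof of Lemma~\ref{thm:robust-rho}: using the weak-duality lower bound $\OPT \ge \sum_v X_v - \sum_v \alpha_v - \lambda \bglobal$, where $X_v := \sum_{e \in \delta(v)} \beta_{e,v}$, combined with $\ALG \le \sum_v X_v$ (which follows from Property~\eqref{enum:glo-d}), I will partition $V$ into a distinguished set $R^\star$ and its complement, obtaining a factor $\tfrac{1}{2}$ on $V \setminus R^\star$ (via Property~\eqref{enum:glo-b}) and a strictly better factor $\tfrac{1}{2(\bglobal+1)}$ on $R^\star$ (via Property~\eqref{enum:glo-c}); summing and rescaling will give $\ALG \le 2(\bglobal+1)\,\OPT$.

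If the \textbf{while} loop of Algorithm~\ref{alg:globalecc} is never entered, then $\ALG = 0$ and the claim is trivial. Otherwise let $R^\star$ (resp.\ $L^\star$) denote the value of $R$ (resp.\ $L$) in the last iteration, measured just after the dual variables stop increasing but before any tight edge is removed from $L$. At that moment the dual variables are already at their final values, the \textbf{while} condition still holds, and because $\budget_{L^\star}(R^\star) = \sum_{v \in R^\star}(\localslack_{L^\star}(v) - 1)$ is integer-valued we in fact have $\budget_{L^\star}(R^\star) \ge \bglobal + 1$. Moreover, Property~\eqref{enum:gloalg-a} implies $\alpha_v = \lambda$ for every $v \in R^\star$.

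Applying Property~\eqref{enum:glo-c} at this pre-removal state gives $\alpha_v \cdot \localslack_{L^\star}(v) \le X_v$, i.e., $\alpha_v(\localslack_{L^\star}(v) - 1) \le X_v - \alpha_v$ for each $v \in R^\star$. Summing over $R^\star$ and using $\alpha_v = \lambda$ produces
\[
\lambda(\bglobal + 1) \le \lambda \cdot \budget_{L^\star}(R^\star) = \sum_{v \in R^\star} \alpha_v(\localslack_{L^\star}(v) - 1) \le \sum_{v \in R^\star}(X_v - \alpha_v).
\]
Substituting into the weak-duality bound yields
\[
\OPT \ge \sum_{v \notin R^\star}(X_v - \alpha_v) + \frac{1}{\bglobal + 1}\sum_{v \in R^\star}(X_v - \alpha_v) \ge \frac{1}{2}\sum_{v \notin R^\star} X_v + \frac{1}{2(\bglobal + 1)}\sum_{v \in R^\star} X_v,
\]
where the last step applies Property~\eqref{enum:glo-b}. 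Multiplying by $2(\bglobal + 1)$ and using $\ALG \le \sum_v X_v$ completes the argument.

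The main obstacle I anticipate is the choice of $R^\star$. In Lemma~\ref{thm:robust-rho}, the dual penalty $\lambda \brobust$ is absorbed by picking $R''$ with $|R''| = \brobust$; here the penalty is $\lambda \bglobal$, but $\bglobal$ bounds $\budget_L(R)$ rather than a vertex count, so a raw cardinality-based choice will not fit. Taking $R^\star$ to be the snapshot of $R$ just before the last event is precisely what aligns the inequality $\lambda \cdot \budget_{L^\star}(R^\star) \ge \lambda(\bglobal + 1)$ with the factor $\localslack_{L^\star}(v)$ appearing in Property~\eqref{enum:glo-c}, allowing $\lambda \bglobal$ to be absorbed into $\sum_{v \in R^\star}(X_v - \alpha_v)$ with the desired constant.
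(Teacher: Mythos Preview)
Your proof is correct and follows essentially the same approach as the paper: the same snapshot $R^\star,L^\star$ before the last event, the same use of Properties~\eqref{enum:glo-a}--\eqref{enum:glo-d}, the same observation $\alpha_v=\lambda$ on $R^\star$, and the same integrality step $\budget_{L^\star}(R^\star)\ge\bglobal+1$. Your algebra is organized a bit differently---you aggregate Property~\eqref{enum:glo-c} into $\lambda(\bglobal+1)\le\sum_{v\in R^\star}(X_v-\alpha_v)$ first and then apply Property~\eqref{enum:glo-b} uniformly, whereas the paper applies Property~\eqref{enum:glo-c} per vertex and factors out $(1-1/\localslack_{L'}(v))(1-\bglobal/b')$---but this is a cosmetic reordering of the same ingredients, not a different route.
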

\begin{proof}
Observe that, if $\budget_L(R) \leq \bglobal$ from the very beginning of the algorithm, the algorithm immediately terminates and incurs no weight. Let us thus assume that $\budget_L(R) > \bglobal$ at the beginning.

Consider the last iteration of \textbf{while} loop of the algorithm. 
In this iteration, the algorithm removes some edges from $L$ (and possibly removes some vertices).
Consider the moment right before the removal of edges. Let $R'$ and $L'$, respectively, denote the value of $R$ and $L$ at this moment.
Let $b' := \budget_{L'}(R')$.
Note that $b' > \bglobal$, since otherwise the algorithm would have terminated earlier.
Moreover, at this moment--or at the termination--we have $\alpha_v = \lambda$, for every $v \in R'$.

We then bound $\OPT$ from below as follows:
\begin{align}
\OPT & \geq \sum_{e \in E}{\sum_{v \in e}{\beta_{e,v}}}-\sum_{v\in V}{\alpha_v}-\lambda \bglobal
    = \sum_{v \in V}\left( \sum_{e \in \delta(v)}\beta_{e,v} -\alpha_v \right) - \frac{\bglobal}{b'}\sum_{v\in R'}(\localslack_{L'}(v)-1)\alpha_v \nonumber \\
    & = \sum_{v \in V \setminus R'}\left( \sum_{e \in \delta(v)}\beta_{e,v} -\alpha_v \right) + \sum_{v \in R'}\left( \sum_{e \in \delta(v)}\beta_{e,v} - \left(1 + \frac{\bglobal}{b'}(\localslack_{L'}(v)-1)\right)\alpha_v \right) \nonumber \\
    & \ge \sum_{v \in V \setminus R'}\left( \frac{1}{2}\sum_{e \in \delta(v)}\beta_{e,v} \right) + 
    \sum_{v \in R'}\left( \left(1- \frac{1}{\localslack_{L'}(v)}\left(1 + \frac{\bglobal}{b'}(\localslack_{L'}(v)-1)\right)\right)\sum_{e \in \delta(v)}\beta_{e,v} \right) \nonumber \\
    & = \sum_{v \in V \setminus R'}\left( \frac{1}{2}\sum_{e \in \delta(v)}\beta_{e,v} \right) + 
    \sum_{v \in R'}\left( \left(1- \frac{1}{\localslack_{L'}(v)}\right)\left(1 - \frac{\bglobal}{b'}\right)\sum_{e \in \delta(v)}\beta_{e,v} \right) \nonumber \\
    & \ge \sum_{v \in V \setminus R'}\left( \frac{1}{2}\sum_{e \in \delta(v)}\beta_{e,v} \right) + 
    \sum_{v \in R'}\left( \frac{1}{2}\left(1 - \frac{\bglobal}{b'}\right)\sum_{e \in \delta(v)}\beta_{e,v} \right) \nonumber \\
    & \ge \frac{1}{2}\left(1 - \frac{\bglobal}{b'}\right)\sum_{v \in V}\sum_{e \in \delta(v)}\beta_{e,v} \label{eq:glo-lb}
\end{align}
where the first inequality is due to Property~\eqref{enum:rob-a}, the first equality comes from $\lambda=\alpha_v$ for every $v \in R'$, the second inequality is due to Property~\eqref{enum:rob-b} and \eqref{enum:rob-c}, and the second to last inequality comes from $\localslack_{L'}(v)\ge 2$ for every $v \in R'$.
Therefore, by Property~\eqref{enum:glo-d}, we have
\begin{align}
\ALG \leq \sum_{v \in V}\sum_{e \in \delta(v)} \beta_{e, v} \leq (2\bglobal + 2) \cdot \OPT, \nonumber
\end{align}
where the last inequality comes from $b'\ge \bglobal+1$, which implies $1 - \frac{\bglobal}{b'} \ge 1 - \frac{\bglobal}{\bglobal+1} = \frac{1}{\bglobal+1}$.
\end{proof}

\begin{lemma}\label{thm:robustglobal-rt}
Both Algorithm~\ref{alg:robustecc} and Algorithm~\ref{alg:globalecc} can be implemented to run in $O(|E|\sum_{v\in V}{d_v})$ time.
\end{lemma}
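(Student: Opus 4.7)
The plan is to analyze the discretized versions of the two algorithms (Algorithms~\ref{alg:robustecc-discrete} and its counterpart in Appendix~\ref{app:global-pseudocode}), bounding (i) the number of iterations of the main \textbf{while} loop and (ii) the cost per iteration, and then multiplying them. Since the two algorithms have essentially the same structure--only the termination condition differs--the same analysis will apply to both, so I will describe the argument for \robustecc and note at the end that the \globalecc case follows by an analogous accounting.

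For the iteration count, I would observe that the \textbf{while} loop cannot run more than $|L|\le |E|$ times at the start: in the discretized algorithm, $t^\star$ is chosen so that at least one edge $e\in L$ attains $\ell_e=w_e$ and is subsequently removed from $L$. Since $L$ is monotonically decreasing and the loop exits once $|R|\le\brobust$ (or the budget condition of \globalecc is met), the number of iterations is at most $|E|$.

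For the per-iteration cost, I would show each iteration can be performed in $O(\sum_{v\in V}d_v)$ time, as follows. First, initializing $\rate(e)\gets 0$ for $e\in L$ takes $O(|E|)$ time. Second, the nested loop computing $\rate(e)$ sums, for each $v\in R$ and each $c\in\colorset(\delta(v)\cap L)$, a contribution over $e\in\delta_c(v)\cap L$; its total work is $O\!\bigl(\sum_{v\in R}|\delta(v)\cap L|\bigr)=O\!\bigl(\sum_{v\in V}d_v\bigr)$, provided the quantities $|\colorset(\delta(v)\cap L)|$ and $|\delta_c(v)\cap L|$ are maintained incrementally (updated only when edges leave $L$). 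Third, computing $\inctime(e)=(w_e-\ell_e)/\rate(e)$, taking the minimum $t^\star$, and updating $\ell_e\gets\ell_e+t^\star\cdot\rate(e)$ each cost $O(|L|)\le O(|E|)$. Finally, for each edge $e$ that becomes tight in this iteration, we remove $e$ from $L$ and, for each $v\in e$, test whether $|\colorset(\delta(v)\cap L)|\le 1$; with the cached counts this test is $O(1)$ per $v$, so the bookkeeping takes $O(r)$ per newly-tight edge, which is absorbed into the $O(\sum_v d_v)$ bound since the total work of this bookkeeping across \emph{all} iterations is $O(\sum_v d_v)$ (each vertex-edge incidence is touched at most once by a tightening event). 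Multiplying the per-iteration bound $O(\sum_v d_v)$ by the iteration count $O(|E|)$ yields $O(|E|\sum_{v\in V}d_v)$.

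The main obstacle is bounding the rate-computation cost tightly: done naively, the innermost step recomputes $|\colorset(\delta(v)\cap L)|$ from scratch and would cost $O(d_v)$ per visit, which is fine for the stated bound but requires care. A subtler point is that for \globalecc the termination predicate $\sum_{v\in R}(|\colorset(\delta(v)\cap L)|-1)\le\bglobal$ must be checked each iteration; I would handle this by maintaining a running total that is updated only when an edge leaves $L$ (potentially decrementing $|\colorset(\delta(v)\cap L)|$ for its endpoints) or a vertex leaves $R$, so that the check itself is $O(1)$ per iteration and the overall amortized update cost is absorbed into the $O(|E|\sum_v d_v)$ bound already established for \robustecc.
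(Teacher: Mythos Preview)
Your proposal is correct and follows essentially the same approach as the paper: bound the number of \textbf{while}-loop iterations by $|E|$ (since at least one edge becomes tight and leaves $L$ per iteration) and show each iteration runs in $O(\sum_{v\in V} d_v)$ time by computing all the rates $\rate(e)$ via a single pass over vertex--edge incidences. Your write-up is in fact more detailed than the paper's (which is quite terse and leaves the iteration bound implicit), particularly in your discussion of maintaining cached color-degree counts and the $O(1)$ termination check for \globalecc, but these are elaborations of the same idea rather than a different route.
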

\begin{proof}
    It suffices to show that we can decide in $O(\sum_{v \in V} d_v)$ time which edge becomes tight, as well as the increment of the dual variables.
    By iterating each node $v$ and its incident edges $\delta(v)$, we can compute the increase rates of $\alpha_v$ and  $\{ \beta_{e, v} \}_{e \in \delta(v)}$.
    From this, we can obtain the increase rate of the ``level'' $\ell_e:=\sum_{u \in e}{\beta_{e,u}}$ for each $e \in L$. Let $\rate(e)$ denote this increase rate.
    As $e \in L$ will become tight in $\inctime(e):=\frac{w_e - \sum_{u \in e}{\beta_{e,u}}}{\rate(e)}$ time, we can determine the edge that will become tight the earliest. We can also compute the increment of the dual variables accordingly.
\end{proof}

\globalApprox*
\begin{proof}
    Immediate from Lemmas~\ref{thm:global-rho} and~\ref{thm:robustglobal-rt}.
\end{proof}
We note that our framework gives a true (non-bicriteria) approximation algorithm. This shows that the LP, which is equivalent to that of Crane et al.~\citep{crane2024overlapping}, has an integrality gap of $O(\bglobal)$. 

\subsection{Proof of Theorem~\ref{thm:global-ig}}\label{app:global-ig}
\globalIG*
\begin{proof}
    We construct an instance similar to the one used in the proof of Theorem~\ref{app:robust-ig}.
    Consider a hypergraph $H=(V=\{v_1,\cdots,v_{\bglobal+1}\},E=\{e_1, e_2\})$ where $e_1 = e_2 = V$, $w_{e_1}=w_{e_2}=1$, and $c_{e_1}\neq c_{e_2}$. Any integral solution incurs at least $1$ since at least one node is assigned one color and at least one edge cannot be satisfied. However, consider the solution given by $z_v=\frac{\bglobal}{\bglobal+1}$, $x_{v,c_{e_1}}=x_{v,c_{e_2}}=\frac{2\bglobal+1}{2(\bglobal+1)}$ for all $v\in V$ and $y_{e_1}=y_{e_2}=\frac{1}{2(\bglobal+1)}$. This solution is feasible and the cost is $\frac{1}{\bglobal+1}$.
\end{proof}

\subsection{Bicriteria algorithm for \globalecc}\label{app:global-bicriteria}
\begin{theorem}
    Suppose that $b\geq 1$. For any $\epsilon \in (0,2\bglobal]$, there exists a $(2+\epsilon,1+\frac{1}{\bglobal}\lceil \frac{2\bglobal}{\epsilon} \rceil -\frac{1}{\bglobal})$-approximation algorithm for \globalecc.
\end{theorem}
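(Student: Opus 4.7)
The plan is to mimic the bicriteria construction used for \robustecc in Appendix~\ref{app:robust-bicriteria}, now applied to Algorithm~\ref{alg:globalecc}. Set $\tau := \lceil \frac{2\bglobal}{\epsilon}\rceil - 1$, and consider the variant of Algorithm~\ref{alg:globalecc} in which the \textbf{while} loop condition is replaced by
\[
\textstyle \sum_{v \in R}(\localslack_L(v) - 1) > \bglobal + \tau.
\]
Since each node $v$ is assigned exactly $\max\{\localslack_L(v), 1\}$ colors at termination, the total number of extra color assignments is at most $\bglobal + \tau = \bglobal \bigl(1 + \frac{1}{\bglobal}\lceil \frac{2\bglobal}{\epsilon}\rceil - \frac{1}{\bglobal}\bigr)$, which gives the stated budget violation factor.

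For the cost guarantee, I would verify first that Properties~\eqref{enum:glo-a}--\eqref{enum:glo-d} of Lemma~\ref{lem:glo-key} continue to hold under the modified stopping rule, since none of their proofs depend on the specific threshold. Then I would re-run the chain of inequalities leading to~\eqref{eq:glo-lb} in the proof of Lemma~\ref{thm:global-rho}, with the only change being that $R'$ and $L'$ now refer to the values of $R$ and $L$ immediately before the edge-removal step of the final iteration of the new \textbf{while} loop. At that moment $b' := \budget_{L'}(R') > \bglobal + \tau$, and every $v \in R'$ still satisfies $\alpha_v = \lambda$, so the derivation reproducing~\eqref{eq:glo-lb} goes through verbatim and yields
\[
\OPT \;\ge\; \tfrac{1}{2}\bigl(1 - \tfrac{\bglobal}{b'}\bigr) \sum_{v \in V} \sum_{e \in \delta(v)} \beta_{e,v}.
\]

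Combined with Property~\eqref{enum:glo-d}, which bounds $\ALG$ by $\sum_{v \in V}\sum_{e \in \delta(v)}\beta_{e,v}$, this gives $\ALG \le \frac{2 b'}{b' - \bglobal}\,\OPT$. The function $b' \mapsto \frac{2b'}{b'-\bglobal}$ is decreasing in $b'$, and $b' \ge \bglobal + \tau + 1$ by integrality, so
\[
\frac{2b'}{b' - \bglobal} \;\le\; \frac{2(\bglobal + \tau + 1)}{\tau + 1} \;=\; 2 + \frac{2\bglobal}{\tau + 1} \;\le\; 2 + \epsilon,
\]
where the last inequality uses $\tau + 1 = \lceil 2\bglobal/\epsilon\rceil \ge 2\bglobal/\epsilon$.

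The main potential obstacle is the derivation reproducing~\eqref{eq:glo-lb}: that argument relied crucially on $\alpha_v = \lambda$ for every $v \in R'$ at the chosen freeze-moment, together with $\localslack_{L'}(v) \ge 2$, in order to split the sum and apply Property~\eqref{enum:glo-c} to the nodes in $R'$ and Property~\eqref{enum:glo-b} to those in $V \setminus R'$. I would check carefully that replacing the threshold $\bglobal$ by $\bglobal + \tau$ in the \textbf{while} loop does not disturb these two structural invariants --- it should not, since $R$ is still updated only after an edge becomes tight, and the dual update rule is unchanged --- which is what makes the proof of Lemma~\ref{thm:global-rho} extend essentially unchanged to the bicriteria setting.
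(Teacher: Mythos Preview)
Your proposal is correct and follows essentially the same approach as the paper's own proof: modify the \textbf{while}-loop threshold to $\bglobal+\tau$, observe that Lemma~\ref{lem:glo-key} is unaffected, reuse the derivation of~\eqref{eq:glo-lb} with $b' \ge \bglobal+\tau+1$, and conclude $\ALG \le (2+\frac{2\bglobal}{\tau+1})\,\OPT \le (2+\epsilon)\,\OPT$. Your explicit justification of $b' \ge \bglobal+\tau+1$ by integrality is a small clarification the paper leaves implicit.
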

\begin{proof}
    Let $\tau:=\lceil \frac{2\bglobal}{\epsilon} \rceil -1$. Consider the algorithm where the condition of the \textbf{while} loop in Algorithm~\ref{alg:globalecc} is replaced by $\sum_{v\in R}(|\colorset(\delta(v)\cap L)|-1) > \bglobal + \tau$. 
    It is clear that
    \[
    \sum_{v\in V}\max\{|\sigma(v)|-1,0\} \le \bglobal+\tau = \bglobal\cdot\left(1+\frac{1}{b}\left\lceil \frac{2\bglobal}{\epsilon} \right\rceil -\frac{1}{\bglobal}\right).
    \]
    Note also that the modified algorithm satisfies all the properties of Lemma~\ref{lem:glo-key}.
    
    We basically follow the proof of Lemma~\ref{thm:global-rho}. With the same definition of $R'$ and $L'$, we have $b':=\budget_{L'}(R') > \bglobal+\tau$. 
    Note that $1 - \frac{\bglobal}{b'} \ge 1 - \frac{\bglobal}{\bglobal+\tau+1} = \frac{\tau+1}{\bglobal+\tau+1}$.
    Together with equation~\eqref{eq:glo-lb},
    \begin{align*}
        \ALG \leq \sum_{v \in V}\sum_{e \in \delta(v)} \beta_{e, v} \leq (2+\frac{2\bglobal}{\tau+1}) \cdot \OPT \leq (2+\epsilon) \cdot \OPT,
    \end{align*}
    where the last inequality comes from $\tau \ge \frac{2\bglobal}{\epsilon}-1$.
\end{proof}

Recall that Crane et al.~\cite{crane2024overlapping} gave an LP-rounding bicriteria $(\bglobal+3+\epsilon,1+\frac{\bglobal+2}{\epsilon})$-approximation algorithm for \globalecc for $\epsilon >0$. Both approximation factor and violation factor of Crane et al.'s algorithm are linear in $\bglobal$. They raised an open question whether we can give a bicriteria approximation algorithm for \globalecc with both factor being constant (or give a hardness result). Observe $1+\frac{1}{b}\lceil \frac{2\bglobal}{\epsilon} \rceil -\frac{1}{\bglobal} < 1+\frac{2}{\epsilon}$. Our bicriteria algorithm satisfies the condition, answering the open question of Crane et al.~\cite{crane2024overlapping}.

\subsection{Discretized version of Algorithm~\ref{alg:globalecc}}\label{app:global-pseudocode}
Algorithm~\ref{alg:globalecc-discrete} is a discretized version of Algorithm~\ref{alg:globalecc}.
Note that the proof Lemma~\ref{thm:robustglobal-rt} is based on this discretized version.
\begin{algorithm}
\caption{Discretized primal-dual algorithm for \globalecc}\label{alg:globalecc-discrete}
\begin{algorithmic}
    \STATE $\ell_e \gets 0$ for all $e \in E$
    \STATE $L \gets \{e \in E \mid w_e > 0\}$
    \STATE $R \gets \{v \in V \mid |\colorset(\delta(v)\cap L)|\ge 2\}$
    \WHILE{$\sum_{v\in R}(|\colorset(\delta(v)\cap L)|-1) > \bglobal$}
        \STATE $\rate(e) \gets 0$ for all $e \in L$
        \FOR{$v \in R$}
            \FOR{$c \in \colorset(\delta(v)\cap L)$}
                \STATE $\rate(e) \gets \rate(e) + \frac{1}{|\delta_c(v)\cap L|}$ for all $e \in \delta_c(v)\cap L$
            \ENDFOR
        \ENDFOR
        \STATE $\inctime(e) \gets \frac{w_e-\ell_e}{\rate(e)}$ for all $e \in L$
        \STATE $t^\star \gets \min_{e\in L}{\inctime(e)}$
        \FOR{$e \in L$}
        \STATE $\ell_e \gets \ell_e + t^\star \cdot \rate(e)$
            \IF {$\ell_e = w_e$}
                \STATE $L \gets L \setminus \{e\}$
                \STATE remove all $v \in e$ from $R$ such that $|\colorset(\delta(v)\cap L)|\le 1$
            \ENDIF
        \ENDFOR
    \ENDWHILE
    \STATE $\sigma(v) \gets \colorset(\delta(v)\cap L)$ for all $v \in V$
    \end{algorithmic}
\end{algorithm}
\section{Dataset description}\label{app:data}
The benchmark data of Crane et al.~\cite{crane2024overlapping} contains six datasets. \cooking~\cite{whats-cooking,amburg2020clustering}, described in Section~\ref{sec:intro}, is a hypergraph whose nodes correspond to food ingredients, edges represent recipes, and edge colors indicate cuisines.
\brain~\cite{rubinov2010complex, crossley2013cognitive} contains the relation between brain regions: there are two types of relations, coactivation and connectivity, encoded by colors. The nodes corresponds to brain regions, and colored edges represent relations between them.
In \magten~\cite{sinha2015overview,amburg2020clustering}, each node corresponds to a researcher, and an edge indicates the author set of a published paper. Its color represents the publication venue (e.g., NeurIPS). 
\dawn~\cite{dawn2011,amburg2020clustering} is a dataset on the relation between drug use and emergency room (ER) visit disposition such as ``discharged'', ``surgery'', and ``transferred''. Each node corresponds to a drug, an edge corresponds to the combination of drugs taken by an ER patient, and colors represent the visit disposition.
In \walmart~\cite{walmart2015,amburg2020clustering}, each node represents a product, an edge indicates a set of products purchased together, and colors are ``trip type'' labels determined by Walmart.
Lastly in \trivago~\cite{knees2019recsys,chodrow2021hypergraph}, nodes are vacation rental properties and an edge represents the set of rental properties clicked during a single browsing session of a single user. Colors correspond to the countries where the browsing sessions happen.

\section{Tables and figures deferred from Section~\ref{sec:exp-robustglobal}}\label{app:exp-graphs}

\begin{table}[h!]
	\caption{Average running times of each dataset (in seconds): \robustecc. Values in parentheses are averages excluding trivial instances.}\label{tab:exp-robust-rt}
	\centering
	\begin{tabular}{cccc}
		\toprule
		& Proposed & ~~Greedy~~ & LP-rounding \\
		\midrule
		\brain & \phantom{00}1.345~~~\phantom{00}(1.345) & 0.005~~~(0.005) & \phantom{000}2.007~~~\phantom{000}(2.007) \\
		\magten & \phantom{0}11.056~~~\phantom{0}(15.303) & 0.666~~~(0.588) & \phantom{00}15.871~~~\phantom{00}(17.660) \\
		\cooking & \phantom{0}42.571~~~\phantom{0}(42.571) & 0.118~~~(0.118) & \phantom{0}220.107~~~\phantom{0}(220.107) \\
		\dawn & \phantom{0}39.905~~~\phantom{0}(39.905) & 0.048~~~(0.048) & \phantom{00}16.464~~~\phantom{00}(16.464)\\
		\walmart & 243.881~~~(243.881) & 1.995~~~(1.995) & 3766.539~~~(3766.539) \\
		\trivago & 195.337~~~(227.799) & 5.210~~~(5.144) & \phantom{0}705.323~~~\phantom{0}(709.378) \\
		\bottomrule
	\end{tabular}
\end{table}

\begin{figure}[h!]
	\centering
	\includegraphics[width=\columnwidth]{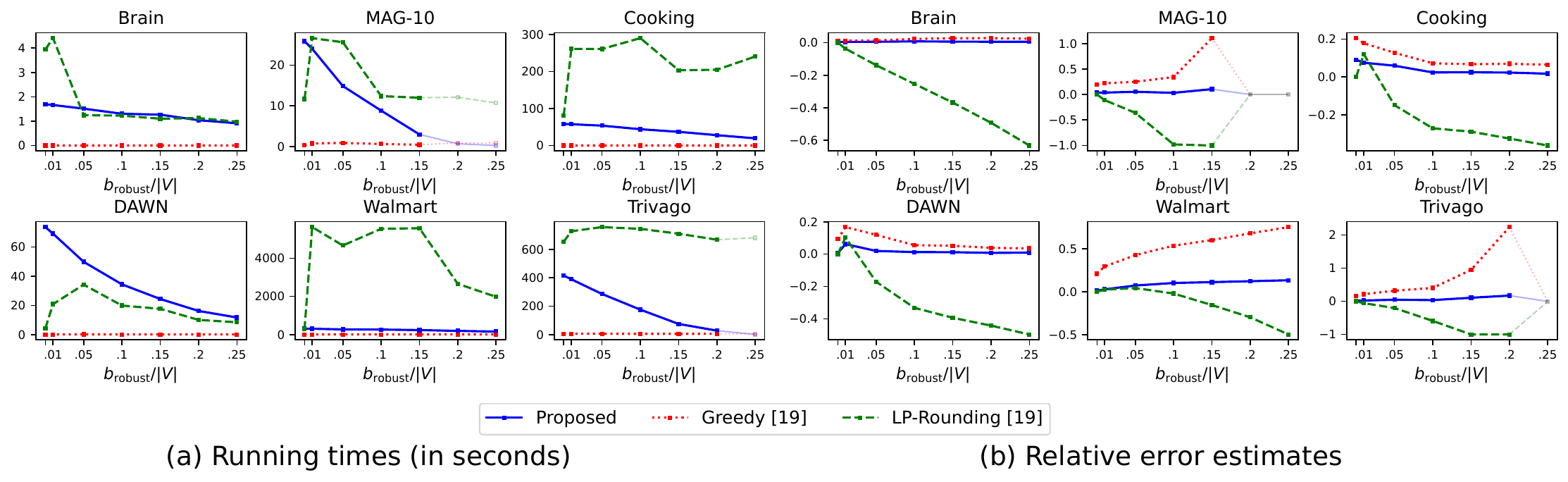}
	\caption{
		(a) Running times (in seconds) and (b) relative error estimates of the \robustecc algorithms. Empty square markers denote trivial instances.
	}\label{fig:exp-robust}
\end{figure}

\begin{table}[h!]
	\caption{Average running times of each dataset (in seconds): \globalecc. Values in parentheses are averages excluding trivial instances.}\label{tab:exp-global-rt}
	
	\centering
	\begin{tabular}{cccc}
		\toprule
		& Proposed & Greedy & LP-rounding \\
		\midrule
		\brain & \phantom{00}0.415~~~\phantom{00}(0.885) & 0.008~~~(0.012) & \phantom{00}1.849~~~\phantom{00}(3.381) \\
		\magten & \phantom{00}3.143~~~\phantom{0}(12.541) & 0.787~~~(0.625)  & \phantom{0}12.194~~~\phantom{0}(14.418) \\
		\cooking & \phantom{0}26.755~~~\phantom{0}(31.609)  & 0.171~~~(0.177) & \phantom{0}75.953~~~\phantom{0}(88.883) \\
		\dawn & \phantom{0}20.345~~~\phantom{0}(26.443) & 0.054~~~(0.052) & \phantom{00}7.944~~~\phantom{00}(9.263) \\
		\walmart & 117.046~~~(189.895) & 2.310~~~(2.247) & 511.011~~~(754.344) \\
		\trivago & \phantom{0}50.162~~~(107.449) & 7.234~~~(6.795) & 684.776~~~(662.595) \\
		\bottomrule
	\end{tabular}
\end{table}

\begin{figure}[h!]
	\centering\includegraphics[width=\columnwidth]{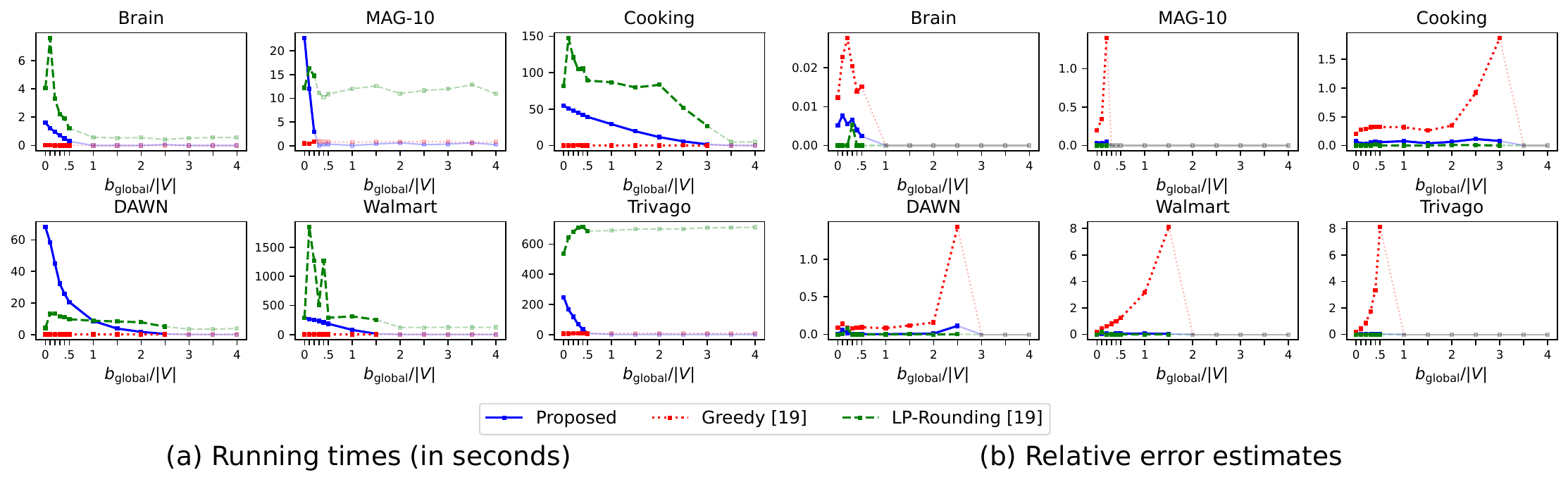}
	\caption{
		(a) Running times (in seconds) and (b) relative error estimates of the \globalecc algorithms. Empty square markers denote trivial instances.
	}\label{fig:exp-global}
\end{figure}

\end{document}